\definecolor{ForestGreen}{rgb}{0.1333,0.5451,0.1333}
\newcommand{\showccc}[0]{0}
\newcommand{\ccc}[2][nothing]{%
	\ifthenelse{\showccc=0}{}{
		\ensuremath{^{\Lsh\Rsh}}\marginpar{\raggedright\tiny\textsf{%
				\ifthenelse{\equal{#1}{nothing}}{}{\textbf{#1}\\}#2}}}}
\newcounter{hours}\newcounter{minutes}
\newcommand{\hhmm}{%
	\setcounter{hours}{\time/60}%
	\setcounter{minutes}{\time-\value{hours}*60}%
	\ifthenelse{\value{hours}<10}{0}{}\thehours:%
	\ifthenelse{\value{minutes}<10}{0}{}\theminutes}
\newtheorem{theorem}{Theorem}
\newtheorem{proposition}{Proposition}
\newtheorem{corollary}{Corollary}
\newtheorem{definition}{Definition}
\newtheorem{remark}{Remark}
\newtheorem{lemma}{Lemma}
\newtheorem{fact}{Fact}
\newcommand{\defeq}{\stackrel{\mathrm{{\scriptscriptstyle def}}}{=}}
\newcommand{\norm}[1]{\left\lVert#1\right\rVert}
\newcommand{\inprod}[2]{\left\langle#1, #2\right\rangle}
\newcommand{\argmin}{\textup{argmin}} 
\newcommand{\R}{\mathbb{R}}
\newcommand{\N}{\mathbb{N}}
\newcommand{\half}{\frac{1}{2}}
\newcommand{\thalf}{\tfrac{1}{2}}
\newcommand{\1}{\mathbf{1}}
\newcommand{\E}{\mathbb{E}}
\newcommand{\Var}{\textup{Var}}
\newcommand{\Nor}{\mathcal{N}}
\newcommand{\Oh}[1]{O\left(#1\right)}
\newcommand{\tOh}[1]{\tilde{O}\left(#1\right)}
\newcommand{\eps}{\epsilon}
\newcommand{\lam}{\lambda}
\newcommand{\xset}{\mathcal{X}}
\newcommand{\tvd}[2]{\norm{#1 - #2}_{\textup{TV}}}
\newcommand{\tx}{\tilde{x}}
\newcommand{\ty}{\tilde{y}}
\newcommand{\tran}{\mathcal{T}}
\newcommand{\prop}{\mathcal{P}}
\newcommand{\oracle}{\mathcal{O}}
\newcommand{\id}{\mathbf{I}}
\newcommand{\pih}{\hat{\pi}}
\newcommand{\pistart}{\pi^{\text{start}}}
\newcommand{\by}{\bar{y}}
\newcommand{\alg}{\mathcal{A}}
\newcommand{\hp}{\hat{p}}
\newcommand{\bya}{\bar{y}_\alpha}
\newcommand{\sya}{y^*_\alpha}
\newcommand{\bal}{\bar{\alpha}}
\newcommand{\hal}{\hat{\alpha}}
\newcommand{\zstart}{Z_{\textup{start}}}
\newcommand{\jac}{\mathbf{J}}
\newcommand{\csg}{\texttt{Composite-Sample}}
\newcommand{\cssm}{\texttt{Composite-Sample-Shared-Min}}
\newcommand{\sjd}{\texttt{Sample-Joint-Dist}}
\newcommand{\yor}{\texttt{Sample-Y}}
\newcommand{\covar}{\boldsymbol{\Sigma}}
\definecolor{burntorange}{rgb}{0.8, 0.33, 0.0}
\definecolor{pink}{cmyk}{0, 0.7808, 0.4429, 0.1412}
\begin{document}

	\begin{titlepage}
		\def\thepage{}
		\thispagestyle{empty}
		
		\title{Composite Logconcave Sampling with a Restricted Gaussian Oracle} 
		
		\date{}
		\author{
			Ruoqi Shen\thanks{These authors contributed equally.}\\
			University of Washington \\
			{\tt shenr3@cs.washington.edu} 
			\and
			Kevin Tian\footnotemark[1] \\
			Stanford University \\
			{\tt kjtian@stanford.edu}
			\and
			Yin Tat Lee \\
			University of Washington and Microsoft Research \\
			{\tt yintat@uw.edu}
		}
		
		\maketitle

\abstract{
We consider sampling from composite densities on $\R^d$ of the form $d\pi(x) \propto \exp(-f(x) - g(x))dx$ for well-conditioned $f$ and convex (but possibly non-smooth) $g$, a family generalizing restrictions to a convex set, through the abstraction of a \emph{restricted Gaussian oracle}. For $f$ with condition number $\kappa$, our algorithm runs in $\Oh{\kappa^2 d \log^2\tfrac{\kappa d}{\eps}}$ iterations, each querying a gradient of $f$ and a restricted Gaussian oracle, to achieve total variation distance $\eps$. The restricted Gaussian oracle, which draws samples from a distribution whose negative log-likelihood sums a quadratic and $g$, has been previously studied \cite{CousinsV18, MouFWB19} and is a natural extension of the proximal oracle used in composite optimization. Our algorithm is conceptually simple and obtains stronger provable guarantees and greater generality than existing methods for composite sampling. We conduct experiments showing our algorithm vastly improves upon the hit-and-run algorithm for sampling the restriction of a (non-diagonal) Gaussian to the positive orthant.
}
 		
	\end{titlepage}

\section{Introduction}
\label{sec:intro}

We study the problem of approximately sampling from a distribution $\pi$ on $\R^d$, with density 
\begin{equation}\label{eq:compositesampling}\frac{d\pi(x)}{dx} \propto \exp\left(-f(x) - g(x)\right).\end{equation}
Here, $f: \R^d \rightarrow \R$ is assumed to be ``well-behaved'' (i.e.\ has finite condition number), and $g: \R^d \rightarrow \R$ is a convex, but possibly non-smooth function. This problem generalizes the special case of sampling from $\exp(-f(x))$ for well-behaved $f$, simply by setting $g$ to be uniformly zero. The existing (and extensive) literature on logconcave sampling, a natural problem family with roots in Bayesian statistics, machine learning, and theoretical computer science, typically focuses on the case when the log-density is well-behaved, and the distribution has support $\R^d$. Indeed, even the specialization of \eqref{eq:compositesampling} where $g$ indicates a convex set is not well-understood; existing bounds on mixing time for this restricted setting are large polynomials in $d$ \cite{BrosseDMP17, BubeckEL18}, and typically weaker than guarantees in the general logconcave setting \cite{LovaszV06a, LovaszV06b}, where no assumptions are made at all other than convexity of $f + g$, and only access to a zeroth order oracle is assumed\footnote{Throughout, we refer to a first order oracle for function $f$ as returning on query $x \in \R^d$, the pair $(f(x), \nabla f(x))$, whereas a zeroth order oracle only returns $f(x)$. Typical methods developed for sampling in the well-conditioned log-density regime are based on interacting with first order oracles.}. 

Sampling from logconcave distributions and optimization of convex functions have a close relationship, which has been extensively studied \cite{BertsimasV04, LovaszV06b}. However, the toolkit for first-order convex optimization has to date been much more flexible in terms of the types of problems it is able to handle, beyond optimizing well-conditioned functions. Examples of problem families which efficient first-order methods for convex optimization readily generalize to solving are
\[\min_{x \in \xset} f(x),\text{ where } \xset \subseteq \R^d \text{ is a convex set,}\]
as well as its generalization
\begin{equation}\label{eq:compositeopt}\min_{x \in \R^d} f(x) + g(x),\text{ where } g: \R^d \rightarrow \R \text{ is convex and admits a proximal oracle.}\end{equation}
The seminal work \cite{BeckT09} extends accelerated gradient methods to solve \eqref{eq:compositeopt} via proximal oracles, and has prompted many follow-up studies. Existence of an efficient proximal oracle is a natural measure of ``simplicity'' of $g$ in the context of composite optimization, which we now define.

\begin{definition}[Proximal oracle]
\label{def:proximaloracle}
$\oracle(\lam, v)$ is a \emph{proximal oracle} for convex $g: \R^d \rightarrow \R$ if it returns
\[\oracle(\lambda, v) \gets \argmin_{x \in \R^d}\left\{\frac{1}{2\lambda}\norm{x - v}_2^2 + g(x)\right\}.\]
\end{definition}
In other words, a proximal oracle minimizes functions which sum a quadratic and $g$. It is clear that the proximal oracle definition implies they can also handle arbitrary sums of linear functions and quadratics, as the resulting function can be rewritten as the sum of a constant and a single quadratic. Definition~\ref{def:proximaloracle} is desirable as many natural non-smooth composite objectives arising in learning settings, such as the Lasso \cite{Tibshirani96} and elastic net \cite{ZouH05}, admit efficient proximal oracles.

\subsection{Our contribution}
\label{ssec:contribution}

Motivated by the success of the proximal oracle framework, we study sampling from the family \eqref{eq:compositesampling} through the natural extension of Definition~\ref{def:proximaloracle}, which we term a ``restricted Gaussian oracle''. Informally, the oracle samples from a Gaussian (with covariance a multiple of $\id$) restricted by $g$.

\begin{definition}[Restricted Gaussian oracle]
\label{def:rgoracle}
$\oracle(\lam, v)$ is a \emph{restricted Gaussian oracle} for convex $g: \R^d \rightarrow \R$ if it returns
\[\oracle(\lambda, v) \gets \textup{sample from the distribution with density} \propto \exp\left(-\frac{1}{2\lambda}\norm{x - v}_2^2 - g(x)\right).\]
\end{definition}

The notion of a restricted Gaussian oracle has appeared previously \cite{CousinsV18, MouFWB19}, and its efficient implementation was a key subroutine in the fastest (zeroth-order) sampling algorithm for general logconcave distributions \cite{CousinsV18}. It was shown in \cite{MouFWB19} that a variety of composite distributions arising in practical applications, including coordinate-separable $g$, and $\ell_1$ or group Lasso regularized densities, admit such oracles. Our main result is an algorithm efficiently sampling from \eqref{eq:compositesampling}, assuming access to a restricted Gaussian oracle for $g$ and the minimizer $x^*$ of $f + g$.\footnote{This assumption is not restrictive, as efficient algorithms minimize $f + g$ in $\tOh{\sqrt{\kappa}}$ gradient queries to $f$ and proximal oracle queries to $g$ \cite{BeckT09}. Proximal oracle access is typically a weaker assumption than restricted Gaussian oracle access. We discuss effects of inexactness in this minimization procedure in Appendix~\ref{app:approximate}.}

\begin{theorem}
\label{thm:mainclaim}
Consider a distribution of the form \eqref{eq:compositesampling}, where $f$ has a condition number $\kappa$, and convex $g$ admits a restricted Gaussian oracle $\oracle$. Also, assume we know the minimizer $x^*$ of $f + g$. Algorithm~\ref{alg:csg}, $\csg$, samples from $\pi$ within total variation distance $\eps \in [0, 1]$, in $O(\kappa^2 d \log^2\tfrac{\kappa d}{\eps})$ iterations. Each iteration queries $\nabla f$ and $\oracle$ an expected constant number of times.
\end{theorem}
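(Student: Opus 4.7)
My plan is to realize $\csg$ as alternating Gibbs sampling on an augmented joint that decouples $f$ from $g$. For a step size $\eta > 0$ to be chosen, I would consider
\[ \tpi(x, y) \propto \exp\left(-f(x) - g(x) - \tfrac{1}{2\eta}\norm{x - y}^2\right), \]
whose $x$-marginal is $\pi$ (the $y$-integral contributes only a constant Gaussian normalizer), so sampling from $\tpi$ and discarding $y$ suffices. The $y \mid x$ conditional is $\Nor(x, \eta\id)$, which is free; the $x \mid y$ conditional, proportional to $\exp(-f(x) - g(x) - \tfrac{1}{2\eta}\norm{x-y}^2)$, is where both $\nabla f$ and $\oracle$ must be spent.

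\textbf{Inner step via rejection sampling with an RGO proposal.} For the $x \mid y$ update I would use rejection sampling. Replacing $\exp(-f(x))$ by the convexity-dominating linear approximation $\exp(-f(y) - \inprod{\nabla f(y)}{x-y})$ and completing the square produces the proposal
\[ q(x) \propto \exp\left(-g(x) - \tfrac{1}{2\eta}\norm{x - v}^2\right), \qquad v := y - \eta\nabla f(y), \]
which is exactly one call to $\oracle(\eta, v)$. A direct check shows that accepting with probability $\alpha(x) = \exp(-[f(x) - f(y) - \inprod{\nabla f(y)}{x-y}])$ turns $q$ into the true conditional $\tpi(\cdot \mid y)$, and $\alpha \in (0,1]$ by convexity of $f$, so the rejection sampler is well-defined and uses exactly one gradient query per trial.

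\textbf{Balancing the choice of $\eta$.} Two quantities must be controlled. For the inner step, $L$-smoothness of $f$ bounds the Bregman term in $\alpha$ by $\tfrac{L}{2}\norm{x-y}^2$, so the expected acceptance rate is at least $\E_q[\exp(-\tfrac{L}{2}\norm{x-y}^2)]$; subgaussian concentration of $\norm{x-v}$ under $q$ (from the $1/\eta$-strong log-concavity of the proposal), together with a high-probability bound on $\norm{\nabla f(y)}$ at $\tpi$-typical $y$ via strong convexity of $f$, forces $\eta$ small enough for an $\Omega(1)$ acceptance rate, giving $O(1)$ expected queries per outer iteration. For the outer step, strong log-concavity of $\pi$ with parameter $\mu$ makes the alternating Gibbs chain contract in $\chi^2$-divergence at rate $1 - \Omega(\mu\eta)$ per step, so $T = O(\log(\chi_0^2/\eps^2)/(\mu\eta))$ iterations suffice for $\eps$-TV by the $\chi^2$-to-TV conversion. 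Initializing from a warm start around the known minimizer $x^*$ keeps $\chi_0^2$ polynomial in $\kappa, d, 1/\eps$, contributing only a logarithmic factor. Tracking both constraints yields the stated $T = \Oh{\kappa^2 d \log^2(\kappa d / \eps)}$.

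\textbf{Main obstacle.} The hardest part is the coupling between the inner and outer analyses: $\alpha(x)$ degrades when $\norm{\nabla f(y)}$ is atypically large, so the inner acceptance bound is not uniform in $y$. I plan to handle this by defining a high-probability set under $\tpi$ on which $\norm{y - x^*}^2 = \tOh{d/\mu}$ and hence $\norm{\nabla f(y)}$ is tame, arguing via a union bound over the $T$ iterations that the chain stays in this set throughout, and performing the acceptance and contraction analyses conditionally; the small failure probability contributes only additively to the TV error. This is the technical reason warm-starting near $x^*$, which the theorem permits via knowledge of the minimizer, is essential to the bound.
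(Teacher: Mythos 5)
Your route is genuinely different from the paper's: you keep $f+g$ together on the $x$-variable and augment with a pure Gaussian auxiliary $y$ (so the $x$-marginal is exactly $\pi$), implementing the $x\mid y$ step by rejection against an $\oracle(\eta, y-\eta\nabla f(y))$ proposal with acceptance $\exp(-[f(x)-f(y)-\inprod{\nabla f(y)}{x-y}])$; this inner step is correct as stated. The paper instead splits the potential across the two blocks ($f$ on $y$, $g$ on $x$, plus a regularizer $\tfrac{\eta L^2}{2}\norm{x-x^*}_2^2$), so its $x$-marginal only approximates $\pi$ and must be corrected by an outer rejection stage, while each conditional touches only one of $f,g$; mixing is then proved via restricted conductance (isoperimetry plus overlap of transition kernels of nearby points). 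Your decomposition is cleaner in principle, but as written the proof has two genuine gaps.

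First, the claim that the alternating chain contracts in $\chi^2$ at rate $1-\Omega(\mu\eta)$ per step is asserted, not proved, and it is precisely the hard part of the argument under your decomposition: nothing in this paper (or in standard references available to it) gives that contraction for the Gibbs chain on $\exp(-f(x)-g(x)-\tfrac{1}{2\eta}\norm{x-y}_2^2)$, and the paper's substitute — isoperimetry of $\pih$ restricted to $\Omega_\delta$, the structural Proposition~\ref{prop:min_perturb}, and the transition-overlap Lemma~\ref{lem:tv_closepts} — constitutes most of its technical content. Second, your warm-start accounting is wrong: no implementable start around $x^*$ has $\chi^2_0$ polynomial in $\kappa,d,1/\eps$; the natural starts (a Gaussian at $x^*$, or the RGO-based $\pistart$ the paper uses) have warmness of order $(1+\kappa)^{d/2}$, so $\log\chi_0^2=\Theta(d\log\kappa)$. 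Plugged into your $T=O(\log(\chi_0^2/\eps^2)/(\mu\eta))$ with your step-size constraints ($\eta\lesssim \min(\tfrac{1}{Ld},\tfrac{1}{L\sqrt{\kappa d}})$ for $\Omega(1)$ acceptance), this yields bounds of the form $\tOh{\kappa d^2}$-type, which exceed the stated $O(\kappa^2 d\log^2\tfrac{\kappa d}{\eps})$ whenever $d\gg\kappa$; the paper escapes this exactly because the average-conductance framework (Proposition~\ref{prop:mixviaconduct}) depends on the warmness $\beta$ only through $\log\log\beta$. To repair your argument you would need either such a conductance-style analysis for your chain, or a boosting/R\'enyi argument replacing the plain $\chi^2$ recursion; relatedly, your plan to condition on a high-probability set for the acceptance-rate bound must be reconciled with whichever mixing argument you use, since conditioning breaks the exact stationarity the contraction claim relies on — the paper handles the analogous issue via monotonicity of warmness and accumulating the $\yor$ total-variation error explicitly in Proposition~\ref{prop:sjdguarantee}.
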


Recent work \cite{MouFWB19} also considered the problem of composite sampling via a restricted Gaussian oracle. However, their work also assumed access to the normalization constant of the restricted Gaussian, as well as Lipschitzness of $g$, amongst other criteria. Our result, Theorem~\ref{thm:mainclaim}, holds with no additional assumptions other than the relevant oracle access, including in the absence of a warm start. While there remains a gap between our runtime\footnote{Throughout, the $\tilde{O}$ notation hides logarithmic factors in $\kappa$, $d$, and $\eps^{-1}$.} of $\tOh{\kappa^2 d}$ and recent runtimes of $\tOh{\kappa d}$ in the non-composite setting \cite{LeeST20}, our algorithm substantially improves upon prior composite sampling work in both generality and runtime guarantees. We believe this provides evidence that the restricted Gaussian oracle is a useful abstraction in studying logconcave sampling with composite potentials. 

Finally, we remark that although our method follows several reductions, each is conceptually lightweight (as discussed in the following section) and easily implementable via either a rejection sampling procedure or oracle calls. To demonstrate this empirically, we evaluate our method for the task of sampling a (non-diagonal) Gaussian restricted to the positive orthant in Section~\ref{sec:experiments}. 

\subsection{Technical overview}
\label{ssec:techoverview}

We now survey the main components in the development of our algorithm. 

\textbf{Reduction to the shared minimizer case.} We first observe that we can without loss of generality assume that $f$ and $g$ share a minimizer. In particular, by shifting both functions by a linear term, i.e.\ $\tilde{f}(x) \defeq f(x) - \inprod{\nabla f(x^*)}{x}$, $\tilde{g}(x) \defeq g(x) + \inprod{\nabla f(x^*)}{x}$, where $x^*$ is the minimizer of $f + g$, first-order optimality implies both $\tilde{f}$ and $\tilde{g}$ are minimized by $x^*$. Moreover, implementation of a first-order oracle for $\tilde{f}$ and a restricted Gaussian oracle for $\tilde{g}$ are immediate without additional assumptions. This modification becomes crucial for our later developments, and we expect this simple observation, reminiscent of ``variance reduction'' techniques in stochastic optimization \cite{Johnson013}, to be broadly applicable to improving algorithms for the sampling problem induced by \eqref{eq:compositesampling}.

\textbf{Beyond Moreau envelopes: expanding the space.} A typical approach in convex optimization in handling non-smooth objectives $g$ is to instead optimize its \emph{Moreau envelope}, defined by
\begin{equation}\label{eq:moreauenvelope}g^\eta(y) \defeq \min_{x \in \R^d} \left\{g(x) + \frac{1}{2\eta}\norm{x - y}_2^2\right\}.\end{equation}
Intuitively, the envelope $g^\eta$ trades off function value with proximity to $y$; a standard exercise shows that $g^\eta$ is smooth (has a Lipschitz gradient), with smoothness depending on $\eta$, and moreover that computing gradients of $g^\eta$ is equivalent to calling a proximal oracle (Definition~\ref{def:proximaloracle}). It is natural to extend this idea to the composite sampling setting, e.g.\ via sampling from the density
\[\exp\left(-f(x) - g^\eta(x)\right).\]
However, a variety of complications prevent such strategies from obtaining rates comparable to their noncomposite, well-conditioned counterparts, including difficulty in bounding closeness of the resulting distribution, as well as bias in drift of the sampling process due to error in gradients.

Our approach departs from this smoothing strategy in a crucial way, inspired by Hamiltonian Monte Carlo (HMC) methods \cite{Kramers40, Neal11}. Hamiltonian Monte Carlo can be seen as a discretization of the ubiquitous Langevin dynamics, on an expanded space. In particular, discretizations of Langevin dynamics simulate the stochastic differential equation $\tfrac{dx_t}{dt} = -\nabla f(x_t) + \sqrt{2}\tfrac{dW_t}{dt}$, where $W_t$ is Brownian motion. HMC methods instead simulate dynamics on an extended space $\R^d \times \R^d$, via an auxiliary ``velocity'' variable which accumulates gradient information. This is sometimes interpreted as a discretization of the underdamped Langevin dynamics \cite{ChengCBJ18}. HMC often has desirable stability properties, and the strategy of expanding the dimension via an auxiliary variable has been used in algorithms obtaining the fastest rates in the well-conditioned logconcave sampling regime \cite{ShenL19, LeeST20}. Inspired by this phenomenon, we consider the density on $\R^d \times \R^d$
\begin{equation}\label{eq:expandspace}\frac{d\pih}{dz}(z) \defeq \exp\left(-f(y) - g(x) - \frac{1}{2\eta}\norm{x - y}_2^2\right) \text{ where } z = (x, y).\end{equation}
Due to technical reasons, the family of distributions we use in our final algorithms are of slightly different form than \eqref{eq:expandspace}, but this simplification is useful to build intuition. Note in particular that the form of \eqref{eq:expandspace} is directly inspired by \eqref{eq:moreauenvelope}, where rather than maximizing over $x$, we directly expand the space. The idea is that for small enough $\eta$ and a set on $x$ of large measure, smoothness of $f$ will guarantee that the marginal of \eqref{eq:expandspace} on $x$ will concentrate $y$ near $x$, a fact we make rigorous. To sample from \eqref{eq:compositesampling}, we then show that a rejection filter applied to a sample $x$ from the marginal of \eqref{eq:expandspace} will terminate in constant steps. Consequently, it suffices to develop a fast sampler for \eqref{eq:expandspace}.

\textbf{Alternating sampling with an oracle.} The form of the distribution \eqref{eq:expandspace} suggests a natural strategy for sampling from it: starting from a current state $(x_k, y_k)$, we iterate
\begin{enumerate}
	\item Sample $y_{k + 1} \sim \exp\left(-f(y) - \tfrac{1}{2\eta}\norm{x_k - y}_2^2\right)$.
	\item Sample $x_{k + 1} \sim \exp\left(-g(x) - \frac{1}{2\eta}\norm{x - y_{k + 1}}_2^2\right)$, via a restricted Gaussian oracle.
\end{enumerate}
When $f$ and $g$ share a minimizer, taking a first-order approximation in the first step, i.e.\ sampling $y_{k + 1} \sim \exp(-f(x_k) - \inprod{\nabla f(x_k)}{y - x_k} - \tfrac{1}{2\eta}\norm{y - x_k}_2^2)$, can be shown to be a generalization of the $\texttt{Leapfrog}$ step of Hamiltonian Monte Carlo updates. However, for $\eta$ very small (as in our setting), we observe that the first step itself reduces to the case of sampling from a distribution with constant condition number, which can be performed in $\tilde{O}(d)$ gradient calls by e.g.\ Metropolized HMC \cite{DwivediCW018, ChenDWY19, LeeST20}. Moreover, it is not hard to see that this ``alternating marginal'' sampling strategy preserves the stationary distribution exactly, so no filtering is necessary. Directly bounding the conductance of this random walk, for small enough $\eta$, leads to an algorithm running in $\tOh{\kappa^2 d^2}$ iterations, each calling a restricted Gaussian oracle once, and a gradient oracle for $f$ roughly $\tOh{d}$ times. This latter guarantee is by an appeal to known bounds \cite{ChenDWY19, LeeST20} on the mixing time in high dimensions of Metropolized HMC for a well-conditioned distribution, a property satisfied by the $y$-marginal of \eqref{eq:expandspace} for small $\eta$. 

\textbf{Stability of Gaussians under bounded perturbations.} To obtain our tightest runtime result, we use that $\eta$ is chosen to be much smaller than $L^{-1}$ to show structural results about distributions of the form \eqref{eq:expandspace}, yielding tighter concentration for bounded perturbations of a Gaussian (i.e.\ the Gaussian has covariance $\tfrac{1}{\eta}\id$, and is restricted by $L$-smooth $f$ for $\eta \ll L^{-1}$). To illustrate, let
\[\frac{d\prop_x(y)}{dy} \propto \exp\left(-f(y) - \frac{1}{2\eta}\norm{y - x}_2^2\right)\]
and let its mean and mode be $\by_x$, $y^*_x$. It is standard that $\norm{\by_x - y^*_x}_2 \le \sqrt{d\eta}$, by $\eta^{-1}$-strong logconcavity of $\prop_x$. Informally, we show that for $\eta \ll L^{-1}$ and $x$ not too far from the minimizer of $f$, we can improve this to $\norm{\by_x - y^*_x}_2 = O(\sqrt{\eta})$; see Proposition~\ref{prop:min_perturb} for a precise statement. 

Using our structural results, we sharpen conductance bounds, improve the warmness of a starting distribution, and develop a simple rejection sampling scheme for sampling the $y$ variable in expected constant gradient queries. These improvements lead to our main result, an algorithm running in $\tOh{\kappa^2d}$ iterations. Our proofs are continuous in flavor and based on gradually perturbing the Gaussian and solving a differential inequality; we believe they may of independent interest.

\subsection{Related work}

The broad problem of sampling from a logconcave distribution (with no assumptions beyond convexity on the log-density) has attracted much interest in the theoretical computer science community, as it generalizes uniform sampling from a convex set. General bounds under zeroth-order query access imply logconcave distributions are samplable in polynomial time ($\tOh{d^4}$ in the absence of a warm start \cite{LovaszV06b}). For more densities with more favorable structure, however, the first-order access model is attractive to exploit said structure.

Since seminal work of \cite{Dalalyan17}, an exciting research direction has studied first-order random walks for distributions with well-behaved log-densities, developing guarantees under assumptions such as Lipschitz derivatives of different orders \cite{ChengCBJ18, DalalyanR18, ChenV19, ChenDWY19, DwivediCW018, DurmusM19, DurmusMM19, LeeSV18, MouMWBJ19, ShenL19, LeeST20}. To our knowledge, when the log-density $f$ has a condition number of $\kappa$ (with no other assumptions), to obtain $\eps$ total variation distance the best-known guarantee is $\tOh{\kappa d}$ calls to $\nabla f$ \cite{LeeST20}, and to obtain $\eps D$ $2$-Wasserstein distance\footnote{$D = \sqrt{d/\mu}$ is the scale-invariant \emph{effective diameter} of a $\mu$-strongly logconcave distribution.} the best-known is $\tOh{\kappa^{7/6} \eps^{-1/3} + \kappa \eps^{-2/3}}$ oracle calls \cite{ShenL19}. These results do not typically generalize beyond when the support of $f$ is $\R^d$, prompting study of a more flexible distribution family.

Towards this goal, recent works studied sampling from densities of the form \eqref{eq:compositesampling}, or its specializations (e.g.\ restrictions to a convex set). Several \cite{Pereyra16, BrosseDMP17, Bernton18} are based on Moreau envelope or proximal regularization strategies, and demonstrate efficiency under more stringent assumptions on the structure of the composite function $g$, but under minimal assumptions obtain fairly large provable mixing times $\Omega(d^5)$. Algorithms derived from proximal regularization have also been considered for non-composite sampling \cite{Wibisono19}. Another discretization strategy based on projections was studied by \cite{BubeckEL18}, but obtained mixing time $\Omega(d^7)$. Finally, improved algorithms for special constrained sampling problems have been proposed, such as simplex restrictions \cite{HsiehKRC18}.

Of particular relevance and inspiration to this work is the algorithm of \cite{MouFWB19}. By generalizing and adapting Metropolized HMC algorithms of \cite{DwivediCW018, ChenDWY19}, adopting a Moreau envelope strategy, and using (a stronger version of) the restricted Gaussian oracle access model, \cite{MouFWB19} obtained a runtime which in the best case scales as $\tOh{\kappa^2 d}$, similar to our guarantee. However, this result required a variety of additional assumptions, such as access to the normalization factor of restricted Gaussians, Lipschitzness of $g$, warmness of the start, and various problem parameter tradeoffs. The general problem of sampling from \eqref{eq:compositesampling} under minimal assumptions more efficiently than general-purpose logconcave algorithms is to the best of our knowledge unresolved (even under restricted Gaussian oracle access), a novel contribution of our method and mixing time bound.

\subsection{Roadmap}

Section~\ref{sec:algorithm} states our algorithm and subroutines, and provides a proof of Theorem~\ref{thm:mainclaim} assuming various properties of our process. We demonstrate empirical performance of our method in Section~\ref{sec:experiments}. We defer proofs of technical ingredients to appendices, but give strategy overviews in the body. 	%

\section{Preliminaries}
\label{sec:prelims}

\paragraph{General notation.} For $d \in \N$, $[d]$ denotes the set of naturals $1 \le i \le d$. We use the Loewner order $\preceq$ on symmetric matrices, $\id$ to denote the identity matrix of appropriate dimension, and $\norm{\cdot}_2$ to mean the Euclidean norm. $\Nor(\mu, \boldsymbol\Sigma)$ is the Gaussian density with specified mean and covariance.

\paragraph{Functions.}  We call differentiable $f: \R^d \rightarrow \R$ $L$-smooth if it has a Lipschitz gradient, i.e. $\norm{\nabla f(x) - \nabla f(y)}_2 \le L\norm{x - y}_2$ for all $x, y \in \R^d$. If $f$ is twice-differentiable, it is well-known this implies for all $x \in \R^d$, $\nabla^2 f(x) \preceq L\id$. We say twice-differentiable $f$ is strongly convex if $\mu \id \preceq \nabla^2 f(x)$ everywhere. When a function is $L$-smooth and $\mu$-strongly convex, we define its condition number $\kappa \defeq \tfrac{L}{\mu}$. Strong convexity and smoothness respectively imply for all $x, y \in \R^d$,
\[f(x) + \inprod{\nabla f(x)}{y - x} + \frac{\mu}{2}\norm{y - x}_2^2 \le f(y) \le f(x) + \inprod{\nabla f(x)}{y - x} + \frac{L}{2}\norm{y - x}_2^2.\]

\paragraph{Distributions.} We say distribution $\pi$ is logconcave if $\tfrac{d\pi}{dx}(x) = \exp(-f(x))$, for some convex function $f$; it is $\mu$-strongly logconcave if its negative log-density is $\mu$-strongly convex. It is known that $\mu$-strong logconcavity implies $\mu$-sub-Gaussian tails (e.g. \cite{DwivediCW018}, Lemma 1). For $A \subseteq \R^d$, $\pi(A) \defeq \int_{x \in A} d\pi(x)$; we denote the complement $\R^d \setminus A$ by $A^c$. We say distribution $\rho$ is $\beta$-warm with respect to $\pi$ if $\tfrac{d\rho(x)}{d\pi(x)} \le \beta$ everywhere. The total variation distance between two distributions $\pi$ and $\rho$ is $\tvd{\pi}{\rho} \defeq \sup_{A \subseteq \R^d} |\pi(A) - \rho(A)|$. Finally, for a density $\pi$ on $\R^d$ and function $h: \R^d \rightarrow \R$, 
\[\E_\pi[h(x)] \defeq \int h(x) d\pi(x),\; \Var_\pi[h(x)] \defeq \E_\pi\left[(h(x))^2\right] - \left(\E_\pi[h(x)]\right)^2.\]

\section{Algorithm}
\label{sec:algorithm}

In this section, we state the components of our method. Throughout, fix distribution $\pi$ with density 
\begin{equation}\label{eq:pidef}\begin{aligned}\frac{d\pi}{dx}(x) \propto \exp\left(-f(x) - g(x)\right),\text{where }f:\R^d \rightarrow \R\text{ is }L\text{-smooth, } \mu\text{-strongly convex,} \\
\text{and }g: \R^d \rightarrow \R \text{ admits a restricted Gaussian oracle } \oracle.\end{aligned}\end{equation}
Observe that distribution $\pi$ is $\mu$-strongly logconcave. We assume that we have precomputed $x^* \defeq \argmin_{x \in \R^d}\left\{f(x) + g(x)\right\}$; see discussion in Section~\ref{ssec:contribution}. Our algorithm proceeds in stages following the outline in Section~\ref{ssec:techoverview}, which are put together in Section~\ref{ssec:proofmainclaim} to prove Theorem~\ref{thm:mainclaim}.
\begin{enumerate}
	\item $\csg$ is reduced to $\cssm$, which takes as input a distribution with negative log-density $f + g$, where $f$ and $g$ share a minimizer; this reduction is given in Section~\ref{ssec:sharedmin}, and the remainder of the paper handles the shared-minimizer case.
	\item The algorithm $\cssm$ is a rejection sampling scheme built on top of sampling from a joint distribution $\pih$ on $(x, y) \in \R^d \times \R^d$ whose $x$-marginal approximates $\pi$. We give this reduction in Section~\ref{ssec:outerloop}. 
	\item The bulk of our analysis is for $\sjd$, an alternating marginal sampling algorithm for sampling from $\pih$. To implement marginal sampling, it alternates calls to $\oracle$ and a rejection sampling algorithm $\yor$. We prove its correctness in Section~\ref{ssec:alternate}. 
\end{enumerate}

\begin{algorithm}[ht!]\caption{$\csg(\pi, x^*, \eps)$}
	\label{alg:csg}
	\textbf{Input:} Distribution $\pi$ of form \eqref{eq:pidef}, $x^*$ minimizing negative log-density of $\pi$, $\eps \in [0, 1]$. \\
	\textbf{Output:} Sample $x$ from a distribution $\pi'$ with $\tvd{\pi'}{\pi} \le \eps$.
	\begin{algorithmic}[1]
		\State $\tilde{f}(x) \gets f(x) - \inprod{\nabla f(x^*)}{x}$, $\tilde{g}(x) \gets g(x) + \inprod{\nabla f(x^*)}{x}$
		\State \Return $\cssm(\pi, \tilde{f}, \tilde{g}, x^*, \eps)$
	\end{algorithmic}
\end{algorithm}

\begin{algorithm}[ht!]\caption{$\cssm(\pi, f, g, x^*, \eps)$}
	\label{alg:cssm}
	\textbf{Input:} Distribution $\pi$ of form \eqref{eq:pidef}, where $f$ and $g$ are both minimized by $x^*$, $\eps \in [0, 1]$. \\
	\textbf{Output:} Sample $x$ from a distribution $\pi'$ with $\tvd{\pi'}{\pi} \le \eps$.
	\begin{algorithmic}[1]
		\While {\textbf{true}}
		\State Define the set
		\begin{equation}\label{eq:omegadef} \Omega \defeq \left\{x \mid \norm{x - x^*}_2 \le 4\sqrt{\frac{d\log(288\kappa/\eps)}{\mu}}\right\}\end{equation}
		\State $x \gets \sjd(f, g, x^*, \oracle, \tfrac{\eps}{18})$
		\If{$x \in \Omega$}
		\State $\tau \sim \text{Unif}[0, 1]$
		\State $y \gets \yor(f, x, \eta)$
		\State $\alpha \gets \exp\left(f(y) - \inprod{\nabla f(x)}{y - x} - \tfrac{L}{2}\norm{y - x}_2^2 + g(x) + \tfrac{\eta L^2}{2}\norm{x - x^*}_2^2\right)$
		\State $\hat{\theta} \gets \exp\left(-f(x) - g(x) + \tfrac{\eta}{2(1 + \eta L)}\norm{\nabla f(x)}_2^2\right)(1 + \eta L)^{\frac{d}{2}}\alpha$
		\If{$\tau \le \tfrac{\hat{\theta}}{4}$}
		\State \Return $x$
		\EndIf
		\EndIf
		\EndWhile 
	\end{algorithmic}
\end{algorithm}
\begin{algorithm}[ht!]\caption{$\sjd(f, g, x^*, \eta, \oracle, \delta)$}
	\label{alg:sjd}
	\textbf{Input:} $f$, $g$ of form \eqref{eq:pidef} both minimized by $x^*$, $\delta \in [0, 1]$, $\eta > 0$, $\oracle$ restricted Gaussian oracle for $g$.\\
	\textbf{Output:} Sample $x$ from a distribution $\pih'$ with $\tvd{\pih'}{\pih} \le \delta$, where we overload $\pih$ to mean the marginal of \eqref{eq:pihdef} on the $x$ variable.
	\begin{algorithmic}[1]
		\State $\eta \gets \tfrac{1}{32 L\kappa d\log(16\kappa/\delta)}$
		\State Let $\pih$ be the density with
		\begin{equation}\label{eq:pihdef}\frac{d\pih}{dx}(z) \propto \exp\left(-f(y) - g(x) - \frac{1}{2\eta}\norm{y - x}_2^2 - \frac{\eta L^2}{2}\norm{x - x^*}_2^2\right) \end{equation}
		\State Call $\oracle$ to sample $x_0 \sim \pistart$, for
		\begin{equation}\label{eq:pistartdef}\frac{d\pistart(x)}{dx} \propto \exp\left(-\frac{L + \eta L^2}{2}\norm{x - x^*}_2^2 - g(x)\right)\end{equation}
		\State $K \gets \frac{2^{26}\cdot100}{\eta\mu}\log\left(\frac{d\log(16\kappa)}{4\delta}\right)$ (see Remark~\ref{rem:comments})
		\For{$k \in [K]$}
		\State Call $\yor\left(f, x_{k - 1}, \eta, \tfrac{\delta}{2Kd\log(\frac{d\kappa}{\delta})}\right)$ to sample $y_k \sim \pi_{x_{k - 1}}$ (Algorithm~\ref{alg:yor}), for
		\begin{equation}\label{eq:pixdef}\frac{d\pi_x}{dy}(y) \propto \exp\left(-f(y) - \frac{1}{2\eta}\norm{y - x}_2^2\right)\end{equation}
		\State Call $\oracle$ to sample $x_k \sim \pi_{y_k}$, for
		\begin{equation}\label{eq:piydef}\frac{d\pi_y}{dx}(x) \propto \exp\left(-g(x) - \frac{1}{2\eta}\norm{y - x}_2^2 - \frac{\eta L^2}{2}\norm{x - x^*}_2^2\right)\end{equation}
		\EndFor
		\State \Return $x_K$
	\end{algorithmic}
\end{algorithm}

\subsection{Reduction from $\csg$ to $\cssm$}
\label{ssec:sharedmin}

Correctness of $\csg$ is via the following properties, whose proofs are in Appendix~\ref{app:deferred}.

\begin{restatable}{proposition}{restatecsgcorrectness}
\label{prop:csgcorrectness}
Let $\tilde{f}$ and $\tilde{g}$ be defined as in $\csg$. 
\begin{enumerate}
	\item The density $\propto \exp(-f(x) - g(x))$ is the same as the density $\propto \exp(-\tilde{f}(x) - \tilde{g}(x))$.
	\item Assuming first-order (function and gradient evaluation) access to $f$, and restricted Gaussian oracle access to $g$, we can implement the same accesses to $\tilde{f}$, $\tilde{g}$ with constant overhead.
	\item $\tilde{f}$ and $\tilde{g}$ are both minimized by $x^*$.
\end{enumerate}
\end{restatable}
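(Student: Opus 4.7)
The proposition has three parts and each admits a short, largely mechanical proof; I will take them in order.

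For part (1), my plan is to simply expand the definitions of $\tilde{f}$ and $\tilde{g}$ and observe that the linear term $\inprod{\nabla f(x^*)}{x}$ is subtracted from $f$ and added to $g$, so
\[
\tilde{f}(x) + \tilde{g}(x) = \bigl(f(x) - \inprod{\nabla f(x^*)}{x}\bigr) + \bigl(g(x) + \inprod{\nabla f(x^*)}{x}\bigr) = f(x) + g(x).
\]
Since the negative log-densities agree pointwise, the normalized densities are identical.

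For part (2), I will describe the two oracle implementations. For $\tilde{f}$, a first-order query at $x$ returns $\tilde{f}(x) = f(x) - \inprod{\nabla f(x^*)}{x}$ and $\nabla \tilde{f}(x) = \nabla f(x) - \nabla f(x^*)$; these require one first-order query to $f$ and $O(d)$ additional arithmetic, given that $\nabla f(x^*)$ is precomputed (we have already assumed access to $x^*$, so one query to $\nabla f$ at $x^*$ suffices and can be cached). For the restricted Gaussian oracle for $\tilde{g}$ at $(\lambda, v)$, the target density is proportional to
\[
\exp\!\left(-\tfrac{1}{2\lambda}\norm{x - v}_2^2 - g(x) - \inprod{\nabla f(x^*)}{x}\right).
\]
Completing the square absorbs the linear term: $-\tfrac{1}{2\lambda}\norm{x-v}_2^2 - \inprod{\nabla f(x^*)}{x}$ equals $-\tfrac{1}{2\lambda}\norm{x - (v - \lambda \nabla f(x^*))}_2^2$ up to a constant independent of $x$. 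Therefore a single call $\oracle(\lambda, v - \lambda \nabla f(x^*))$ produces a sample from the desired density, yielding constant overhead.

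For part (3), I will invoke first-order optimality for the original problem. Convexity of $f + g$ together with minimality of $x^*$ gives $0 \in \nabla f(x^*) + \partial g(x^*)$, i.e.\ $-\nabla f(x^*) \in \partial g(x^*)$. Now $\tilde{f}$ is convex (it is $f$ plus a linear term) and its gradient $\nabla \tilde{f}(x) = \nabla f(x) - \nabla f(x^*)$ vanishes at $x^*$, so $x^*$ minimizes $\tilde{f}$. Similarly $\tilde{g}$ is convex, and its subdifferential satisfies $\partial \tilde{g}(x^*) = \partial g(x^*) + \nabla f(x^*)$, which contains $-\nabla f(x^*) + \nabla f(x^*) = 0$; so $x^*$ minimizes $\tilde{g}$.

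None of the three steps presents a genuine obstacle; the only subtlety worth emphasizing is in part (2), where one must observe that the extra linear term in $\tilde{g}$ can be folded into the quadratic of the restricted Gaussian oracle by a shift of the center $v$, so that the new oracle query is to $g$ itself rather than to $\tilde{g}$. This is the one place where the definition of a restricted Gaussian oracle (as opposed to, say, a pure proximal oracle) is used, and it makes the reduction lossless.
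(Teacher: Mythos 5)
Your proposal is correct and follows essentially the same route as the paper: pointwise cancellation of the linear terms for part (1), folding the linear shift into the Gaussian's quadratic for part (2), and first-order optimality of $x^*$ for $f+g$ for part (3). The only difference is cosmetic — you make the oracle reduction explicit by completing the square to get the shifted query $\oracle(\lambda, v - \lambda\nabla f(x^*))$, and you phrase part (3) via subgradient containment rather than the paper's variational inequality, both of which are fine.
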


\subsection{Reduction from $\cssm$ to $\sjd$}
\label{ssec:outerloop}

$\cssm$ is a rejection sampling scheme, which accepts samples from subroutine $\sjd$ in the high-probability region $\Omega$ defined in \eqref{eq:omegadef}. We give a general analysis for approximate rejection sampling in Appendix~\ref{sssec:approxreject}, and Appendix~\ref{sssec:pipih} bounds relationships between distributions $\pi$ and $\pih$, defined in \eqref{eq:pidef} and \eqref{eq:pihdef} respectively (i.e.\ relative densities and normalization constant ratios). Combining these pieces proves the following main claim.

\begin{restatable}{proposition}{restatecssmcorrectness}
\label{prop:cssmcorrectness}
Let $\eta = \tfrac{1}{32 L\kappa d\log(288\kappa/\eps)}$, and assume $\sjd(f, g, x^*, \oracle, \delta)$ samples within $\delta$ total variation of the $x$-marginal on \eqref{eq:pihdef}. $\cssm$ outputs a sample within total variation $\eps$ of \eqref{eq:pidef} in an expected $O(1)$ calls to $\sjd$.
\end{restatable}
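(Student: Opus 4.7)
The plan is to view $\cssm$ as an approximate rejection sampler: draw $x$ from (an approximation of) the $x$-marginal $\pih_X$ of $\pih$, keep only those with $x \in \Omega$, and accept with probability $\hat\theta/4$, where the fresh $y \sim \pi_x$ drawn by $\yor$ supplies the randomness in the estimator $\hat\theta$. Integrating $y$ out of \eqref{eq:pihdef} gives
\[\pih_X(x) \propto \exp\!\left(-g(x) - \tfrac{\eta L^2}{2}\|x-x^*\|_2^2\right) Z_x,\quad Z_x \defeq \int \exp\!\left(-f(y) - \tfrac{1}{2\eta}\|y-x\|_2^2\right) dy.\]
With this framing, Proposition~\ref{prop:cssmcorrectness} will follow from the generic approximate rejection sampling lemma of Appendix~\ref{sssec:approxreject} once three pointwise ingredients are established, together with the density and normalization comparisons between $\pi$ and $\pih$ from Appendix~\ref{sssec:pipih}.

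The three ingredients I would verify are (a) $\hat\theta/4 \le 1$ on $\Omega$, (b) $\pih_X(x)\,\E_{y \sim \pi_x}[\hat\theta]$ equals $\pi(x)$ up to a factor in $[1, e^{1/2}]$ on $\Omega$ (and a global scalar), and (c) $\pi(\Omega^c), \pih_X(\Omega^c) \le \eps/18$. For (a), $L$-smoothness gives $f(y) - \inprod{\nabla f(x)}{y-x} - \tfrac{L}{2}\|y-x\|_2^2 \le f(x)$, hence $\hat\theta \le (1+\eta L)^{d/2}\exp\!\left(\tfrac{\eta\|\nabla f(x)\|_2^2}{2(1+\eta L)} + \tfrac{\eta L^2}{2}\|x-x^*\|_2^2\right)$; the choice of $\eta$ together with $\|\nabla f(x)\|_2 \le L\|x-x^*\|_2$ makes each exponent term at most $\tfrac14$ on $\Omega$ and $(1+\eta L)^{d/2} \le e^{\eta L d/2} \le 2$, so $\hat\theta \le 4$. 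For (b), completing the square against $\exp(-\tfrac{1}{2\eta}\|y-x\|_2^2)$ yields the clean identity
\[\E_{y \sim \pi_x}[\hat\theta] \;=\; Z_x^{-1}(2\pi\eta)^{d/2}\exp\!\left(-f(x) + \tfrac{\eta\|\nabla f(x)\|_2^2}{1+\eta L} + \tfrac{\eta L^2}{2}\|x-x^*\|_2^2\right),\]
so $\pih_X(x)\,\E_y[\hat\theta] \propto \pi(x)\exp\!\left(\tfrac{\eta\|\nabla f(x)\|_2^2}{1+\eta L}\right)$, and the correction factor lies in $[1, e^{1/2}]$ on $\Omega$ by the same argument as (a). For (c), $\mu$-strong logconcavity gives the $\pi$-tail bound by standard sub-Gaussian concentration, and Appendix~\ref{sssec:pipih} transfers it to $\pih_X$ via the relative density bounds.

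Given these three ingredients, the approximate rejection sampling lemma combines the $\eps/18$ TV error from $\sjd$, the $e^{1/2}$ multiplicative distortion of $\hat\theta$ on $\Omega$, and the $\eps/18$ truncation error into the advertised $\eps$ TV bound, and simultaneously certifies that the per-iteration acceptance probability $\int_\Omega \pih_X(x)\,\E_y[\hat\theta(x)]/4\, dx = \Theta(1)$---because the integrand is a constant-factor perturbation of $\pi|_\Omega/4$, and the normalization ratio $(2\pi\eta)^{d/2}Z_\pi/Z_{\pih_X}$ is itself $\Theta(1)$ by the same smoothness/strong-convexity sandwich applied to $Z_x$. Thus $\cssm$ terminates in $O(1)$ expected calls to $\sjd$. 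The main obstacle is precisely this orchestration: three independent error sources (distributional error from $\sjd$, multiplicative distortion of the rejection ratio, and truncation to $\Omega$) must be combined cleanly, which is why the comparisons of Appendix~\ref{sssec:pipih} and the generic lemma of Appendix~\ref{sssec:approxreject} are set up precisely so that, once (a)--(c) are in hand, the remaining assembly is routine.
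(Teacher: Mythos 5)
Your plan follows the same architecture as the paper (approximate rejection sampling over $\Omega$ with $\hat\theta$ as acceptance probability, combining truncation, $\sjd$-error, and estimator bias), and your Gaussian-integral computation for $\E_{y\sim\pi_x}[\hat\theta]$ is in fact correct. But ingredient (b) contains a genuine gap that the paper's argument avoids. You find that
\[
\E_{y\sim\pi_x}[\hat\theta] = \frac{p(x)}{\hp(x)}\cdot\exp\!\left(\frac{\eta\norm{\nabla f(x)}_2^2}{1+\eta L}\right),
\]
and propose to absorb the multiplicative factor $r(x) \in [1, e^{1/2}]$ into the total variation budget. This cannot work: the distribution produced by a rejection scheme whose per-point acceptance probability is scaled by $r(x)$ is proportional to $r(x)\pi(x)\1_{\Omega}$, and a pointwise multiplicative distortion of size $\Theta(1)$ produces a total variation discrepancy that is $\Omega(1/\log(\kappa/\eps))$ at best (by averaging $r-1$ against $\pi$), not $O(\eps)$. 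The framework of Corollary~\ref{corr:unbiased_reject_approx} is set up precisely so that no such distortion term appears --- it requires $\hat\theta$ to be an \emph{exactly} unbiased estimator of $p(x)/\hp(x)$, almost surely bounded by $C$ on $\Omega$, and then charges only the truncation error $\eps'$ and the $\sjd$-error $\delta$, with no third source.

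The reason a bias factor shows up in your calculation is a sign typo in the printed algorithm (and propagated into the paper's displayed $\E_{\pi_x}[\alpha]$, where the exponent should be $+\tfrac{\eta}{2(1+\eta L)}\norm{\nabla f(x)}_2^2$, not minus). The correct $\hat\theta$ has a \emph{negative} sign in its exponent, $\hat{\theta} = \exp(-f(x) - g(x) - \tfrac{\eta}{2(1 + \eta L)}\norm{\nabla f(x)}_2^2)(1 + \eta L)^{d/2}\alpha$, and with this sign one gets $\E_{y\sim\pi_x}[\hat\theta] = p(x)/\hp(x)$ exactly; the two exponential terms in $\norm{\nabla f(x)}_2^2$ cancel rather than add. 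Your proof should therefore replace ingredient (b) with the exact unbiasedness statement (after fixing the sign), and then the appeal to the lemma in Appendix~\ref{sssec:approxreject} is routine: with $C = 4$, $\eps' = \eps/3$ (from the tail bound on $\Omega$), $\delta = \eps/18$, the TV error $\eps' + \tfrac{2\delta C}{1-\eps'} \le \eps$, and the $\hp$-vs-$p$ normalization ratio needed by the lemma ($\hat Z / Z \le 1$) comes from the upper bound in Lemma~\ref{lem:normratiobound}.
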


\subsection{Implementing $\sjd$}
\label{ssec:alternate}

$\sjd$ alternates between sampling marginals in the joint distribution $\pih$, as seen by definitions \eqref{eq:pixdef}, \eqref{eq:piydef}. In Appendix~\ref{sssec:sjdcorrect}, we give a short proof that marginal sampling attains the correct stationary distribution. We bound the conductance of the induced walk on iterates $\{x_k\}$ by combining an isoperimetry bound with a total variation guarantee between transitions of nearby points in Appendix~\ref{sssec:sjdconduct}. Finally, we give a simple rejection sampling scheme $\yor$ as Algorithm~\ref{alg:yor} for implementing the step \eqref{eq:pixdef}. Since the $y$-marginal of $\pih$ is a bounded perturbation of a Gaussian (intuitively, $f$ is $L$-smooth and $\eta^{-1} \gg L$), we show in a high probability region that rejecting from the sum of a first-order approximation to $f$ and the Gaussian succeeds in $2$ iterations. 

\begin{remark}\label{rem:comments}
	For simplicity of presentation, we were conservative in bounding constants throughout; in practice (cf.\ Section~\ref{sec:experiments}), we found that the constant in Line 4 is orders of magnitude too large (a constant $< 10$ sufficed). Several constants were inherited from prior analyses, which we do not rederive to save on redundancy.
\end{remark}

We now give a complete guarantee on the complexity of $\sjd$.

\begin{restatable}{proposition}{restatesjdguarantee}\label{prop:sjdguarantee}
$\sjd$ outputs a point with distribution within $\delta$ total variation distance from the $x$-marginal of $\pih$. The expected number of gradient queries per iteration is constant.
\end{restatable}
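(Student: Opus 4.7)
The plan is to view $\sjd$ as an approximate simulation of an ideal Markov chain on the $x$-iterates whose stationary distribution is exactly the $x$-marginal of $\pih$. I would then combine three ingredients. First, correctness of the stationary distribution for the ideal two-step chain that alternates exact sampling from $\pi_x$ (on $y$) and from $\pi_y$ (on $x$), which is the content of Appendix~\ref{sssec:sjdcorrect}. Second, a conductance-based mixing bound for this ideal chain, developed in Appendix~\ref{sssec:sjdconduct}, yielding mixing to within TV distance $\delta/2$ after $K$ iterations started from $\pistart$. Third, accumulation of per-iteration error of the inexact $y$-step: because $\yor$ is invoked with tolerance $\tfrac{\delta}{2Kd\log(d\kappa/\delta)}$, a standard coupling/data-processing argument bounds the total TV perturbation relative to the ideal chain by $\delta/2$ across $K$ steps. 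The triangle inequality on TV then yields the claimed $\delta$ guarantee.

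\textbf{Conductance, warm start, and the number of iterations.} For mixing I would combine the isoperimetric inequality for $\mu$-strongly logconcave distributions (the $x$-marginal of $\pih$ inherits $\mu$-strong logconcavity from $\mu$-strong convexity of $f$ and convexity of $g$, even after the tilt $-\tfrac{\eta L^2}{2}\norm{x - x^*}_2^2$) with a TV-continuity statement of the form $\tvd{\mathcal{T}_x}{\mathcal{T}_{x'}} \le \tfrac{1}{2}$ whenever $\norm{x - x'}_2 = O(\sqrt{\eta})$, where $\mathcal{T}_x$ denotes the one-step transition of the ideal chain; together these give conductance $\Omega(\sqrt{\eta\mu})$ and hence mixing time $\tOh{(\eta\mu)^{-1}\log\beta}$ from a $\beta$-warm start. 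For $\pistart$, which is a $g$-tilted Gaussian of precision $(L + \eta L^2)\id$ centered at $x^*$, I would use that $f$ is $L$-smooth and both $f$ and $g$ are minimized at $x^*$ to compare log-densities pointwise against the $x$-marginal of $\pih$, producing a $\mathrm{poly}(\kappa, d)$ warmness bound; substituting this into the conductance-based mixing bound recovers the value of $K$ in Line 4 of Algorithm~\ref{alg:sjd} (up to the slack described in Remark~\ref{rem:comments}).

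\textbf{Gradient complexity.} Within each iteration only $\yor$ queries $\nabla f$, so bounding expected gradient complexity reduces to analyzing its acceptance probability. I would design $\yor$ as rejection from the Gaussian proposal with log-density proportional to $-f(x_{k-1}) - \inprod{\nabla f(x_{k-1})}{y - x_{k-1}} - \tfrac{L}{2}\norm{y - x_{k-1}}_2^2 - \tfrac{1}{2\eta}\norm{y - x_{k-1}}_2^2$, which by $L$-smoothness dominates the target $\pi_{x_{k-1}}$ on $y$ pointwise. Since $\eta \ll L^{-1}$, this proposal is within an $O(1)$ rescaling of a Gaussian of covariance $\eta\id$, and its mean is within $O(\sqrt{\eta})$ of the target mean by Proposition~\ref{prop:min_perturb}; substituting into the acceptance ratio and using $L$-smoothness to control the residual then gives a constant lower bound on the acceptance probability in the high-probability region, so $\yor$ terminates in an expected $O(1)$ gradient queries.

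\textbf{Main obstacle.} The principal technical obstacle will be the TV-continuity step feeding into the conductance bound, because each transition composes two conditional samplings and I must track how small perturbations in $x$ propagate through $x \mapsto \pi_x$ (on $y$) and then through $y \mapsto \pi_y$ (on $x$). Handling this cleanly requires exploiting $\eta^{-1}$-strong logconcavity of each conditional along with the quadratic tilt in \eqref{eq:pihdef}, which is what keeps the $x$-marginal well-conditioned despite $f$ only coupling to $x$ through the $O(\sqrt{\eta})$-scale quadratic. Obtaining the sharp $\sqrt{\eta}$ rather than $\sqrt{d\eta}$ radius for the TV-continuity window is precisely the content of the Gaussian stability results highlighted in Section~\ref{ssec:techoverview}, and I would invoke Proposition~\ref{prop:min_perturb} as a black box rather than reprove it here.
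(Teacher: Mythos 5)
Your decomposition of the total variation guarantee — stationarity of the alternating chain, conductance via isoperimetry plus transition overlap, warmness of $\pistart$, and a coupling argument to accumulate the per-step error from the inexact $\yor$ — matches the paper's Corollary~\ref{corr:sjdmix} plus the first paragraph of its proof of Proposition~\ref{prop:sjdguarantee}. (One small note: your side remark that the $x$-marginal of $\pih$ is itself $\Omega(\mu)$-strongly logconcave is plausible via a Brascamp--Lieb/Schur-complement argument, but the paper instead routes isoperimetry through the density ratio bounds $\tfrac{d\pi}{d\pih} \in [\tfrac12, 2]$ on $\Omega_\delta$ and then uses isoperimetry of $\pi_{\Omega_\delta}$; both are viable.)

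Your gradient-complexity argument, however, has two genuine gaps. First, the proposed $\yor$ proposal density, $\propto \exp\bigl(-f(x) - \inprod{\nabla f(x)}{y - x} - \tfrac{L}{2}\norm{y - x}_2^2 - \tfrac{1}{2\eta}\norm{y - x}_2^2\bigr)$, does \emph{not} dominate the target $\propto \exp\bigl(-f(y) - \tfrac{1}{2\eta}\norm{y-x}_2^2\bigr)$: by $L$-smoothness this is a pointwise \emph{lower} bound on the target, so the acceptance ratio $p/\hp \ge 1$ and rejection sampling from it would not produce the correct law. The correct direction uses \emph{convexity} to drop the $\tfrac{L}{2}\norm{y-x}_2^2$ term entirely (the paper's $\yor$ proposes from $\Nor(x - \eta\nabla f(x), \eta\id)$, which dominates since $f(y) \ge f(x) + \inprod{\nabla f(x)}{y-x}$), and then smoothness is used only to lower bound the ratio of normalization constants and hence the acceptance rate. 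Second, and more structurally, you only argue constant expected gradient queries ``in the high-probability region.'' This leaves open what happens when $\norm{x_{k-1} - x^*}_2$ is large and why this does not blow up the expectation. The paper handles this with two additional ingredients that your plan omits: (i) a worst-case fallback — when the iterate is far, $\yor$ invokes Metropolized HMC (Proposition~\ref{prop:atmostd}), costing $O(d\log(d\kappa/\delta))$ gradient queries but never more; and (ii) a bound showing the probability of being outside the good region at every iteration $k$ is $O((d\log(d\kappa/\delta))^{-1})$, obtained by combining concentration (Fact~\ref{fact:rbound}), monotonicity of warmness under the exact chain, and — crucially — the fact that the per-call $\yor$ tolerance is set to $\delta/(2Kd\log(d\kappa/\delta))$ precisely so the accumulated TV perturbation is $O((d\log(d\kappa/\delta))^{-1})$ rather than merely $\delta/2$. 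Without (i) and (ii) the expectation over the iterate is not controlled, so the claim of constant expected gradient queries is not established.
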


\subsection{Putting it all together: Proof of Theorem~\ref{thm:mainclaim}}
\label{ssec:proofmainclaim}

We show Theorem~\ref{thm:mainclaim} follows from the guarantees of Propositions~\ref{prop:csgcorrectness},~\ref{prop:cssmcorrectness}, and~\ref{prop:sjdguarantee}. By observing the value of $K$ in $\sjd$, we see that the number of total iterations in each call to $\sjd$ is bounded by $ O\left(\kappa^2 d \log^2\left(\tfrac{\kappa d}{\delta}\right)\right).$
Proposition~\ref{prop:sjdguarantee} also shows that every iteration, we require an expected constant number of gradient queries and calls to $\oracle$, the restricted Gaussian oracle for $g$, and that the resulting distribution has $\delta$ total variation from the desired marginal of $\pih$. Next, Proposition~\ref{prop:cssmcorrectness} implies that the number of calls to $\sjd$ in a run of $\cssm$ is bounded by a constant, the choice of $\delta$  is $\Theta(\eps)$, and the resulting point has total variation $\eps$ from the original distribution $\pi$. Finally, Proposition~\ref{prop:csgcorrectness} shows sampling from a general distribution of the form \eqref{eq:compositesampling} is reducible to one call of $\cssm$, and the requisite oracles are implementable. 	%

\section{Experiments}
\label{sec:experiments}
We test our algorithm on the problem of sampling from a Gaussian restricted to an orthant. Formally, for a Gaussian with mean $m$ and covariance $\covar$, and where $O$ is a random orthant\footnote{This generalizes the case of the positive orthant by changing signs of $m$ appropriately.} 
(coordinatewise sign restrictions on $\R^d$), we consider sampling from the distribution\footnote{The indicator $\1_{x\in O}$ is $0$ if $x \in O$ and $\infty$ otherwise.}
\begin{align*}
\pi^*(x) \sim \exp\left(-\frac{1}{2}(x-m)^\top \covar^{-1} (x-m) -\1_{x\in O}\right).
\end{align*}
This problem is motivated by applications in posterior estimation with side information that the variable of interest has sign constraints, e.g.\ in physics simulations \cite{norgaard2018application}. For such distributions with nondiagonal covariances, sampling in the high-dimensional regime can be challenging, and to our knowledge no high-accuracy practical samplers exist for this fundamental problem. 

We verify the correctness of our algorithm by using the output of na\"ive rejection sampling (accepting samples in $O$) on Gaussian distributions with random covariance and random mean in low dimensions, where we can meaningfully plot histograms. We defer this test to Appendix~\ref{app:experiment_details}.

In high dimensions, we show our algorithm vastly improves upon the hit-and-run method \cite{LovaszV06a}, the most efficient general-purpose logconcave sampler in practice. Hit-and-run has a mixing time of $O(d^3)$ theoretically \cite{LovaszV06a} and $O(d^2)$ empirically. We test our algorithm on randomly generated Gaussian distributions with dense covariance matrices. For fair comparison to hit-and-run (which works on well-rounded distributions), the condition numbers $\kappa$ of all randomly generated Gaussian distributions are small constants $\approx 10$ and the smoothness parameters are $\approx 5$. The main tunable parameter in our algorithm is the step size $\eta$, which we chose so that both $\yor$ and $\sjd$ reject with probability at most $\thalf$.

In Figure~\ref{fig:depend}, we compare the mixing times of our algorithm and hit-and-run and show the dependence on the dimension $d$. The mixing criterion used was that the process has an effective sample size $\text{ESS} > 10$ for all coordinates. To ensure a stable scaling of the mixing time, we use fixed mean $m = 0$. Our algorithm used step size $\eta \approx \tfrac{0.3}{d}$ for each $d = 20, 35, 50, 65, 80$. We show that our algorithm improves upon hit-and-run by a factor $O(d)$, which corroborates our theoretical analysis. 

In Figure~\ref{fig:auto}, we plot the autocorrelation of the two algorithms' trajectories for $d= 500$, projected on a random unit direction. We show that in the very high-dimensional regime, our algorithm can converge significantly faster than hit-and-run. In this experiment, each coordinate of $m$ is chosen uniformly at random from $[-0.5, 0.5]$, and $\eta = 0.0014$ for our algorithm. We include an autocorrelation plot of shorter trajectories showing mixing time of our algorithm in Appendix~\ref{app:experiment_details}.

\begin{figure}[ht!]
	\centering
	\begin{subfigure}{.44\textwidth}
		\centering
		\includegraphics[width=0.93\linewidth]{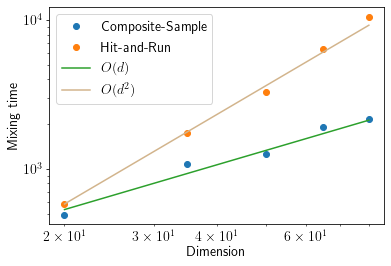}
		\captionof{figure}{Mixing time versus dimension. }
		\label{fig:depend}
	\end{subfigure}%
	\begin{subfigure}{.44\textwidth}
		\centering
		\includegraphics[width=1\linewidth]{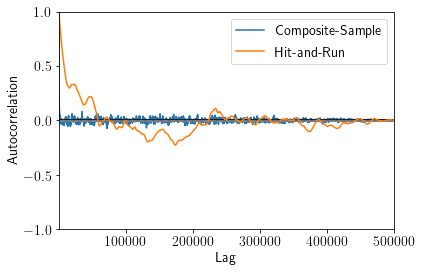}
		\captionof{figure}{Autocorrelation plot.}
		\label{fig:auto}
	\end{subfigure}
	\caption{Comparison between $\csg$ and Hit-and-Run. (a) Dimension dependence of mixing time, averaged over $10$ runs. (b) Autocorrelation plots of algorithms for $d = 500$. }
	\label{fig:test}
\end{figure}

	\subsection*{Acknowledgments}
	
	We thank Yair Carmon for suggesting the experiment in Section~\ref{sec:experiments}.
	\newpage
	\bibliographystyle{alpha}	
	\bibliography{composite-sampling}
	\newpage
	\begin{appendix}	

\section{Deferred proofs from Section~\ref{sec:algorithm}}
\label{app:deferred}

\subsection{Technical facts}
\label{ssec:technicalfacts}

We will repeatedly use the following facts throughout this paper.

\begin{fact}[Gaussian integral]\label{fact:gaussiandensity}
	For any $\lambda \ge 0$ and $v \in \R^d$,
	\[\int \exp\left(-\frac{1}{2\lambda}\norm{x - v}_2^2\right)dx = \left(2\pi\lambda\right)^{\frac{d}{2}}.\]
\end{fact}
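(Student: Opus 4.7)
The plan is to reduce the $d$-dimensional integral to a product of one-dimensional integrals and then evaluate the one-dimensional Gaussian integral by the classical squaring-and-polar-coordinates trick. First I would use translation invariance of Lebesgue measure under the change of variables $x \mapsto x + v$ to reduce to the case $v = 0$, after which the integrand becomes $\exp(-\tfrac{1}{2\lambda}\norm{x}_2^2)$. Since $\norm{x}_2^2 = \sum_{i=1}^d x_i^2$, the integrand factorizes coordinatewise, and by Tonelli's theorem the $d$-dimensional integral equals $\bigl(\int_{\R} \exp(-t^2/(2\lambda))\, dt\bigr)^d$.

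Next I would evaluate the one-dimensional integral $I \defeq \int_\R \exp(-t^2/(2\lambda))\, dt$ by computing $I^2$ as a two-dimensional integral $\int\int_{\R^2} \exp(-(s^2+t^2)/(2\lambda))\, ds\, dt$ and switching to polar coordinates $s = r\cos\theta$, $t = r\sin\theta$, which yields $I^2 = \int_0^{2\pi}\int_0^\infty r\exp(-r^2/(2\lambda))\, dr\, d\theta = 2\pi\lambda$ after the substitution $u = r^2/(2\lambda)$. Thus $I = \sqrt{2\pi\lambda}$, and raising to the $d$-th power yields $(2\pi\lambda)^{d/2}$, which is the claimed identity.

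There is essentially no obstacle here: the statement is a standard textbook computation and the only mild subtleties are ensuring $\lambda > 0$ so that the integrand is integrable (the case $\lambda = 0$ in the fact as stated should be interpreted as a limit or a degenerate case, but is not used in the main arguments) and justifying Tonelli's theorem, which applies immediately since the integrand is nonnegative and measurable. I would present the proof in three short lines: translation, factorization, and the polar-coordinates evaluation of the one-dimensional Gaussian.
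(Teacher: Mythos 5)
Your proof is correct: translation invariance, coordinatewise factorization via Tonelli, and the polar-coordinates evaluation of the one-dimensional integral is the canonical derivation, and your remark about the degenerate case $\lambda = 0$ is a reasonable caveat. The paper itself offers no proof of this fact---it is stated as a standard Gaussian integral and used as a black box---so there is no alternative argument to compare against; your write-up simply supplies the textbook computation the paper takes for granted.
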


\begin{fact}[\cite{Harge04}, Theorem 1.1]
	\label{fact:convexshrink}
	Let $\pi$ be a $\mu$-strongly logconcave density. Let $d\gamma_\mu(x)$ be the Gaussian density with covariance matrix $\mu^{-1}\id$. For any convex function $h$,
	\[\E_\pi[h(x - \E_\pi[x])] \le \E_{\gamma_\mu}[h(x - \E_{\gamma_\mu}[x])]. \]
\end{fact}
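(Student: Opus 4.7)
The plan is to reduce the inequality to a statement about a Brenier optimal transport map and then prove a monotonicity property along a displacement interpolation. Since $\pi$ is $\mu$-strongly logconcave and $\gamma_\mu$ has covariance $\mu^{-1}\id$, Caffarelli's contraction theorem supplies a Brenier map $T = \nabla \varphi$ (with $\varphi$ convex) pushing $\gamma_\mu$ forward to $\pi$ and satisfying $0 \preceq \nabla T(x) \preceq \id$ almost everywhere. Using the coupling $X \sim \gamma_\mu$, $T(X) \sim \pi$, the claim is equivalent to
\[\E_X\!\left[h\bigl(T(X) - \E_X T(X)\bigr)\right] \;\le\; \E_X\!\left[h\bigl(X - \E_X X\bigr)\right].\]

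I would then interpolate by setting $T_t = (1-t)\id + tT$ for $t \in [0,1]$, which is still a $1$-Lipschitz gradient of a convex function, and defining $F(t) = \E_X[h(T_t(X) - \E_X T_t(X))]$, so that $F(0)$ is the right-hand side and $F(1)$ is the left-hand side. Differentiating in $t$ and using $\E\Delta = 0$ for the centered vector field $\Delta(X) \defeq (T(X) - X) - \E_X(T(X) - X)$,
\[F'(t) \;=\; \E_X\!\left[\langle \nabla h(T_t(X) - \E_X T_t(X)),\, \Delta(X)\rangle\right],\]
so it suffices to show this expectation is nonpositive for every $t$. In one dimension this is immediate: $\Delta$ is nonincreasing in $X$ (since $T' \le 1$ forces $T - X$ to decrease), while $\nabla h \circ T_t$ is nondecreasing in $X$ (since $h'$ is nondecreasing and $T_t$ is nondecreasing), and the FKG correlation inequality for the Gaussian directly yields $F'(t) \le 0$.

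The main obstacle is upgrading this one-dimensional argument to $\R^d$, where FKG is unavailable and Gaussian correlation inequalities between convex/log-concave functions are much more delicate. This is essentially the content of Harge's original paper: he establishes the multivariate correlation bound through an Ornstein--Uhlenbeck semigroup representation of the covariance, rewriting each component of $F'(t)$ as an integral $\int_0^\infty \E_{\gamma_\mu}[\langle \nabla A, P_s \nabla B \rangle]\,ds$ and then tracking how the semigroup interacts with the positivity $\nabla^2 h \succeq 0$ and the Caffarelli negativity $\nabla T - \id \preceq 0$. A more recent transport-centric route (due to Cordero-Erausquin and collaborators) computes the derivative directly from second-order information on $\varphi$ and $h$ and concludes via a pointwise matrix inequality; either route culminates in the statement cited here from Harge's Theorem~1.1, which we invoke as a black box.
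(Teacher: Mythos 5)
This statement is not proved in the paper at all: it is imported verbatim as Fact~\ref{fact:convexshrink} with the citation to Harg\'e's Theorem~1.1, so the paper's ``proof'' is exactly the black-box invocation you end with. Your Caffarelli-contraction-plus-interpolation sketch is a sensible outline (and the one-dimensional FKG step is correct), but since its crucial multidimensional correlation step is deferred to Harg\'e's theorem itself, your proposal ultimately coincides with the paper's treatment --- citing the external result --- rather than supplying an independent proof.
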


\begin{fact}[\cite{DwivediCW018}, Lemma 1]\label{fact:rbound}
	Let $\pi$ be a $\mu$-strongly logconcave distribution, and let $x^*$ minimize its negative log-density. Then, for $x \sim \pi$ and any $\delta \in [0, 1]$, with probability at least $1 - \delta$,
	\begin{equation}\label{eq:rbound}\norm{x - x^*}_2 \le \sqrt{\frac{d}{\mu}}\left(2 + 2\max\left(\sqrt[4]{\frac{\log(1/\delta)}{d}}, \sqrt{\frac{\log(1/\delta)}{d}} \right)\right).\end{equation}
\end{fact}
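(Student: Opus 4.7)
The plan is to split $\norm{x - x^*}_2 \le \norm{x - \bar{x}}_2 + \norm{\bar{x} - x^*}_2$, where $\bar{x} := \E_\pi[x]$, and control the two terms separately by comparison to the Gaussian $\gamma_\mu$ of covariance $\mu^{-1}\id$ using Fact~\ref{fact:convexshrink} (Harg\'e). The mean--mode gap will be a deterministic $\sqrt{d/\mu}$ bound coming from an integration-by-parts identity, while the fluctuation $\norm{x - \bar{x}}_2$ will be a $\chi^2$-type tail transported from $\gamma_\mu$ through Fact~\ref{fact:convexshrink}.

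For the mean--mode gap, I would first show $\E_\pi[\norm{x - x^*}_2^2] \le d/\mu$. Writing $\pi \propto e^{-f}$ with $f$ $\mu$-strongly convex and minimized at $x^*$, integration by parts (one coordinate at a time, using $\partial_i e^{-f} = -(\partial_i f) e^{-f}$ and the Gaussian-like decay implied by strong log-concavity) yields $\E_\pi[x \cdot \nabla f(x)] = d$ and $\E_\pi[\nabla f(x)] = 0$. On the other hand, $\mu$-strong convexity combined with $\nabla f(x^*) = 0$ gives the pointwise inequality $(x - x^*) \cdot \nabla f(x) \ge \mu\norm{x - x^*}_2^2$. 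Taking expectations and combining,
\[d = \E_\pi\bigl[(x - x^*) \cdot \nabla f(x)\bigr] \ge \mu\, \E_\pi\bigl[\norm{x - x^*}_2^2\bigr].\]
Jensen's inequality then gives $\norm{\bar{x} - x^*}_2 \le \sqrt{\E_\pi[\norm{x - x^*}_2^2]} \le \sqrt{d/\mu}$.

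For the fluctuation, I would apply Fact~\ref{fact:convexshrink} with the convex function $h(v) = \exp(\lambda\norm{v}_2^2)$ for $0 < \lambda < \mu/2$ to obtain
\[\E_\pi\!\left[\exp(\lambda\norm{x - \bar{x}}_2^2)\right] \le \E_{\gamma_\mu}\!\left[\exp(\lambda\norm{Z - \E_{\gamma_\mu} Z}_2^2)\right] = (1 - 2\lambda/\mu)^{-d/2},\]
the last equality being the standard MGF of a $\chi^2_d$ variable scaled by $\mu^{-1}$. A Chernoff bound with the usual $\lambda$-optimization delivers the Laurent--Massart tail $\Pr[\norm{x - \bar{x}}_2^2 \ge (d + 2\sqrt{dt} + 2t)/\mu] \le e^{-t}$. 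Substituting $t = \log(1/\delta)$, using $a + b \le 2\max(a,b)$ inside the square root to write $2\sqrt{dt} + 2t \le 4\max(\sqrt{dt}, t)$, and then $\sqrt{u+v} \le \sqrt{u} + \sqrt{v}$, yields $\norm{x - \bar{x}}_2 \le \sqrt{d/\mu}\bigl(1 + 2\max((\log(1/\delta)/d)^{1/4}, \sqrt{\log(1/\delta)/d})\bigr)$ with probability at least $1 - \delta$. Adding the deterministic bound $\sqrt{d/\mu}$ from the previous paragraph produces exactly \eqref{eq:rbound}.

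The main obstacle is the mean--mode gap: Fact~\ref{fact:convexshrink} alone only controls variances around $\bar{x}$, never around $x^*$, so the integration-by-parts identity (which is also the only place strong convexity of $f$ is used beyond mere log-concavity of $\pi$) is essential for closing the gap between concentration around the mean and concentration around the mode at the same $\sqrt{d/\mu}$ scale. The other nontrivial bookkeeping is the fourth-root term in the max, which arises naturally from the $\sqrt{dt}$ middle term of the $\chi^2$ tail once one takes square roots and rescales by $\sqrt{d/\mu}$.
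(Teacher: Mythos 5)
The paper never proves this statement---it is imported verbatim from \cite{DwivediCW018} (their Lemma 1)---so there is no internal proof to compare against; your blind derivation is correct and is essentially the standard argument behind that lemma: Harg\'e's comparison (the paper's Fact~\ref{fact:convexshrink}) applied to the convex function $h(v)=\exp(\lambda\norm{v}_2^2)$ yields the $\chi^2$-type (Laurent--Massart) tail for $\norm{x-\bar{x}}_2$, and the mean--mode gap is exactly the paper's Fact~\ref{fact:distxstar} combined with Jensen, with the bookkeeping reproducing \eqref{eq:rbound} exactly. The one caveat is that your integration-by-parts proof of $\E_\pi[\norm{x-x^*}_2^2]\le d/\mu$ assumes a differentiable potential with vanishing boundary terms, whereas the fact is stated (and used in this paper) for general $\mu$-strongly logconcave $\pi$ whose negative log-density may be non-smooth or have restricted support; this is easily repaired by citing Fact~\ref{fact:distxstar} directly or by a routine smoothing/limiting argument, but as written that step does not cover all cases to which the fact is applied.
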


\begin{fact}[\cite{DurmusM19}, Theorem 1]\label{fact:distxstar}
	Let $\pi$ be a $\mu$-strongly logconcave distribution, and let $x^*$ minimize its negative log-density. Then, $\E_\pi[\norm{x - x^*}_2^2] \le \tfrac{d}{\mu}$.
\end{fact}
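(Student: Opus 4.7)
The plan is to prove the second-moment bound via an integration-by-parts (Stein-type) identity, combined with $\mu$-strong convexity of the negative log-density. Write $d\pi(x) = Z^{-1} e^{-f(x)}\,dx$ where $f$ is $\mu$-strongly convex; assume $f$ is smooth (the general strongly convex case follows by standard mollification, since both sides of the inequality are continuous under the approximation). First-order optimality gives $\nabla f(x^*) = 0$.

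The key identity I would establish first is
\[\E_\pi[\nabla f(x)\cdot (x - x^*)] \;=\; d.\]
This is obtained by writing $\nabla f(x)\, e^{-f(x)} = -\nabla e^{-f(x)}$ and applying the divergence theorem componentwise:
\[\int (x_i - x^*_i)\, \partial_i f(x)\, e^{-f(x)}\, dx \;=\; -\int (x_i - x^*_i)\, \partial_i\bigl(e^{-f(x)}\bigr)\, dx \;=\; \int e^{-f(x)}\, dx,\]
where the boundary terms vanish because $\mu$-strong logconcavity of $\pi$ implies $\mu$-sub-Gaussian tails (cited in the Preliminaries), so $(x_i - x^*_i) e^{-f(x)} \to 0$ fast enough at infinity for Fubini and the fundamental theorem of calculus to apply. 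Summing over $i \in [d]$ and dividing by $Z$ yields the claimed identity.

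Next I would combine this identity with $\mu$-strong convexity. Strong convexity of $f$ gives the monotonicity estimate
\[\bigl(\nabla f(x) - \nabla f(x^*)\bigr)\cdot (x - x^*) \;\ge\; \mu\, \norm{x - x^*}_2^2 \quad \text{for all } x \in \R^d,\]
and since $\nabla f(x^*) = 0$ this simplifies to $\nabla f(x)\cdot(x - x^*) \ge \mu \norm{x - x^*}_2^2$. Taking expectations under $\pi$ and using the Stein identity,
\[d \;=\; \E_\pi[\nabla f(x)\cdot (x - x^*)] \;\ge\; \mu\, \E_\pi\bigl[\norm{x - x^*}_2^2\bigr],\]
which rearranges to $\E_\pi[\norm{x - x^*}_2^2] \le d/\mu$, the claim.

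The main obstacle is the justification of the integration by parts at the level of regularity available (strongly convex $f$ need not be $C^1$ a priori, though it is differentiable almost everywhere with a continuous gradient, and is twice-differentiable almost everywhere by Alexandrov's theorem). I would handle this by smoothing $f$ via infimal convolution with a small quadratic (which preserves $\mu$-strong convexity and only shifts the minimizer negligibly), proving the bound for the smoothed density, and then passing to the limit using weak convergence together with uniform moment control coming from the sub-Gaussian tails already guaranteed by $\mu$-strong logconcavity. Everything else is a short one-line calculation once the Stein identity is in hand.
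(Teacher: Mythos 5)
Your proof is correct. Note that the paper does not prove this statement at all: it is imported as a black-box fact, cited to [DurmusM19, Theorem 1], and used later (e.g.\ in the proof of Proposition~\ref{prop:normalizationratio}); your argument — the Stein/integration-by-parts identity $\E_\pi[\inprod{\nabla f(x)}{x - x^*}] = d$ combined with the strong-convexity monotonicity bound $\inprod{\nabla f(x)}{x - x^*} \ge \mu \norm{x - x^*}_2^2$ — is essentially the standard proof of that cited result, so there is nothing in the paper to diverge from. Two small inaccuracies in your regularization step, neither of which affects the conclusion: infimal convolution with a quadratic (the Moreau envelope) yields $\tfrac{\mu}{1+\eta\mu}$-strong convexity rather than exactly $\mu$-strong convexity, so the smoothed bound is $d(1+\eta\mu)/\mu$ and you recover $d/\mu$ only in the limit $\eta \to 0$; and the envelope in fact preserves the minimizer exactly (no shift to control). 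You should also justify integrability of $\inprod{\nabla f(x)}{x-x^*}e^{-f(x)}$, not only the vanishing of boundary terms, but this is immediate after smoothing since the envelope has Lipschitz gradient and the density has sub-Gaussian tails.
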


\subsection{Deferred proofs from Section~\ref{ssec:sharedmin}}

\restatecsgcorrectness*
\begin{proof}
	For $f$ and $g$ with properties as in \eqref{eq:pidef}, with $x^*$ minimizing $f + g$, define the functions 
	\[\tilde{f}(x) \defeq f(x) - \inprod{\nabla f(x^*)}{x},\; \tilde{g}(x) \defeq g(x) + \inprod{\nabla f(x^*)}{x},\]
	and observe that $\tilde{f}+ \tilde{g} = f + g$ everywhere. This proves the first claim. Further, implementation of a first-order oracle for $\tilde{f}$ and a restricted Gaussian oracle for $\tilde{g}$ are immediate assuming a first-order oracle for $f$ and a restricted Gaussian oracle for $g$, showing the second claim; any quadratic shifted by a linear term is the sum of a quadratic and a constant. We now show $\tilde{f}$ and $\tilde{g}$ have the same minimizer. By strong convexity, $\tilde{f}$ has a unique minimizer; first-order optimality shows that
	\[\nabla \tilde{f}(x^*) = \nabla f(x^*) - \nabla f(x^*) = 0,\]
	so this unique minimizer is $x^*$. Moreover, optimality of $x^*$ for $f + g$ implies that for all $x \in \R^d$,
	\[\inprod{\partial g(x^*) + \nabla f(x^*)}{x^* - x} \le 0.\]
	Here, $\partial g$ is a subgradient. This shows first-order optimality of $x^*$ for $\tilde{g}$ also, so $x^*$ minimizes $\tilde{g}$. 
\end{proof}

\subsection{Deferred proofs from Section~\ref{ssec:outerloop}}

\subsubsection{Approximate rejection sampling}
\label{sssec:approxreject}
We first define the rejection sampling framework we will use, and prove various properties.

\begin{definition}[Approximate rejection sampling]
	\label{def:reject_approx}
	Let $\pi$ be a distribution, with $\tfrac{d\pi}{dx}(x) \propto p(x)$. Suppose set $\Omega$ has $\pi(\Omega) = 1 - \eps'$, and distribution $\pih$ with $\frac{d\pih}{dx}(x) \propto \hp(x)$ has for some $C \ge 1$,
	\[\frac{p(x)}{\hp(x)} \le C \text{ for all } x \in \Omega, \text{ and } \frac{\int \hp(x) dx}{\int p(x) dx} \le 1.\]
	Suppose there is an algorithm $\alg$ which draws samples from a distribution $\pih'$, such that $\tvd{\pih'}{\pih} \le 1 - \delta$. We call the following scheme approximate rejection sampling: repeat independent runs of the following procedure until a point is outputted.
	\begin{enumerate}
		\item Draw $x$ via $\alg$ until $x \in \Omega$.
		\item With probability $\tfrac{p(x)}{C\hat{p}(x)}$, output $x$.
	\end{enumerate}
\end{definition}

\begin{lemma}\label{lem:reject_approx_proof}
	Consider an approximate rejection sampling scheme with relevant parameters defined as in Definition~\ref{def:reject_approx}, with $2\delta \le \tfrac{1 - \eps'}{C}$. The algorithm terminates in at most
	\begin{equation}\label{eq:algcalls}\frac{1}{\frac{1 - \eps'}{C} - 2\delta}\end{equation}
	calls to $\alg$ in expectation, and outputs a point from a distribution $\pi'$ with $\tvd{\pi'}{\pi} \le \eps' + \frac{2\delta C}{1 - \eps'}$.
\end{lemma}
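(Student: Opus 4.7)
The plan is to analyze a single ``attempt'' (one draw from $\alg$ followed by an accept/reject test) in isolation and then combine independent attempts. Define
\[g(x) \defeq \1_\Omega(x)\cdot\frac{p(x)}{C\,\hp(x)} \in [0,1],\]
so that, conditional on a sample $x \sim \pih'$, the attempt outputs $x$ with probability $g(x)$. Hence the success probability of an attempt is $Z' \defeq \int \pih'(x)\,g(x)\,dx$, and, conditioned on success, the outputted point has density $P'(x) \defeq \pih'(x)g(x)/Z'$. Introduce the idealized counterparts $Z \defeq \int \pih(x)\,g(x)\,dx$ and $P(x) \defeq \pih(x)\,g(x)/Z$.

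For the call-count bound, I would first lower bound $Z$ using the definition of $\pih$ together with the hypothesis $\int \hp \le \int p$:
\[Z \;=\; \int_\Omega \frac{\hp(x)}{\int \hp}\cdot\frac{p(x)}{C\,\hp(x)}\,dx \;=\; \frac{1}{C}\cdot\frac{\int p}{\int \hp}\cdot\pi(\Omega) \;\ge\; \frac{1-\eps'}{C}.\]
Since $g$ takes values in $[0,1]$, the variational characterization of total variation gives $|Z' - Z| \le 2\tvd{\pih'}{\pih} \le 2\delta$, hence $Z' \ge (1-\eps')/C - 2\delta > 0$ under the stated assumption $2\delta \le (1-\eps')/C$. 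Because the attempts are i.i.d.\ with success probability $Z'$, the number of attempts until the first success is geometric with mean $1/Z'$, yielding the bound \eqref{eq:algcalls}.

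For the distributional bound, the key observation is that the idealized output $P$ is exactly the restriction of $\pi$ to $\Omega$ (renormalized), so $\tvd{P}{\pi} = \eps'$ trivially. It therefore suffices to control $\tvd{P'}{P}$. I would add and subtract $\pih(x)g(x)/Z'$ to decompose
\[P'(x) - P(x) \;=\; \frac{g(x)\bigl(\pih'(x)-\pih(x)\bigr)}{Z'} \;+\; \pih(x)\,g(x)\left(\frac{1}{Z'}-\frac{1}{Z}\right),\]
take absolute values, and integrate. Using $0 \le g \le 1$, $\int \pih g = Z$, $|Z - Z'| \le 2\delta$, and $\int g |\pih' - \pih| \le 2\tvd{\pih'}{\pih} \le 2\delta$, both terms are bounded by $O(\delta/Z)$, which combined with $Z \ge (1-\eps')/C$ yields $\tvd{P'}{P} \le 2\delta C/(1-\eps')$. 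The triangle inequality with $\tvd{P}{\pi} = \eps'$ then gives the claimed bound.

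The only potential obstacle is tracking constants in the two-term decomposition tightly enough to land on exactly $2\delta C/(1-\eps')$ rather than a slightly larger constant multiple; this is resolved by using the variational form of TV for one term and noting that the factor of $1/Z$ from the normalization-constant perturbation is already $\le C/(1-\eps')$. Otherwise, the argument is the standard analysis of rejection sampling, perturbed to account for the $\delta$-error incurred by $\alg$ sampling from $\pih'$ rather than $\pih$.
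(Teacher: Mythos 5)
Your proposal is correct and follows essentially the same route as the paper's proof: lower bound the per-call acceptance probability by $\tfrac{1-\eps'}{C}-2\delta$ and use the geometric expectation, identify the idealized output $P$ as $\pi$ restricted to $\Omega$ (contributing exactly $\eps'$), and control the actual output by a two-term perturbation (density shift $\pih'\to\pih$ plus normalization shift $Z'\to Z$) bounded via $\tvd{\pih'}{\pih}\le\delta$. The only point to tighten is the one you flagged: as written both terms in your decomposition carry $1/Z'$, giving only $2\delta/Z'\le \tfrac{2\delta C}{1-\eps'-2\delta C}$, whereas writing $P'-P=\tfrac{g(\pih'-\pih)}{Z}+g\,\pih'\left(\tfrac{1}{Z'}-\tfrac{1}{Z}\right)$ puts the ideal normalizer $Z\ge\tfrac{1-\eps'}{C}$ in both denominators (since $\int g\,\pih'=Z'$ cancels the stray $Z'$) and yields exactly $\tfrac{2\delta C}{1-\eps'}$, which is effectively how the paper arranges its triangle inequality.
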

\begin{proof}
	Define for notational simplicity normalization constants $Z \defeq \int p(x) dx$ and $\hat{Z} \defeq \int \hp(x) dx$. First, we bound the probability any particular call to $\alg$ returns in the scheme:
	\begin{equation}\label{eq:returnprob}\begin{aligned}\int_{x \in \Omega} \frac{p(x)}{C\hp(x)}d\pih'(x) &\ge \int_{x \in \Omega} \frac{p(x)}{C\hp(x)}d\pih(x) - \left|\int_{x \in \Omega}\frac{p(x)}{C\hp(x)}(d\pih'(x) - d\pih(x))\right|\\
	&= \int_{x \in \Omega} \frac{Z}{C\hat{Z}} d\pi(x) - \left|\int_{x \in \Omega}\frac{p(x)}{C\hp(x)}(d\pih'(x) - d\pih(x))\right|\\
	&\ge \frac{1 - \eps'}{C} - \int_{x \in \Omega}|d\pih'(x) - d\pih(x)| \ge \frac{1 - \eps'}{C} - 2\delta.\end{aligned}\end{equation}
	The second line followed by the definitions of $Z$ and $\hat{Z}$, and the third followed by triangle inequality, the assumed lower bound on $Z/\hat{Z}$, and the total variation distance between $\pih'$ and $\pih$. By linearity of expectation and independence, this proves the first claim.
	
	Next, we claim the output distribution is close in total variation distance to the conditional distribution of $\pi$ restricted to $\Omega$. The derivation of \eqref{eq:returnprob} implies
	\begin{equation}\label{eq:rationotbad}\begin{aligned} \int_{x \in \Omega} \frac{p(x)}{C\hp(x)}d\pih(x)\ge \frac{1 - \eps'}{C},\; \left|\int_{x \in \Omega}\frac{p(x)}{C\hp(x)}(d\pih'(x) - d\pih(x))\right| \le 2\delta,
	\\
	\implies 1 - \frac{2\delta C}{1 - \eps'} \le \frac{\int_{x \in \Omega} \frac{p(x)}{C\hp(x)}d\pih'(x)}{\int_{x \in \Omega} \frac{p(x)}{C\hp(x)}d\pih(x)} \le 1 + \frac{2\delta C}{1 - \eps'}.\end{aligned}\end{equation}
	Thus, the total variation of the true output distribution from $\pi$ restricted to $\Omega$ is
	\begin{align*}
	\half \int_{x \in \Omega} \left|\frac{d\pi(x)}{1 - \eps'} - \frac{\frac{p(x)}{C\hp(x)}d\pih'(x)}{\int_{x \in \Omega} \frac{p(x)}{C\hp(x)}d\pih'(x)}\right| \\
	\le \half \int_{x \in \Omega} \left|\frac{d\pi(x)}{1 - \eps'} - \frac{\frac{p(x)}{C\hp(x)}d\pih'(x)}{\int_{x \in \Omega} \frac{p(x)}{C\hp(x)}d\pih(x)}\right| + \half \int_{x \in \Omega} \left|\frac{\frac{p(x)}{C\hp(x)}d\pih'(x)}{\int_{x \in \Omega} \frac{p(x)}{C\hp(x)}d\pih(x)} - \frac{\frac{p(x)}{C\hp(x)}d\pih'(x)}{\int_{x \in \Omega} \frac{p(x)}{C\hp(x)}d\pih'(x)}\right| \\
	\le \half \int_{x \in \Omega} \left|\frac{d\pi(x)}{1 - \eps'} - \frac{\frac{p(x)}{C\hp(x)}d\pih'(x)}{\int_{x \in \Omega} \frac{p(x)}{C\hp(x)}d\pih(x)}\right|+ \frac{\delta C}{1 - \eps'} = \half \int_{x \in \Omega} \frac{d\pi(x)}{1 - \eps'}\left|1 - \frac{d\pih'}{d\pih}(x) \right| + \frac{\delta C}{1 - \eps'}.
	\end{align*}
	The first inequality was triangle inequality, and we bounded the second term by \eqref{eq:rationotbad}. To obtain the final equality, we used
	\begin{align*}\int_{x \in \Omega} \frac{p(x)}{C\hp(x)}d\pih(x) = \int_{x \in \Omega} \frac{Z}{C\hat{Z}}d\pi(x) = \frac{(1 - \eps')Z}{C\hat{Z}} \\
	\implies \frac{\frac{p(x)}{C\hp(x)}d\pih'(x)}{\int_{x \in \Omega} \frac{p(x)}{C\hp(x)}d\pih(x)} = \frac{p(x)}{Z} \cdot \frac{\hat{Z}}{\hp(x)} \cdot \frac{1}{1 - \eps'} \cdot d\pih'(x) = \frac{d\pi(x)}{1 - \eps'} \cdot \frac{d\pih'}{d\pih}(x).\end{align*}
	We now bound this final term. Observe that the given conditions imply that $\tfrac{d\pi}{d\pih}(x)$ is bounded by $C$ everywhere in $\Omega$. Thus, expanding we have
	\[\half \int_{x \in \Omega} \frac{d\pi(x)}{1 - \eps'}\left|1 - \frac{d\pih'}{d\pih}(x) \right|  \le \frac{C}{2(1 - \eps')} \int_{x \in \Omega} |d\pih(x) - d\pih'(x)| \le \frac{\delta C}{1 - \eps'}.\]
	Finally, combining these guarantees, and the fact that restricting $\pi$ to $\Omega$ loses $\eps'$ in total variation distance, yields the desired conclusion by triangle inequality.
\end{proof}

\begin{corollary}
	\label{corr:unbiased_reject_approx}
	Let $\hat{\theta}(x)$ be an unbiased estimator for $\tfrac{p(x)}{\hp(x)}$, and suppose $\hat{\theta}(x) \le C$ with probability 1 for all $x \in \Omega$. Then, implementing the procedure of Definition~\ref{def:reject_approx} with acceptance probability $\tfrac{\hat{\theta}(x)}{C}$ has the same runtime bound and total variation guarantee as given by Lemma~\ref{lem:reject_approx_proof}.
\end{corollary}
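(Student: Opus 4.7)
The plan is to reduce the randomized acceptance scheme to the deterministic one of Lemma~\ref{lem:reject_approx_proof} via the tower property. In both schemes, after drawing a candidate $x$ via $\alg$ and filtering on $x \in \Omega$, the only quantity entering the analysis is the conditional probability of accepting $x$ given $x$. In the deterministic scheme this conditional probability is $\tfrac{p(x)}{C\hp(x)}$; in the randomized scheme it is $\E[\hat\theta(x)/C \mid x]$, which equals $\tfrac{p(x)}{C\hp(x)}$ exactly by unbiasedness of $\hat\theta$. The hypothesis $\hat\theta(x) \le C$ almost surely is precisely what makes the Bernoulli step well-defined, i.e.\ $\hat\theta(x)/C \in [0,1]$.

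Concretely, for the runtime bound I would re-examine the derivation of \eqref{eq:returnprob}: the per-iteration acceptance probability in the randomized scheme is
\[\int_{x \in \Omega} \E\left[\frac{\hat\theta(x)}{C} \;\bigg|\; x\right] d\pih'(x) = \int_{x \in \Omega} \frac{p(x)}{C\hp(x)} d\pih'(x),\]
which is exactly the quantity bounded in \eqref{eq:returnprob}. Hence the lower bound $\tfrac{1-\eps'}{C} - 2\delta$ on the per-round acceptance probability, and the resulting expected runtime bound \eqref{eq:algcalls}, carry over verbatim.

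For the output distribution, I would similarly observe that the (unnormalized) density of a single-round output at $x \in \Omega$ is
\[\E\left[\frac{\hat\theta(x)}{C} \;\bigg|\; x\right] \frac{d\pih'}{dx}(x) = \frac{p(x)}{C\hp(x)} \cdot \frac{d\pih'}{dx}(x),\]
identical to the deterministic scheme. Since both the per-round acceptance probability and the per-round conditional output density match, the conditional distribution of the final output given termination is the same as the one analyzed in Lemma~\ref{lem:reject_approx_proof}, yielding the same total variation bound against $\pi$. There is no substantive obstacle beyond confirming that the Bernoulli randomization does not introduce extra dependencies across rounds—which it does not, since $\hat\theta$ is drawn freshly for each candidate independently of everything else—and checking the sanity condition $\hat\theta(x)/C \in [0,1]$, which is immediate from the hypothesis.
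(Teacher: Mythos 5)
Your proposal is correct and is essentially the paper's own argument: the paper's proof is the one-line observation that one may take expectations over the randomness of $\hat{\theta}$ throughout the proof of Lemma~\ref{lem:reject_approx_proof}, which is exactly the tower-property reduction you spell out. Your additional checks (that $\hat{\theta}(x)/C \in [0,1]$ on $\Omega$ and that the fresh randomization preserves independence across rounds) are sound but routine elaborations of the same route.
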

\begin{proof}
	It suffices to take expectations over the randomness of $\hat{\theta}$ everywhere in the proof of Lemma~\ref{lem:reject_approx_proof}.
\end{proof}

\subsubsection{Distribution ratio bounds}
\label{sssec:pipih}

We next show two bounds relating the densities of distributions $\pi$ and $\pih$.
We first define the normalization constants of \eqref{eq:pidef}, \eqref{eq:pihdef} for shorthand, and then tightly bound their ratio.
\begin{definition}[Normalization constants]
	\label{def:normconst}
	We denote normalization constants of $\pi$ and $\pih$ by
	\begin{align*}Z_\pi &\defeq \int_x \exp\left(-f(x) - g(x)\right) dx,\\
	Z_{\pih} &\defeq \int_{x, y} \exp\left(-f(y) - g(x) - \frac{1}{2\eta}\norm{y - x}_2^2 - \frac{\eta L^2}{2}\norm{x - x^*}_2^2\right)dxdy.\end{align*}
\end{definition}

\begin{lemma}[Normalization constant bounds]
	\label{lem:normratiobound}
	Let $Z_\pi$ and $Z_{\pih}$ be as in Definition \ref{def:normconst}.
	Then,
	\[\left(\frac{2\pi\eta}{1+\eta L}\right)^{\frac{d}{2}} \left(1+\frac{\eta L^{2}}{\mu}\right)^{-\frac{d}{2}} \le \frac{Z_{\pih}}{Z_{\pi}} \le (2\pi\eta)^{\frac{d}{2}}.\]
\end{lemma}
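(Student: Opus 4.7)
The plan is to carry out the $y$-integration in $Z_{\pih}$ in closed form by tight bounds on $f(y)$, and then analyze the resulting $x$-integrals. Both bounds factor $Z_{\pih}/Z_\pi$ as (Gaussian normalization)$\times$($x$-integral/$Z_\pi$), so the work reduces to sandwiching the inner integral and then handling the remaining quadratic tilt.

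For the upper bound, convexity of $f$ gives $f(y) \ge f(x) + \inprod{\nabla f(x)}{y - x}$. Substituting into the inner $y$-integral, completing the square, and applying Fact~\ref{fact:gaussiandensity} yields
\[\int_y \exp\left(-f(y) - \tfrac{1}{2\eta}\norm{y - x}_2^2\right)\,dy \le \exp\left(-f(x) + \tfrac{\eta}{2}\norm{\nabla f(x)}_2^2\right)(2\pi\eta)^{d/2}.\]
Since we are in the shared-minimizer regime we have $\nabla f(x^*) = 0$, so $L$-smoothness gives $\norm{\nabla f(x)}_2^2 \le L^2\norm{x - x^*}_2^2$, which is exactly what is needed to cancel the $-\tfrac{\eta L^2}{2}\norm{x - x^*}_2^2$ term in the definition of $Z_{\pih}$. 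This yields $Z_{\pih} \le (2\pi\eta)^{d/2} Z_\pi$.

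For the lower bound, I would instead use $L$-smoothness $f(y) \le f(x) + \inprod{\nabla f(x)}{y-x} + \tfrac{L}{2}\norm{y-x}_2^2$. After completing the square, the $y$-integral becomes a Gaussian with covariance $\tfrac{\eta}{1+\eta L}\id$ and carries an extra nonnegative factor $\exp\bigl(\tfrac{\eta}{2(1+\eta L)}\norm{\nabla f(x)}_2^2\bigr)$. Dropping the latter reduces the lower bound to establishing
\[\E_\pi\left[\exp\left(-\tfrac{\eta L^2}{2}\norm{X - x^*}_2^2\right)\right] \ge \left(1 + \tfrac{\eta L^2}{\mu}\right)^{-d/2}.\]

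I expect this last inequality to be the main obstacle, as the naive Jensen bound $\exp(-\tfrac{\eta L^2 d}{2\mu})$ is strictly weaker than the target. The idea is a continuous interpolation: for $t \in [0, 1]$ define $Z(t) \defeq \int \exp\bigl(-f(x) - g(x) - \tfrac{t\eta L^2}{2}\norm{x - x^*}_2^2\bigr)\,dx$, and observe that $\tfrac{d}{dt}\log Z(t) = -\tfrac{\eta L^2}{2}\E_{\pi_t}\bigl[\norm{X - x^*}_2^2\bigr]$ for the tilted density $\pi_t \propto \exp(-f - g - \tfrac{t\eta L^2}{2}\norm{\cdot - x^*}_2^2)$. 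Since $x^*$ minimizes both $f + g$ and the added quadratic, $x^*$ is the mode of $\pi_t$, and $\pi_t$ is $(\mu + t\eta L^2)$-strongly logconcave; Fact~\ref{fact:distxstar} then gives $\E_{\pi_t}[\norm{X - x^*}_2^2] \le d/(\mu + t\eta L^2)$. Integrating this differential inequality from $0$ to $1$ produces $\log(Z(1)/Z(0)) \ge -\tfrac{d}{2}\log(1 + \eta L^2/\mu)$, which combined with the $y$-integral calculation gives exactly the claimed lower bound.
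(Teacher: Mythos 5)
Your proof is correct and follows essentially the same route as the paper: convexity and smoothness to sandwich the inner $y$-integral by Gaussians, and a continuous-interpolation differential inequality (using Fact~\ref{fact:distxstar} on the tilted densities) for the remaining $x$-integral. The paper packages that last step as a standalone structural bound (Proposition~\ref{prop:normalizationratio}, parameterized over $\alpha \in [1,\infty)$ and sent to infinity), whereas you inline it with $t \in [0,1]$; the two are equivalent.
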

\begin{proof}
	For each $x$, by convexity we have
	\begin{equation}
	\label{eq:usefulbound}
	\begin{aligned}\int_y \exp\left(-f(y) - g(x) - \frac{1}{2\eta}\norm{y - x}_2^2 - \frac{\eta L^2}{2}\norm{x - x^*}_2^2\right)dy \\
	\le \exp\left(-g(x) - \frac{\eta L^2}{2}\norm{x - x^*}_2^2\right)\int_y\exp\left(-f(x)- \inprod{\nabla f(x)}{y - x} - \frac{1}{2\eta}\norm{y - x}_2^2\right)dy\\
	= \exp\left(-f(x) - g(x) - \frac{\eta L^2}{2}\norm{x - x^*}_2^2\right)\int_y \exp\left(\frac{\eta}{2}\norm{\nabla f(x)}_2^2 - \frac{1}{2\eta}\norm{y - x + \eta \nabla f(x)}_2^2\right) dy \\
	=(2\pi\eta)^{\frac{d}{2}}\exp\left(-f(x) - g(x)\right)\exp\left(\frac{\eta}{2}\norm{\nabla f(x)}_2^2 - \frac{\eta L^2}{2}\norm{x - x^*}_2^2\right) \\
	\le (2\pi\eta)^{\frac{d}{2}}\exp\left(-f(x) - g(x)\right).\end{aligned}
	\end{equation}
	Integrating both sides over $x$ yields the upper bound on $\tfrac{Z_{\pih}}{Z_{\pi}}$. Next, for the lower bound we have a similar derivation. For each $x$, by smoothness
	\begin{align*}
	\int_{y}\exp\left(-f(y)-g(x)-\frac{1}{2\eta}\left\Vert y-x\right\Vert_2 ^{2}-\frac{\eta L^{2}}{2}\left\Vert x-x^{*}\right\Vert_2^{2}\right)dy\\
	\geq \exp\left(-f(x)-g(x)-\frac{\eta L^{2}}{2}\left\Vert x-x^{*}\right\Vert_2^{2}\right)\int_{y}\exp\left(\left\langle \nabla f(x),x-y\right\rangle -\frac{1+\eta L}{2\eta}\left\Vert y-x\right\Vert_2^{2}\right)dy\\
	= \exp\left(-f(x)-g(x)-\frac{\eta L^{2}}{2}\left\Vert x-x^{*}\right\Vert ^{2} + \frac{\eta}{2(1+\eta L)}\left\Vert \nabla f(x)\right\Vert ^{2}\right)\left(\frac{2\pi\eta}{1+\eta L}\right)^{\frac{d}{2}}\\
	\geq  \exp\left(-f(x)-g(x)-\frac{\eta L^{2}}{2}\left\Vert x-x^{*}\right\Vert_2 ^{2}\right)\left(\frac{2\pi\eta}{1+\eta L}\right)^{\frac{d}{2}}.
	\end{align*}
	Integrating both sides over $x$ yields
	\begin{align*}
	\frac{Z_{\pih}}{Z_\pi} \ge \left(\frac{2\pi\eta}{1+\eta L}\right)^{\frac{d}{2}} \frac{\int_{x}\exp\left(-f(x)-g(x)-\frac{\eta L^{2}}{2}\left\Vert x-x^{*}\right\Vert_2 ^{2}\right)dx}{\int_{x}\exp\left(-f(x)-g(x)\right)dx}
	\geq \left(\frac{2\pi\eta}{1+\eta L}\right)^{\frac{d}{2}} \left(1+\frac{\eta L^{2}}{\mu}\right)^{-\frac{d}{2}}.
	\end{align*}
	The last inequality followed from Proposition~\ref{prop:normalizationratio}, where we used $f + g$ is $\mu$-strongly convex.
\end{proof}

\begin{lemma}[Relative density bounds]
	\label{lem:densityratio}
	Let $\eta = \tfrac{1}{32L\kappa d\log(288\kappa/\eps)}$. For all $x \in \Omega$, as defined in \eqref{eq:omegadef},  $\frac{d\pi}{d\pih}(x) \le 2$. Here, $\tfrac{d\pih}{dx}(x)$ denotes the marginal density of $\pih$. Moreover, for all $x \in \R^d$, $\frac{d\pi}{d\pih}(x) \ge \thalf$.
\end{lemma}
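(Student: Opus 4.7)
The plan is to reduce the computation of $\frac{d\pi}{d\pih}(x)$ to the per-$x$ Gaussian integral $I(x) \defeq \int_y \exp(-f(y) - g(x) - \frac{1}{2\eta}\|y-x\|_2^2 - \frac{\eta L^2}{2}\|x - x^*\|_2^2)\,dy$ that already appears inside the proof of Lemma~\ref{lem:normratiobound}. By definition $\frac{d\pih}{dx}(x) = I(x)/Z_{\pih}$ and $\frac{d\pi}{dx}(x) = \exp(-f(x)-g(x))/Z_\pi$, so
\begin{equation*}
\frac{d\pi}{d\pih}(x) \;=\; \frac{Z_{\pih}}{Z_\pi}\cdot \frac{\exp(-f(x)-g(x))}{I(x)}.
\end{equation*}
Both factors have already been bounded in the excerpt: the display~\eqref{eq:usefulbound} from Lemma~\ref{lem:normratiobound}'s proof gives $I(x) \le (2\pi\eta)^{d/2}\exp(-f(x)-g(x))$ by convexity, and the matching smoothness computation in the same proof gives $I(x) \ge \bigl(\tfrac{2\pi\eta}{1+\eta L}\bigr)^{d/2}\exp\!\bigl(-f(x)-g(x)-\tfrac{\eta L^2}{2}\|x-x^*\|_2^2 + \tfrac{\eta}{2(1+\eta L)}\|\nabla f(x)\|_2^2\bigr)$. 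Coupled with Lemma~\ref{lem:normratiobound} itself this gives two-sided control.

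For the upper bound I combine the upper bound $Z_{\pih}/Z_\pi \le (2\pi\eta)^{d/2}$ with the lower bound on $I(x)$, obtaining after cancellation
\begin{equation*}
\frac{d\pi}{d\pih}(x)\;\le\;(1+\eta L)^{d/2}\,\exp\!\left(\tfrac{\eta L^2}{2}\|x-x^*\|_2^2 - \tfrac{\eta}{2(1+\eta L)}\|\nabla f(x)\|_2^2\right)\;\le\;(1+\eta L)^{d/2}\,\exp\!\left(\tfrac{\eta L^2}{2}\|x-x^*\|_2^2\right),
\end{equation*}
since the gradient term is nonpositive. Then I plug in the definition of $\Omega$ and of $\eta = (32 L\kappa d\log(288\kappa/\eps))^{-1}$. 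Concretely, for $x \in \Omega$ we have $\|x-x^*\|_2^2 \le 16 d\log(288\kappa/\eps)/\mu$, which makes the exponent at most $\tfrac{1}{4}$; and $(1+\eta L)^{d/2} \le \exp(\eta L d/2) = \exp(1/(64\kappa\log(288\kappa/\eps)))$, which is negligibly above $1$. The product is safely below $e^{1/4}\cdot e^{1/64} < 2$.

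For the lower bound I swap the roles: using the lower bound on $Z_{\pih}/Z_\pi$ from Lemma~\ref{lem:normratiobound} together with the upper bound $I(x) \le (2\pi\eta)^{d/2}\exp(-f(x)-g(x))$, the $f,g$ factors cancel and I am left with
\begin{equation*}
\frac{d\pi}{d\pih}(x)\;\ge\;(1+\eta L)^{-d/2}\bigl(1+\eta L^2/\mu\bigr)^{-d/2}.
\end{equation*}
Crucially this bound is independent of $x$, which is why the lower bound holds globally rather than only on $\Omega$. With the chosen $\eta$, $\eta L d = 1/(32\kappa\log(288\kappa/\eps))$ and $\eta L^2 d/\mu = 1/(32\log(288\kappa/\eps))$, so each factor is at least $\exp(-\text{const}/\log(288\kappa/\eps))$, which is clearly $\ge (1/2)^{1/2}$ and the product is $\ge \thalf$. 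No step looks delicate — all the ``real'' analytic work (the convexity/smoothness envelopes on $I(x)$ and the $\mu$-strong convexity bound behind $Z_{\pih}/Z_\pi$) has been done in Lemma~\ref{lem:normratiobound}; here it is only bookkeeping to track constants, and the only place the radius of $\Omega$ is used is in the upper bound where the exponential $\exp(\tfrac{\eta L^2}{2}\|x-x^*\|_2^2)$ must be tamed. That tuning of the constant $4$ in the definition of $\Omega$ against the constant $32$ in $\eta$ is the lone thing to check carefully.
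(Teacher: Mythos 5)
Your proposal is correct and follows essentially the same route as the paper: both decompose $\tfrac{d\pi}{d\pih}(x) = \tfrac{Z_{\pih}}{Z_\pi}\cdot\tfrac{\exp(-f(x)-g(x))}{I(x)}$, bound $I(x)$ above and below via \eqref{eq:usefulbound} and the matching smoothness computation, and combine with the two sides of Lemma~\ref{lem:normratiobound}; the only cosmetic difference is that the paper re-derives the lower envelope on $I(x)$ inline rather than citing it, and factors the upper bound as $\tfrac{4}{3}(1+\eta L)^{d/2}$ instead of tracking the exponent directly. Your arithmetic check (exponent $\le \tfrac14$ on $\Omega$ for the upper bound; $\eta Ld$ and $\eta L^2 d/\mu$ both $O(1/\log(288\kappa/\eps))$ for the lower bound) matches the paper's.
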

\begin{proof}
	We first show the upper bound. By Lemma~\ref{lem:normratiobound},
	\begin{equation}\label{eq:upperpartial}
	\begin{aligned}\frac{d\pi}{d\pih}(x) &= \frac{\exp\left(-f(x) - g(x)\right)}{\int_y\exp\left(-f(y) - g(x) - \frac{1}{2\eta}\norm{y - x}_2^2 - \frac{\eta L^2}{2}\norm{x - x^*}_2^2\right) dy} \cdot \frac{Z_{\pih}}{Z_\pi} \\
	&\le \frac{\exp\left(-f(x) - g(x)\right)}{\int_y\exp\left(-f(y) - g(x) - \frac{1}{2\eta}\norm{y - x}_2^2 - \frac{\eta L^2}{2}\norm{x - x^*}_2^2\right) dy} \cdot (2\pi\eta)^{\frac{d}{2}}.\end{aligned}\end{equation}
	We now bound the first term, for $x \in \Omega$. By smoothness, we have
	\[\frac{\exp\left(-f(y) - g(x)\right)}{\exp\left(-f(x) - g(x)\right)} \ge \exp\left(\inprod{\nabla f(x)}{x - y} - \frac{L}{2}\norm{y - x}_2^2\right), \]
	so applying this for each $y$,
	\begin{align*}
	\frac{\int_y \exp\left(-f(y) - g(x) - \frac{1}{2\eta}\norm{y - x}_2^2-\frac{\eta L^2}{2}\norm{x - x^*}_2^2\right)dy}{\exp\left(-f(x) - g(x)\right)} \\
	\ge \exp\left(-\frac{\eta L^2}{2}\norm{x - x^*}_2^2\right)\int_y \exp\left(\inprod{\nabla f(x)}{x - y} - \frac{1 + \eta L}{2\eta}\norm{y - x}_2^2\right)dy \\
	= \exp\left(-\frac{\eta L^2}{2}\norm{x - x^*}_2^2 + \frac{\eta}{2(1 + \eta L)}\norm{\nabla f(x)}_2^2\right)\int_y \exp\left(-\frac{1 + \eta L}{2\eta}\norm{x - y - \frac{\eta}{1 + \eta L}\nabla f(x)}_2^2\right)dy \\
	\ge \exp\left(-\frac{\eta L^2}{2}\cdot \frac{16d\log(288\kappa/\eps)}{\mu}\right)\left(\frac{2\pi\eta}{1 + \eta L}\right)^{\frac{d}{2}} \ge \frac{3}{4}\left(\frac{2\pi\eta}{1 + \eta L}\right)^{\frac{d}{2}}.
	\end{align*}
	In the last line, we used that $x \in \Omega$ implies $\norm{x - x^*}_2^2 \le \tfrac{16d\log(288\kappa/\eps)}{\mu}$, and the definition of $\eta$. Combining this bound with \eqref{eq:upperpartial}, we have the desired
	\[\frac{d\pi}{d\pih}(x) \le \frac{4}{3}\left(1 + \eta L\right)^{\frac{d}{2}} \le 2.\] Next, we consider the lower bound. By combining \eqref{eq:usefulbound} with Lemma~\ref{lem:normratiobound}, we have the desired
	\begin{align*}\frac{d\pi}{d\pih}(x) &= \frac{\exp\left(-f(x) - g(x)\right)}{\int_y\exp\left(-f(y) - g(x) - \frac{1}{2\eta}\norm{y - x}_2^2 - \frac{\eta L^2}{2}\norm{x - x^*}_2^2\right) dy} \cdot \frac{Z_{\pih}}{Z_\pi} \\
	&\ge (2\pi\eta)^{-\frac{d}{2}} \cdot \left(\frac{2\pi\eta}{1+\eta L}\right)^{\frac{d}{2}} \left(1+\frac{\eta L^{2}}{\mu}\right)^{-\frac{d}{2}} = \left(\frac{1}{1 + \eta L}\right)^{\frac{d}{2}}\left(1 + \eta L \kappa\right)^{-\frac{d}{2}} \ge \half.\end{align*}
\end{proof}

\subsubsection{Correctness of $\cssm$}
\restatecssmcorrectness*
\begin{proof}
	We remark that $\eta = \tfrac{1}{32 L\kappa d\log(288\kappa/\eps)}$ is precisely the choice of $\eta$ in $\sjd$ where $\delta = \eps/18$, as in $\cssm$. First, we may apply Fact~\ref{fact:rbound} to conclude that the measure of set $\Omega$ with respect to the $\mu$-strongly logconcave density $\pi$ is at least $1 - \eps/3$. The conclusion of correctness will follow from an appeal to Corollary~\ref{corr:unbiased_reject_approx}, with parameters
	\[C = 4,\; \eps' = \frac{\eps}{3},\; \delta = \frac{\eps}{18}.\]
	Note that indeed we have $\eps' + \tfrac{2\delta C}{1 - \eps'}$ is bounded by $\eps$, as $1 - \eps' \ge \tfrac{2}{3}$. Moreover, the expected number of calls \eqref{eq:algcalls} is clearly bounded by a constant as well. 
	
	We now show that these parameters satisfy the requirements of Corollary~\ref{corr:unbiased_reject_approx}. Define the functions
	\begin{align*}p(x) &\defeq \exp(-f(x) - g(x)),\\ \hp(x) &\defeq (2\pi\eta)^{-\frac{d}{2}}\int_y \exp\left(-f(y) - g(x) - \frac{1}{2\eta}\norm{y - x}_2^2 - \frac{\eta L^2}{2}\norm{x - x^*}_2^2\right)dy,\end{align*}
	and observe that clearly the densities of $\pi$ and $\pih$ are respectively proportional to $p$ and $\hp$. Moreover, define $Z = \int p(x) dx$ and $\hat{Z} = \int \hp(x) dx$. By comparing these definitions with Lemma~\ref{lem:normratiobound}, we have $Z = Z_\pi$ and $\hat{Z} = (2\pi\eta)^{-\frac{d}{2}}Z_{\pih}$, so by the upper bound in Lemma~\ref{lem:normratiobound}, $\hat{Z}/Z \le 1$. Next, we claim that the following procedure produces an unbiased estimator for $\tfrac{p(x)}{\hp(x)}$.
	\begin{enumerate}
		\item Sample $y \sim \pi_x$, where $\tfrac{d\pi_x(y)}{dy} \propto \exp\left(-f(y) - \tfrac{1}{2\eta}\norm{y - x}_2^2\right)$
		\item $\alpha \gets \exp\left(f(y) - \inprod{\nabla f(x)}{y - x} - \tfrac{L}{2}\norm{y - x}_2^2 + g(x) + \tfrac{\eta L^2}{2}\norm{x - x^*}_2^2\right)$
		\item Output $\hat{\theta}(x) \gets \exp\left(-f(x) - g(x) + \tfrac{\eta}{2(1 + \eta L)}\norm{\nabla f(x)}_2^2\right)(1 + \eta L)^{\frac{d}{2}}\alpha$
	\end{enumerate}
	To prove correctness of this estimator $\hat{\theta}$, define for simplicity
	\[Z_x \defeq \int_y \exp\left(-f(y) - g(x) - \frac{1}{2\eta}\norm{y - x}_2^2 - \frac{\eta L^2}{2}\norm{x - x^*}_2^2\right)dy.\]
	We compute, using $\tfrac{d\pi_x(y)}{dy} = \tfrac{\exp(-f(y) - g(x) - \frac{1}{2\eta}\norm{y - x}_2^2 - \frac{\eta L^2}{2}\norm{x - x^*}_2^2)}{Z_x}$, that
	\begin{align*}
	\E_{\pi_x}\left[\alpha\right] &= \int_y \exp\left(f(y) - \inprod{\nabla f(x)}{y - x} - \frac{L}{2}\norm{y - x}_2^2 + g(x) + \frac{\eta L^2}{2}\norm{x - x^*}_2^2\right)d\pi_x(y) \\
	&= \frac{1}{Z_x} \int_y \exp\left(- \inprod{\nabla f(x)}{y - x} - \frac{L}{2}\norm{y - x}_2^2 - \frac{1}{2\eta}\norm{y - x}_2^2\right) dy \\
	&= \frac{1}{Z_x}\exp\left(-\frac{\eta}{2(1 + \eta L)}\norm{\nabla f(x)}_2^2\right)\left(\frac{2\pi \eta}{1 + \eta L}\right)^{\frac{d}{2}}.
	\end{align*}
	This implies that the output quantity
	\[\hat{\theta}(x) = \exp\left(-f(x) - g(x) + \frac{\eta}{2(1 + \eta L)}\norm{\nabla f(x)}_2^2\right)(1 + \eta L)^{\frac{d}{2}}\alpha\]
	is unbiased for $\tfrac{p(x)}{\hp(x)} = \exp(-f(x) - g(x)) Z_x^{-1}(2\pi\eta)^{\frac{d}{2}}$. Finally, note that for any $y$ used in the definition of $\hat{\theta}(x)$, by using $f(y) - f(x) - \inprod{\nabla f(x)}{y - x} - \tfrac{L}{2}\norm{y - x}_2^2 \le 0$ via smoothness, we have
	\begin{align*}\hat{\theta}(x) &=  \exp\left(-f(x) - g(x) + \frac{\eta}{2(1 + \eta L)}\norm{\nabla f(x)}_2^2\right)(1 + \eta L)^{\frac{d}{2}}\alpha \\
	&\le (1 + \eta L)^{\frac{d}{2}}\exp\left(\frac{\eta}{2(1 + \eta L)}\norm{\nabla f(x)}_2^2 + \frac{\eta L^2}{2}\norm{x - x^*}_2^2\right) \\
	&\le (1 + \eta L)^{\frac{d}{2}}\exp\left(\eta L^2\norm{x - x^*}_2^2\right) \le 4.
	\end{align*}
	Here, we used the definition of $\eta$ and $L^2\norm{x - x^*}_2^2 \le 16L\kappa d\log(288\kappa/\eps)$ by the definition of $\Omega$. 
\end{proof}

\subsection{Deferred proofs from Section~\ref{ssec:alternate}}

Throughout this section, for error tolerance $\delta \in [0, 1]$ which parameterizes $\sjd$, we denote for shorthand a high-probability region $\Omega_\delta$ and its radius $R_\delta$ by
\begin{equation}\label{eq:omegadelta}\Omega_\delta \defeq \left\{x \mid \norm{x - x^*}_2 \le R_\delta\right\},\text{ for } R_\delta \defeq 4\sqrt{\frac{d\log(16\kappa/\delta)}{\mu}}.\end{equation}
The following density ratio bounds hold within this region, by simply modifying Lemma~\ref{lem:densityratio}.
\begin{corollary}
	\label{corr:densityratiodelta}
	Let $\eta = \tfrac{1}{32L\kappa d\log(16\kappa/\delta)}$, and let $\pih$ be parameterized by this choice of $\eta$ in \eqref{eq:pihdef}. For all $x \in \Omega_\delta$, as defined in \eqref{eq:omegadelta}, $\frac{d\pi}{d\pih}(x) \le 2$. Moreover, for all $x \in \R^d$, $\frac{d\pi}{d\pih}(x) \ge \half$.
\end{corollary}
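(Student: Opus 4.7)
The plan is to mirror the proof of Lemma~\ref{lem:densityratio} almost verbatim, with the parameter $\eps$ replaced by $\delta$ and the radius $R_\delta$ of the new high-probability region substituted for the old bound on $\|x - x^*\|_2$. The only substantive checks are that the constants absorbed into ``$\le 2$'' and ``$\ge \thalf$'' still hold under the new choice $\eta = 1/(32 L \kappa d \log(16\kappa/\delta))$.

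First I would handle the upper bound. Writing $d\pi/d\pih(x) = (\exp(-f(x) - g(x))/Z_{x,\pih}) \cdot (Z_{\pih}/Z_\pi)$ where $Z_{x,\pih}$ is the $y$-integral appearing in the definition of $\pih$'s $x$-marginal density, I would apply the upper bound from Lemma~\ref{lem:normratiobound}, giving the factor $(2\pi\eta)^{d/2}$. To lower bound $Z_{x,\pih}$ for $x \in \Omega_\delta$, I would use $L$-smoothness of $f$ (which gives $f(y) \le f(x) + \inprod{\nabla f(x)}{y-x} + \tfrac{L}{2}\|y-x\|_2^2$), complete the square in $y$, and integrate out the Gaussian to pick up $(2\pi\eta/(1 + \eta L))^{d/2}$ together with the exponential factors $\exp(-\tfrac{\eta L^2}{2}\|x - x^*\|_2^2 + \tfrac{\eta}{2(1+\eta L)}\|\nabla f(x)\|_2^2)$. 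The gradient term is nonnegative and is discarded; the penalty term is controlled by plugging in $\|x - x^*\|_2^2 \le R_\delta^2 = 16 d \log(16\kappa/\delta)/\mu$ and the new $\eta$, which gives $\eta L^2 \|x - x^*\|_2^2/2 \le \tfrac{1}{4}$, so $\exp$ of this is bounded by a constant $\le 4/3$. The result is $d\pi/d\pih(x) \le \tfrac{4}{3}(1 + \eta L)^{d/2}$, and since $\eta L = 1/(32 \kappa d \log(16\kappa/\delta)) \ll 1/d$, this is $\le 2$.

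For the universal lower bound I would proceed dually: use convexity, $f(y) \ge f(x) + \inprod{\nabla f(x)}{y - x}$, to upper bound $Z_{x,\pih}$ by $(2\pi\eta)^{d/2} \exp(-f(x) - g(x))\exp(\tfrac{\eta}{2}\|\nabla f(x)\|_2^2 - \tfrac{\eta L^2}{2}\|x-x^*\|_2^2)$; the last exponent is nonpositive since $\|\nabla f(x)\|_2 \le L\|x - x^*\|_2$ (because $\nabla f(x^*) = 0$ and $f$ is $L$-smooth). Combining with the lower bound on $Z_{\pih}/Z_\pi$ from Lemma~\ref{lem:normratiobound} yields $d\pi/d\pih(x) \ge (1+\eta L)^{-d/2}(1 + \eta L \kappa)^{-d/2}$; since $\eta L \kappa = 1/(32 d \log(16\kappa/\delta)) \le 1/d$ (and similarly for $\eta L$), both factors are bounded below by $\thalf$-friendly constants, giving the claim.

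The only potential obstacle is the use of $\nabla f(x^*) = 0$ in the universal lower bound, which is guaranteed here because $\sjd$ operates under the shared-minimizer reduction (so $\nabla f(x^*) = \nabla g(x^*) + \nabla f(x^*) - \nabla g(x^*) = 0$ for the shifted $\tilde f$, equivalently one may just invoke Proposition~\ref{prop:csgcorrectness} telling us $f$ is minimized at $x^*$). With this in hand every other step is bookkeeping on the constants induced by the new choice of $\eta$.
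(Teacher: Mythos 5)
Your proposal is correct and matches the paper's intent exactly: the paper proves Corollary~\ref{corr:densityratiodelta} by stating it follows ``by simply modifying Lemma~\ref{lem:densityratio},'' which is precisely the substitution $\eps \mapsto \delta$ and constant bookkeeping you carry out, including the (correct) observation that the universal lower bound relies on $\nabla f(x^*) = 0$, guaranteed by the shared-minimizer reduction under which $\sjd$ operates.
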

The following claim follows immediately from applying Fact~\ref{fact:rbound}.
\begin{lemma}\label{lem:omegahighprob}
	With probability at least $1 - \tfrac{\delta^2}{8(1 + \kappa)^d}$, $x \sim \pih$ lies in $\Omega_\delta$.
\end{lemma}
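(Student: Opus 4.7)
The plan is to combine the pointwise density comparison from Corollary~\ref{corr:densityratiodelta} with the concentration tail bound for strongly logconcave densities in Fact~\ref{fact:rbound}. Since the set $\Omega_\delta$ only constrains the $x$-coordinate, the statement is really a claim about the $x$-marginal of $\pih$, which is the object that Corollary~\ref{corr:densityratiodelta} discusses.

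First, I would invoke the second half of Corollary~\ref{corr:densityratiodelta}, which asserts $\frac{d\pi}{d\pih}(x) \ge \thalf$ on all of $\R^d$. Reading this as a marginal density inequality, it says $d\pih/dx \le 2\, d\pi/dx$ pointwise, hence for every measurable $A \subseteq \R^d$,
\[
\pih(A) \;\le\; 2\,\pi(A).
\]
Applying this with $A = \Omega_\delta^c$ reduces the lemma to showing $\pi(\Omega_\delta^c) \le \frac{\delta^2}{16(1+\kappa)^d}$.

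Next, I would apply Fact~\ref{fact:rbound} to $\pi$, which is $\mu$-strongly logconcave with minimizer $x^*$. Choosing failure probability $\delta'' \defeq \frac{\delta^2}{16(1+\kappa)^d}$, the fact furnishes a radius
\[
r \;\le\; 2\sqrt{d/\mu} + 2\sqrt{\log(1/\delta'')/\mu}
\]
outside of which $\pi$ places mass at most $\delta''$. It remains to verify that this $r$ is bounded by $R_\delta = 4\sqrt{d\log(16\kappa/\delta)/\mu}$. Plugging in the value of $\delta''$, this reduces to the inequality $\log(1/\delta'') \le d\log(16\kappa/\delta)$, i.e.\ $2\log(1/\delta) + 4\log 2 + d\log(1+\kappa) \le d\log(16\kappa/\delta)$, which is a short (if slightly fiddly) algebraic check using $d \ge 1$, $\kappa \ge 1$.

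The only mildly delicate step is this final radius verification, which I expect to be the main obstacle in the sense that one has to keep careful track of the constants so that the radius $R_\delta$ chosen in~\eqref{eq:omegadelta} is indeed large enough to absorb the $(1+\kappa)^d$ overhead introduced by going to such a small failure probability. Once that is in hand, chaining the two bounds yields $\pih(\Omega_\delta^c) \le 2\delta'' = \frac{\delta^2}{8(1+\kappa)^d}$, which is exactly the claim.
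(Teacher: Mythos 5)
Your approach is correct and, in fact, a bit more careful than the paper's one-line justification. The paper simply says the claim ``follows immediately from applying Fact~\ref{fact:rbound},'' which, read literally, means applying the tail bound to the $x$-marginal of $\pih$. Doing that directly requires two sub-observations that are not entirely free: that the $x$-marginal of $\pih$ is $\mu$-strongly logconcave (true, by a Brascamp--Lieb argument: the strong convexity of the Gibbs potential $-\log \int e^{-f(y)-\norm{y-x}_2^2/2\eta}\,dy$ plus the explicit $\eta L^2$ quadratic term together recover $\mu$), and that its negative log-density is minimized at $x^*$ (actually only approximately true, since the marginal's mode is displaced from $x^*$ by roughly the distance between $x^*$ and $\E_{\pi_{x^*}}[y]$). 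Your route sidesteps both issues: you transfer to $\pi$ via the pointwise bound $d\pih/dx \le 2\,d\pi/dx$ from Corollary~\ref{corr:densityratiodelta}, and $\pi$ is unambiguously $\mu$-strongly logconcave with mode exactly $x^*$. The only cost is that the factor of $2$ must be absorbed by tightening the target probability to $\delta'' = \delta^2/(16(1+\kappa)^d)$, which your final check handles.

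Two small caveats. First, your intermediate bound $r \le 2\sqrt{d/\mu}+2\sqrt{\log(1/\delta'')/\mu}$ only reads off Fact~\ref{fact:rbound} when $\log(1/\delta'')\ge d$; in the complementary regime the fourth-root branch gives $r\le 4\sqrt{d/\mu}$, which is still $\le R_\delta$ since $\log(16\kappa/\delta)\ge 1$, so the conclusion is unaffected, but you should split into the two cases rather than quote the linear-in-$\sqrt{\log}$ form unconditionally. Second, the ``fiddly algebraic check'' $\log(1/\delta'')\le d\log(16\kappa/\delta)$ genuinely fails for $d=1$ and $\delta<1$ (you would need $\delta\ge 2$); it goes through cleanly for $d\ge 2$. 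This constant-level slack for $d=1$ is present in the paper's own constants as well and is immaterial to the asymptotic result, but it is worth flagging explicitly rather than treating as routine.
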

Finally, when clear from context, we overload $\pih$ as a distribution on $x\in\R^d$ to be the $x$ component marginal of the distribution \eqref{eq:pihdef}, i.e. with density
\[ \frac{d\pih}{dx}(x) \propto \int_y\exp\left(-f(y) - g(x) - \frac{1}{2\eta}\norm{y - x}_2^2 - \frac{\eta L^2}{2}\norm{x - x^*}_2^2\right) dy.
\]
In Section~\ref{sssec:sjdcorrect}, we show $\pih$ is stationary for $\sjd$. In Section~\ref{sssec:sjdconduct}, we bound the \emph{conductance} of the walk, used in Section~\ref{sssec:runtime} to bound its mixing time and overall complexity. 

\subsubsection{Correctness of $\sjd$}
\label{sssec:sjdcorrect}
Correctness of $\sjd$ follows from the following simple lemma.

\begin{lemma}[Alternating marginal sampling]
	\label{lem:alternate_exact}
	Let $\pih$ be a density on two blocks $(x, y)$. Sample $(x, y) \sim \pih$, and then sample $\tx \sim \pih(\cdot, y)$, $\ty \sim \pih(\tx, \cdot)$. Then, the distribution of $(\tx, \ty)$ is $\pih$.
\end{lemma}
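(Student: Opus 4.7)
The plan is to verify, using only the chain rule for densities, that each of the two intermediate moves in the procedure preserves $\pih$. Denote by $\pih_X$ and $\pih_Y$ the two marginals of $\pih$, and by $\pih(x \mid y) = \pih(x,y)/\pih_Y(y)$ and $\pih(y \mid x) = \pih(x,y)/\pih_X(x)$ the corresponding conditional densities (which I take to be the meaning of $\pih(\cdot, y)$ and $\pih(\tx, \cdot)$ in the lemma statement). These satisfy the product identity $\pih(x,y) = \pih_Y(y)\pih(x \mid y) = \pih_X(x)\pih(y \mid x)$, which is the only fact I will need.

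For the first move: since $(x,y) \sim \pih$, the marginal distribution of $y$ is $\pih_Y$, and the algorithm then draws $\tx$ from the conditional $\pih(\cdot \mid y)$. Hence the joint density of the pair $(\tx, y)$ equals $\pih_Y(y)\pih(\tx \mid y) = \pih(\tx, y)$, so $(\tx, y) \sim \pih$. In particular, marginalizing out $y$, the distribution of $\tx$ alone is $\pih_X$.

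For the second move the roles of the two coordinates are swapped: since $\tx \sim \pih_X$ and $\ty$ is drawn from $\pih(\cdot \mid \tx)$, the joint density of $(\tx, \ty)$ is $\pih_X(\tx)\pih(\ty \mid \tx) = \pih(\tx, \ty)$, i.e.\ $(\tx, \ty) \sim \pih$, as claimed. There is essentially no genuine obstacle: the lemma is a direct consequence of the definition of conditional density, and the only care required is to keep straight which variable is being integrated out at each stage. No regularity issue arises, since the joint density in \eqref{eq:pihdef} is strictly positive and smooth on $\R^d \times \R^d$, so both conditionals are unambiguously defined.
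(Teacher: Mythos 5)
Your proof is correct and follows essentially the same route as the paper's: it writes the joint density of $(\tx, y)$ as the product of the $y$-marginal and the conditional $\pih(\cdot \mid y)$, identifies this with $\pih$, and then applies the symmetric argument to the second coordinate. Your version simply makes the notation for the marginals and conditionals explicit, which is a fine (slightly more detailed) rendering of the same argument.
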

\begin{proof}
	The density of the resulting distribution at $(\tx, y)$ is proportional to the product of the (marginal) density at $y$ and the conditional distribution of $\tx \mid y$, which by definition is $\pih$. Therefore, $(\tx, y)$ is distributed as $\pih$, and the argument for $\ty$ follows symmetrically.
\end{proof}

\subsubsection{Conductance of $\sjd$}
\label{sssec:sjdconduct}
We bound the conductance of this random walk, as a process on the iterates $\{x_k\}$, to show the final point has distribution close to the marginal of $\pih$ on $x$. We use the well-known framework of bounding mixing time via \emph{average conductance}, introduced in \cite{LovaszK99}, and since extended by e.g.\ \cite{KannanLM06, GoelMT06, ChenDWY19}. We state a formulation by \cite{ChenDWY19} convenient for our purposes.

\begin{definition}[Restricted conductance]
	Let a random walk with stationary distribution $\pih$ on $x \in \R^d$ have transition densities $\tran_x$, and let $\Omega \subseteq \R^d$. The $\Omega$-restricted conductance, for $v \in (0, \thalf\pih(\Omega))$, is
	\[\Phi_{\Omega}(v) = \inf_{\pih(S \cap \Omega)\in(0, v]} \frac{\tran_S(S^c)}{\pih(S \cap \Omega)}, \text{ where } \tran_S(S^c) \defeq \int_{x \in S}\int_{x'\in S^c}\tran_x(x') d\pih(x)dx'.\]
\end{definition}

\begin{proposition}[Lemma 1, \cite{ChenDWY19}]
	\label{prop:mixviaconduct}
	Let $\pistart$ be a $\beta$-warm start for $\pih$, and let $x_0 \sim \pistart$. For some $\delta > 0$, let $\Omega \subseteq \R^d$ have $\pih(\Omega) \ge 1 - \tfrac{\delta^2}{2\beta^2}$. Suppose that a random walk with stationary distribution $\pih$ satisfies the $\Omega$-restricted conductance bound
	\[\Phi_{\Omega}(v) \ge \sqrt{B\log\left(\frac{1}{v}\right)},\text{ for all } v \in \left[\frac{4}{\beta},\half\right].\]
	Let $x_K$ be the result of $K$ steps of this random walk, starting from $x_0$. Then, for 
	\[K \ge \frac{64}{B}\log\left(\frac{\log\beta}{2\delta}\right),\]
	the resulting distribution of $x_K$ has total variation at most $\tfrac{\delta}{2}$ from $\pih$.
\end{proposition}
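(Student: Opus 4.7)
The plan is to adapt the Lovász--Simonovits average conductance framework \cite{LovaszK99} to the $\Omega$-restricted setting. There are four conceptual ingredients: (i) passing from the original walk to one that never leaves $\Omega$, (ii) setting up a ``level-set potential'' that tracks how far $\mu_k$ is from $\pih$, (iii) a one-step smoothing inequality driven by $\Phi_\Omega$, and (iv) integrating the resulting differential inequality with $\Phi_\Omega(v)\ge\sqrt{B\log(1/v)}$.

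First I would define a truncated walk $\tilde{\tran}$ that executes $\tran_x$ but remains at $x$ if the proposed step leaves $\Omega$; its stationary distribution is $\pih\vert_\Omega$. Since each $\mu_k$ is at most $\beta$-warm with respect to $\pih$ (warmness is preserved under $\pih$-stationary transitions), the probability that $x_k\notin\Omega$ at any step is bounded by $K\cdot\beta\cdot\pih(\Omega^c)\le K\beta\cdot\delta^2/(2\beta^2)$, and for the value of $K$ in the statement this contributes at most $\delta/4$ to the total variation. A further $\pih(\Omega^c)\le\delta^2/(2\beta^2)\le\delta/4$ of total variation is paid to compare $\pih\vert_\Omega$ with $\pih$. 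So it suffices to prove the mixing bound for the truncated walk against $\pih\vert_\Omega$.

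Next I would set up the Lovász--Simonovits potential: let $\mu_k$ denote the distribution after $k$ truncated steps and define, for $s\in[0,1]$,
\[
h_k(s) \;\defeq\; \sup_{A:\, \pih\vert_\Omega(A)=s}\bigl(\mu_k(A)-s\bigr),
\]
which is concave, vanishes at $s=0,1$, and is $\ge \tvd{\mu_k}{\pih\vert_\Omega}$ when maximized over $s$. The $\beta$-warmness of $\pistart$ with respect to $\pih$, together with the truncation overhead above, gives $h_0(s)\le \min(s(\beta-1),\,1-s)$. The key one-step inequality, derived from the restricted conductance bound exactly as in \cite{LovaszK99,KannanLM06}, is that for $t(s)\defeq\tfrac12 \Phi_\Omega(s)\cdot\min(s,1-s)$,
\[
h_{k+1}(s)\;\le\;\tfrac{1}{2}\bigl(h_k(s-t(s))+h_k(s+t(s))\bigr).
\]
The proof of this inequality uses the definition of $\tran_S(S^c)$ to show that any level set $A$ achieving the supremum ``loses'' at least $\Phi_\Omega(s)\cdot s$ mass to $A^c$ per step, smoothing $h_k$ at scale $t(s)$.

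Finally, I would substitute $\Phi_\Omega(v)\ge\sqrt{B\log(1/v)}$ and pass to the continuous approximation $\partial_k h(s,k) \le \tfrac12 t(s)^2 \partial_s^2 h(s,k)$, which for this specific square-root-log form is known to yield the geometric decay $h_k(s)\le h_0(s)\exp(-\Omega(Bk)/\log(1/s))$; compare \cite[Theorem 1.8]{LovaszK99} and \cite[Lemma 1]{ChenDWY19}. Restricted to $s\in[4/\beta,\tfrac12]$ and plugging in $h_0(s)\le s\beta$, the target $h_K(s)\le\delta/4$ is achieved as soon as $K\ge(64/B)\log(\log\beta/(2\delta))$; for $s\le 4/\beta$ one has $h_k(s)\le s\le 4/\beta\le\delta/4$ by construction of the interval on which the conductance hypothesis is required. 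Adding back the $\delta/4+\delta/4$ truncation cost yields the claimed $\delta/2$ total variation bound. The main obstacle is the one-step smoothing inequality together with the quantitative solution of its recursion with the specific $\sqrt{B\log(1/v)}$ profile; the doubly-logarithmic dependence $\log\log\beta$ in $K$ comes precisely from iterating this log-Sobolev-like conductance profile, as opposed to the $\log\beta$ one would obtain from a constant conductance lower bound.
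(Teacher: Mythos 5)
This proposition is not proved in the paper at all: it is imported verbatim as Lemma 1 of \cite{ChenDWY19}, so there is no internal proof to compare against. Your sketch reconstructs the Lov\'asz--Simonovits conductance-profile machinery, which is indeed the route taken in the cited source, so the overall approach is the right one; however, as a self-contained proof the sketch has two genuine gaps.

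First, the truncation step. Your coupling between the true walk and the $\Omega$-truncated walk costs $K\cdot\beta\cdot\pih(\Omega^c)\le K\delta^2/(2\beta)$, and nothing in the hypotheses makes this at most $\delta/4$: the statement only lower-bounds $K$ by $\tfrac{64}{B}\log(\tfrac{\log\beta}{2\delta})$, and $B$ is arbitrary, so $K$ can be arbitrarily large relative to $\beta/\delta$ (in this paper's own application the bound happens to be harmless only because $\beta=2(1+\kappa)^{d/2}$ is exponentially large, which is not part of the hypotheses). The standard proofs avoid the factor of $K$ by folding the bad set into the curve argument itself, so that $\Omega^c$ contributes once, on the order of $\beta\,\pih(\Omega^c)$ (this is exactly where the $\delta^2/(2\beta^2)$ hypothesis is spent), rather than via a per-step union bound; at minimum you would need to prove the bound at the threshold value of $K$ and then invoke monotonicity of total variation under the $\pih$-stationary kernel to cover larger $K$, and even at the threshold your inequality is not implied. (You also implicitly need reversibility for the truncated kernel to have stationary law $\pih$ restricted to $\Omega$.) Second, the integration step. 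The decay estimate you quote, $h_k(s)\le h_0(s)\exp(-\Omega(Bk)/\log(1/s))$, is too weak: with $h_0(s)\le \beta s$ it yields $K\gtrsim B^{-1}\log(\beta/\delta)$, not the claimed $B^{-1}\log(\log\beta/(2\delta))$ --- a material difference here, since $\log\beta=\Theta(d\log\kappa)$ while $\log\log\beta$ is only logarithmic in $d$ and $\kappa$. The doubly-logarithmic dependence comes from the genuine average-conductance bookkeeping (roughly $1/(B\log(1/v))$ steps per dyadic scale $v$ from $4/\beta$ up to $\tfrac{1}{2}$, plus $B^{-1}\log(1/\delta)$ steps at the top scale), which is the technical heart of the cited lemma; your final step defers precisely this to ``compare \cite{ChenDWY19}, Lemma 1,'' which is circular because that is the statement being proved.
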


We state a well-known strategy for lower bounding conductance, via showing the stationary distribution has good \emph{isoperimetry} and that transition distributions of nearby points have large overlap.

\begin{proposition}[Lemma 2, \cite{ChenDWY19}]
	\label{prop:conductviaisotv}
	Let a random walk with stationary distribution $\pih$ on $x \in \R^d$ have transition distribution densities $\tran_x$, and let $\Omega \subseteq \R^d$, and let $\pih_\Omega$ be the conditional distribution of $\pih$ on $\Omega$. Suppose for any $x, x' \in \Omega$ with $\norm{x - x'}_2 \le \Delta$,
	\[\tvd{\tran_x}{\tran_{x'}} \le \half.\]
	Also, suppose $\pih_\Omega$ satisfies, for any partition $S_1$, $S_2$, $S_3$ of $\Omega$, where $d(S_1, S_2)$ is the minimum Euclidean distance between points in $S_1$, $S_2$, the log-isoperimetric inequality
	\begin{equation}\label{eq:logiso}\pih_\Omega(S_3) \ge \frac{1}{2\psi}d(S_1, S_2) \cdot \min\left(\pih_\Omega(S_1), \pih_\Omega(S_2)\right) \cdot \sqrt{\log\left(1 + \frac{1}{\min\left(\pih_\Omega(S_1), \pih_\Omega(S_2)\right)}\right)}.\end{equation}
	Then, we have the bound for all $v \in (0, \thalf]$
	\[\Phi_{\Omega}(v) \ge \frac{\Delta}{128\psi}\sqrt{\log\left(\frac{1}{v}\right)}.\]
\end{proposition}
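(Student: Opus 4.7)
The plan is to follow the classical ``bottleneck versus isoperimetry'' dichotomy of Lovász--Simonovits, adapted to the restricted setting. Fix any measurable $S \subseteq \R^d$ with $\pih(S \cap \Omega) \le v \le \thalf$. Partition $\Omega$ into three pieces based on how quickly the walk escapes $S$ from each side:
\begin{align*}
A &\defeq \left\{x \in S \cap \Omega \;:\; \tran_x(S^c) < \tfrac{1}{4}\right\}, \\
B &\defeq \left\{x \in S^c \cap \Omega \;:\; \tran_x(S) < \tfrac{1}{4}\right\}, \\
C &\defeq \Omega \setminus (A \cup B).
\end{align*}
The goal is to lower bound $\tran_S(S^c)$ by an expression proportional to $\pih(S \cap \Omega)\sqrt{\log(1/v)}$, after which dividing by $\pih(S \cap \Omega)$ yields the claimed bound on $\Phi_\Omega(v)$.

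The first key step is to show $d(A, B) \ge \Delta$ by a short TV argument. If $x \in A$ and $x' \in B$ satisfied $\norm{x - x'}_2 \le \Delta$, then $\tvd{\tran_x}{\tran_{x'}} \le \thalf$ by hypothesis, yet $\tran_x(S) > \tfrac{3}{4}$ and $\tran_{x'}(S) < \tfrac{1}{4}$, giving $\tvd{\tran_x}{\tran_{x'}} \ge |\tran_x(S) - \tran_{x'}(S)| > \thalf$, a contradiction. I will also use the stationarity identity $\tran_S(S^c) = \tran_{S^c}(S)$, which follows directly from $\pih$ being invariant (no reversibility needed).

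Next, split into two cases. Case 1: either $\pih(A) \le \thalf \pih(S \cap \Omega)$ or $\pih(B) \le \thalf \pih(S^c \cap \Omega)$. In the first subcase, at least half of the $\pih$-mass of $S \cap \Omega$ lies outside $A$, where $\tran_x(S^c) \ge \tfrac{1}{4}$, giving $\tran_S(S^c) \ge \tfrac{1}{8}\pih(S \cap \Omega)$; the symmetric subcase uses the stationarity identity. Since $v \le \thalf$, $\sqrt{\log(1/v)}$ is a bounded constant in this regime, so a direct comparison confirms the claimed bound with room to spare. Case 2: $\pih(A) > \thalf\pih(S \cap \Omega)$ and $\pih(B) > \thalf\pih(S^c \cap \Omega)$. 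Then $\min(\pih_\Omega(A),\pih_\Omega(B)) \ge \thalf \pih_\Omega(S \cap \Omega)$, and applying the log-isoperimetric inequality \eqref{eq:logiso} to the partition $(A, B, C)$ of $\Omega$ with $d(A, B) \ge \Delta$ gives
\[
\pih_\Omega(C) \;\ge\; \frac{\Delta}{2\psi} \cdot \frac{\pih_\Omega(S \cap \Omega)}{2} \cdot \sqrt{\log\!\left(1 + \frac{2}{\pih_\Omega(S \cap \Omega)}\right)}.
\]
By definition of $C$, every $x \in C$ has $\tran_x(\text{opposite side}) \ge \tfrac{1}{4}$, so splitting $C = (C \cap S) \cup (C \cap S^c)$ and again using $\tran_S(S^c) = \tran_{S^c}(S)$ yields $\tran_S(S^c) \ge \tfrac{1}{8}\pih(C)$. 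Chaining these bounds and using $\pih_\Omega(S \cap \Omega) \le v/\pih(\Omega) \le 2v$ (valid when $\pih(\Omega) \ge \thalf$, which is implicit in the invocation via Proposition~\ref{prop:mixviaconduct}) gives $\tran_S(S^c) \ge (\Delta / (128\psi)) \pih(S \cap \Omega)\sqrt{\log(1/v)}$ after absorbing constants, and dividing by $\pih(S \cap \Omega)$ yields the claim.

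The main obstacle is bookkeeping between $\pih$ and $\pih_\Omega$ and tracking the absolute constant through the $\min$'s and the square root; the conceptual content — the TV/triangle contradiction showing $d(A, B) \ge \Delta$, followed by the easy-vs-isoperimetric dichotomy — is robust and essentially canonical. A secondary care point is verifying that the stationarity identity gives us the ``$\tfrac{1}{8}$'' factor cleanly in both cases without needing reversibility, which it does because $\tran_S(S^c) + \tran_{S^c}(S) \ge \tfrac{1}{4}\pih(C)$ is enough to conclude $\tran_S(S^c) \ge \tfrac{1}{8}\pih(C)$.
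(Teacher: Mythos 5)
The paper does not actually prove this statement---it is imported verbatim from \cite{ChenDWY19} (their Lemma 2)---so your proposal is measured against the standard overlap argument used there, which is exactly the route you take: the sets $A$, $B$, $C$, the triangle-inequality/TV contradiction giving $d(A,B) \ge \Delta$, the stationarity identity $\tran_S(S^c) = \tran_{S^c}(S)$ (correct, and indeed requires no reversibility), and the easy-case versus isoperimetric-case dichotomy. Your Case 2 is sound: chaining $\tran_S(S^c) \ge \tfrac{1}{8}\pih(C)$ with \eqref{eq:logiso}, the monotonicity of $x \mapsto x\sqrt{\log(1+1/x)}$, and $\pih(\Omega)\,\pih_\Omega(S\cap\Omega) = \pih(S\cap\Omega)$ in fact yields a constant of $\Delta/(32\psi)$, comfortably better than the claimed $\Delta/(128\psi)$, so the bookkeeping you flagged as the main obstacle is not the issue.

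The genuine gap is in Case 1. You assert that because $v \le \thalf$, the factor $\sqrt{\log(1/v)}$ is ``a bounded constant,'' so the easy-case bound $\tran_S(S^c) \ge \tfrac{1}{8}\pih(S\cap\Omega)$ confirms the claim with room to spare. This is backwards: $v \le \thalf$ bounds $\sqrt{\log(1/v)}$ from \emph{below} by $\sqrt{\log 2}$, not from above, and it diverges as $v \to 0$. What Case 1 actually delivers is the ratio bound $\ge \tfrac18$, so your argument proves $\Phi_\Omega(v) \ge \min\bigl\{\tfrac18,\; \tfrac{\Delta}{128\psi}\sqrt{\log(1/v)}\bigr\}$ rather than the unconditional inequality in the statement; the hypotheses as written do not force $\Delta\sqrt{\log(1/v)} \lesssim \psi$ (for instance, the kernel $\tran_x = \pih$ for all $x$ satisfies the TV hypothesis for every $\Delta$, while any restricted conductance is trivially at most $1$). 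This implicit smallness of $\Delta/\psi$ is the usual convention in such overlap lemmas and is harmless for the paper's application: in Appendix~\ref{sssec:sjdconduct} one has $\Delta = \sqrt{\eta}/10$, $\psi = 8\mu^{-1/2}$, and $v \ge 4/\beta$ with $\beta = 2(1+\kappa)^{d/2}$, so the second term in the min is far below $\tfrac18$. But as a freestanding proof of the proposition as stated, you must either record that extra parameter condition or state the conclusion in the min form; the one-line justification you give for Case 1 is incorrect as written.
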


To utilize Propositions~\ref{prop:mixviaconduct} and~\ref{prop:conductviaisotv}, we prove the following bounds in Appendices~\ref{ssec:warmstart},~\ref{ssec:tvclose}, and~\ref{ssec:isoperimetry}.

\begin{restatable}[Warm start]{lemma}{restatewarmstart}\label{lem:warmstart}
	For $\eta \le \tfrac{1}{L\kappa d}$, $\pistart$ defined in \eqref{eq:pistartdef} is a $2(1 + \kappa)^{\frac{d}{2}}$-warm start for $\pih$.
\end{restatable}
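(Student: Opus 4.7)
My plan is to write $\frac{d\pistart}{d\pih}(x) = \frac{p_{\text{start}}(x)}{\hat{p}(x)}\cdot \frac{\hat{Z}}{Z_{\text{start}}}$ for unnormalized densities $p_{\text{start}}, \hat{p}$ and normalizers $Z_{\text{start}}, \hat{Z}$, and control each factor separately. Both factors will inevitably pick up $\exp(\pm f(x^*))$ corrections, since $L$-smoothness and $\mu$-strong convexity of $f$ both produce multiplicative bounds anchored at the minimizer; these must cancel in the final product because the claim is invariant under shifting $f$ by a constant, and making this cancellation precise is the one conceptual subtlety I anticipate.

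For the pointwise ratio I would write $\hat{p}(x) = \exp(-g(x) - \tfrac{\eta L^2}{2}\norm{x-x^*}_2^2)\, I(x)$ with $I(x) \defeq \int \exp(-f(y) - \tfrac{1}{2\eta}\norm{y-x}_2^2)dy$, lower-bound $I(x)$ by upper-bounding $f(y)$ via $L$-smoothness, and evaluate the resulting Gaussian explicitly (completing the square and discarding the non-negative $\norm{\nabla f(x)}_2^2$ term) to get $I(x) \ge \exp(-f(x))(2\pi\eta/(1+\eta L))^{d/2}$. Upon dividing, the $\tfrac{\eta L^2}{2}\norm{x-x^*}_2^2$ terms cancel, and a second application of $L$-smoothness (using $\nabla f(x^*)=0$) bounds $f(x) - \tfrac{L}{2}\norm{x-x^*}_2^2 \le f(x^*)$, yielding $p_{\text{start}}(x)/\hat{p}(x) \le ((1+\eta L)/(2\pi\eta))^{d/2}\exp(f(x^*))$.

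For the normalizer ratio, $\hat{Z} \le (2\pi\eta)^{d/2} Z_\pi$ follows immediately from Lemma~\ref{lem:normratiobound}. To lower-bound $Z_{\text{start}}$, I would apply Proposition~\ref{prop:normalizationratio} (the same tool used in Lemma~\ref{lem:normratiobound}) with the $\mu$-strongly convex function $h(x) \defeq g(x) + \tfrac{\mu}{2}\norm{x-x^*}_2^2$, which is minimized at $x^*$ since $g$ is; adding the quadratic $\tfrac{L+\eta L^2-\mu}{2}\norm{x-x^*}_2^2$ to $h$ recovers $Z_{\text{start}}$, yielding $Z_{\text{start}} \ge Z_\mu (\mu/(L+\eta L^2))^{d/2}$ where $Z_\mu \defeq \int \exp(-h(x))dx$. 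Then $\mu$-strong convexity of $f$ with $\nabla f(x^*) = 0$ gives $\exp(-f(x)) \le \exp(-f(x^*))\exp(-\tfrac{\mu}{2}\norm{x-x^*}_2^2)$, whence $Z_\mu \ge \exp(f(x^*)) Z_\pi$ after integrating against $\exp(-g(x))$. Combining, $\hat{Z}/Z_{\text{start}} \le (2\pi\eta\kappa(1+\eta L))^{d/2}\exp(-f(x^*))$.

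Multiplying the two bounds, the Gaussian factors $(2\pi\eta)^{\pm d/2}$ and the $\exp(\pm f(x^*))$ terms cancel as anticipated, leaving $\frac{d\pistart(x)}{d\pih(x)} \le \kappa^{d/2}(1+\eta L)^d$. The hypothesis $\eta \le 1/(L\kappa d)$ gives $(1+\eta L)^d \le \exp(\eta L d) \le \exp(1/\kappa)$; finally a direct one-variable check (verifying $\log 2 + \tfrac{1}{2}\log(1+1/\kappa) \ge 1/\kappa$ for all $\kappa \ge 1$) shows $\kappa^{d/2}\exp(1/\kappa) \le 2(1+\kappa)^{d/2}$, completing the proof.
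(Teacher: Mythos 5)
Your proposal is correct and follows essentially the same route as the paper's proof: the same split into a pointwise unnormalized-density bound (via $L$-smoothness, a Gaussian integral, and $\nabla f(x^*)=0$) and a normalization-constant bound (via strong convexity of $f$ and Proposition~\ref{prop:normalizationratio} applied to $g(x)+\tfrac{\mu}{2}\norm{x-x^*}_2^2$), with the $\exp(\pm f(x^*))$ factors cancelling exactly as in the paper. The only cosmetic difference is bookkeeping: the paper absorbs the $\eta L^2$ term using $\eta L^2\le\mu$ to get $(1+\kappa)^{d/2}(1+\eta L)^{d/2}$ directly, whereas you keep it and recover the stated constant through the elementary check $\kappa^{d/2}e^{1/\kappa}\le 2(1+\kappa)^{d/2}$, which is valid.
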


\begin{restatable}[Transitions of nearby points]{lemma}{restatetrantv}\label{lem:tv_closepts}
	Suppose $\eta L \le 1$, $\eta L^2R_{\delta}^2 \le \thalf$, and $400d^2\eta\le R_\delta^2$. For a point $x$, let $\tran_x$ be the density of $x_k$ after sampling according to Lines 6 and 7 of Algorithm~\ref{alg:sjd} from $x_{k - 1} = x$. For $x, x' \in \Omega_\delta$ with $\norm{x - x'}_2 \le \tfrac{\sqrt{\eta}}{10}$, for $\Omega_\delta$ defined in \eqref{eq:omegadelta}, we have $\tvd{\tran_x}{\tran_{x'}} \le \thalf$.
\end{restatable}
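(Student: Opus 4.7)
My plan is to reduce bounding $\tvd{\tran_x}{\tran_{x'}}$ to bounding $\tvd{\pi_x}{\pi_{x'}}$, and then to control the latter via a short symmetrized-KL computation that exploits the quadratic-in-$x$ structure of \eqref{eq:pixdef}.

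First I would use the data processing inequality. Observe from Lines 6--7 of $\sjd$ that the transition kernel factors as
\[\tran_x(x_{\mathrm{new}}) = \int \pi_x(y)\, \pi_y(x_{\mathrm{new}})\, dy,\]
and crucially, the kernel $y \mapsto \pi_y$ in \eqref{eq:piydef} depends only on $y$ and on the fixed point $x^*$ (it does not involve $x$). Hence $\tran_x$ is the pushforward of $\pi_x$ through a Markov kernel that is common to both $x$ and $x'$, and so $\tvd{\tran_x}{\tran_{x'}} \le \tvd{\pi_x}{\pi_{x'}}$.

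Next I would bound $\tvd{\pi_x}{\pi_{x'}}$ via a symmetrized Kullback--Leibler computation. The log-densities of $\pi_x$ and $\pi_{x'}$ differ only through the quadratic $\tfrac{1}{2\eta}\norm{y-x}_2^2$; expanding,
\[\log \pi_x(y) - \log \pi_{x'}(y) = \frac{1}{\eta}\inprod{y}{x - x'} + \frac{1}{2\eta}\left(\norm{x'}_2^2 - \norm{x}_2^2\right) + \log \frac{Z_{x'}}{Z_x}.\]
Taking expectations under $\pi_x$ and $\pi_{x'}$ respectively and adding, the constant terms cancel exactly and I obtain the clean identity
\[\mathrm{KL}(\pi_x \,\|\, \pi_{x'}) + \mathrm{KL}(\pi_{x'} \,\|\, \pi_x) = \frac{1}{\eta}\inprod{\by_x - \by_{x'}}{x - x'},\]
where $\by_x \defeq \E_{\pi_x}[y]$. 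To bound $\norm{\by_x - \by_{x'}}_2$, I would differentiate through the definition of $\by_x$ to obtain $\nabla_x \by_x = \tfrac{1}{\eta}\mathrm{Cov}_{\pi_x}(y)$. Since $\pi_x$ is $(\mu + \eta^{-1})$-strongly logconcave, Fact~\ref{fact:convexshrink} applied to $h(u) = (v^\top u)^2$ for unit $v$ yields $\mathrm{Cov}_{\pi_x}(y) \preceq \eta\,\id$. Therefore $\nabla_x \by_x \preceq \id$ in the Loewner order, and integrating along the segment from $x'$ to $x$ gives $\norm{\by_x - \by_{x'}}_2 \le \norm{x - x'}_2$.

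Combining the identity with Cauchy--Schwarz yields $\mathrm{KL}(\pi_x \,\|\, \pi_{x'}) \le \tfrac{1}{\eta}\norm{x-x'}_2^2$, and Pinsker's inequality then produces
\[\tvd{\pi_x}{\pi_{x'}} \le \sqrt{\frac{1}{2\eta}}\norm{x - x'}_2 \le \frac{1}{\sqrt{2\eta}} \cdot \frac{\sqrt{\eta}}{10} = \frac{1}{10\sqrt{2}} < \frac{1}{2},\]
giving the lemma. The main obstacle to watch out for is the KL identity itself: it depends on the quadratic-in-$x$ normalization constants and partition functions cancelling under symmetrization, which in turn requires that the only $x$-dependence in $\pi_x$ enters through the quadratic $\tfrac{1}{2\eta}\norm{y-x}_2^2$. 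The monotonicity of the map $x \mapsto \by_x$ (as the gradient of a convex log-partition function) also makes the right-hand side automatically nonnegative, consistent with each KL being nonnegative. The stated hypotheses $\eta L \le 1$, $\eta L^2 R_\delta^2 \le \tfrac12$, and $400 d^2\eta \le R_\delta^2$ do not appear to be needed for this route; they likely support alternative arguments (e.g.\ via Proposition~\ref{prop:min_perturb}) or are inherited from other parts of the conductance analysis.
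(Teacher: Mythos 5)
Your proposal is correct, and it takes a genuinely different route from the paper for the heart of the argument. The first step is the same: the paper's Lemma~\ref{lem:tranleqprop} establishes $\tvd{\tran_x}{\tran_{x'}} \le \tvd{\prop_x}{\prop_{x'}}$ by a coupling argument, which is exactly your data-processing observation that Line 7's kernel $y \mapsto \pi_y$ does not depend on the previous iterate. Where you diverge is the bound on $\tvd{\pi_x}{\pi_{x'}}$: the paper bounds the (one-sided) KL divergence by splitting off the log-partition ratio $\log(Z_{x'}/Z_x)$, controlling it via Lemma~\ref{lem:norm_ratio_closepts} (which in turn needs Lemma~\ref{lem:ycloser} and Gaussian integral comparisons), and then invoking the structural Proposition~\ref{prop:min_perturb} to bound $\norm{\E_{\pi_x}[y]-x}_2 \le 2\eta L R_\delta$ — this is precisely where the hypotheses $\eta L\le 1$, $\eta L^2 R_\delta^2 \le \thalf$, $400 d^2\eta \le R_\delta^2$ and the restriction $x,x'\in\Omega_\delta$ enter. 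Your symmetrized-KL identity sidesteps all of this: because the only $x$-dependence of $\log\pi_x$ is the quadratic, the partition functions and constants cancel exactly, leaving $\mathrm{KL}(\pi_x\|\pi_{x'})+\mathrm{KL}(\pi_{x'}\|\pi_x)=\tfrac{1}{\eta}\inprod{\by_x-\by_{x'}}{x-x'}$, and the Lipschitzness of the mean map follows from $\jac_x(\by_x)=\tfrac{1}{\eta}\mathrm{Cov}_{\pi_x}(y)$ being symmetric PSD and $\preceq \id$ (the covariance bound is exactly the first part of Corollary~\ref{corr:slcmomentbounds}, applied with strong logconcavity parameter at least $\eta^{-1}$). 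The resulting bound $\tvd{\pi_x}{\pi_{x'}} \le \norm{x-x'}_2/\sqrt{2\eta} \le 1/(10\sqrt{2})$ is correct, needs none of the stated hypotheses nor membership in $\Omega_\delta$, and so proves a strictly stronger statement; what the paper's heavier route buys is essentially only the structural Proposition~\ref{prop:min_perturb} itself, which the authors present as of independent interest but which your argument shows is not needed for this lemma. The only points to make explicit in a polished write-up are the routine justification of differentiating $\by_x$ under the integral sign and the observation that $0 \preceq \jac_x(\by_x) \preceq \id$ for a symmetric matrix implies operator norm at most $1$, so integrating along the segment from $x'$ to $x$ gives $\norm{\by_x-\by_{x'}}_2\le\norm{x-x'}_2$.
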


\begin{restatable}[Isoperimetry]{lemma}{restateiso}\label{lem:iso}
	Density $\pih$ and set $\Omega_\delta$ defined in \eqref{eq:pihdef}, \eqref{eq:omegadelta} satisfy \eqref{eq:logiso} with $\psi = 8\mu^{-\half}$.
\end{restatable}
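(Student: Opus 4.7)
The plan is to establish \eqref{eq:logiso} for $\pih_{\Omega_\delta}$ by transferring a standard log-isoperimetric inequality from $\pi_{\Omega_\delta}$, the conditional of $\pi$ on the Euclidean ball $\Omega_\delta$. Two ingredients drive the reduction: (i) since $f+g$ is $\mu$-strongly convex and $\Omega_\delta$ is convex, the conditional measure $\pi_{\Omega_\delta}$ is itself $\mu$-strongly log-concave (its negative log-density is $f+g$ plus a convex indicator); and (ii) the density comparison from Corollary~\ref{corr:densityratiodelta} gives $\tfrac{d\pih}{d\pi} \in [\tfrac{1}{2},2]$ on $\Omega_\delta$, while Lemma~\ref{lem:omegahighprob} and Fact~\ref{fact:rbound} show both $\pi(\Omega_\delta)$ and $\pih(\Omega_\delta)$ are within an $o(1)$ additive error of $1$. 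Together these imply $\tfrac{d\pih_{\Omega_\delta}}{d\pi_{\Omega_\delta}} \in [C^{-1},C]$ on $\Omega_\delta$ for an absolute constant $C$ (which can be taken close to $2$).

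First, I would invoke the standard log-isoperimetric inequality for $\mu$-strongly log-concave measures applied to $\pi_{\Omega_\delta}$ (a consequence of the Bakry--\'Emery log-Sobolev inequality together with the Bobkov/Ledoux argument; see e.g.\ \cite{CousinsV18} for a convenient exposition in the sampling context): for any partition $S_1, S_2, S_3$ of $\Omega_\delta$,
\[\pi_{\Omega_\delta}(S_3) \;\ge\; c\sqrt{\mu}\cdot d(S_1,S_2)\cdot h\bigl(\min(\pi_{\Omega_\delta}(S_1),\pi_{\Omega_\delta}(S_2))\bigr),\]
where $h(t) \defeq t\sqrt{\log(1+1/t)}$ and $c$ is a universal constant.

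Next I would transfer this bound to $\pih_{\Omega_\delta}$ using the density comparison. From $\tfrac{d\pih_{\Omega_\delta}}{d\pi_{\Omega_\delta}} \in [C^{-1},C]$ we immediately get $\pih_{\Omega_\delta}(S_3) \ge C^{-1}\pi_{\Omega_\delta}(S_3)$ and $\pih_{\Omega_\delta}(S_i) \le C\pi_{\Omega_\delta}(S_i)$ for $i=1,2$. Combining this with the monotonicity of $h$ on $(0,\tfrac{1}{2}]$ and its sub-linear scaling $h(Ct) \le C\,h(t)$ for $C\ge 1$ (an immediate consequence of $\log(1+1/(Ct))\le \log(1+1/t)$), I conclude
\[\pih_{\Omega_\delta}(S_3) \;\ge\; \frac{c\sqrt{\mu}}{C^{2}}\cdot d(S_1,S_2)\cdot h\bigl(\min(\pih_{\Omega_\delta}(S_1),\pih_{\Omega_\delta}(S_2))\bigr),\]
which yields \eqref{eq:logiso} with $\psi = 8\mu^{-1/2}$ after substituting in the constants.

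The main obstacle is bookkeeping of constants: verifying that the universal constant from the strongly log-concave log-isoperimetric inequality, the density-ratio blow-up $C$, and the factor lost through $h$'s sub-linearity combine to at most $16$ in the final $\psi$. A secondary subtlety is ensuring the reduction from the full-space partition version of log-isoperimetry to a partition of the convex subset $\Omega_\delta$ does not degrade constants; this is handled cleanly by restricting to $\Omega_\delta$ \emph{before} invoking log-isoperimetry, exploiting that restricting a $\mu$-strongly log-concave measure to a convex set preserves $\mu$-strong log-concavity.
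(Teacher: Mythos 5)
Your proposal is correct and follows essentially the same route as the paper: restrict to $\Omega_\delta$, use that the restriction of the $\mu$-strongly logconcave $\pi$ to the convex set $\Omega_\delta$ remains $\mu$-strongly logconcave, invoke log-isoperimetry for such measures, and transfer to $\pih_{\Omega_\delta}$ via the density-ratio bound of Corollary~\ref{corr:densityratiodelta}, absorbing the constant through the near-linearity of $t\mapsto t\sqrt{\log(1+1/t)}$ (the paper does the same via $\sqrt{\log(1+\alpha)}\le 2\sqrt{\log(1+\alpha/2)}$). The only point to tighten is that the claimed factor $\psi = 8\mu^{-1/2}$ requires an explicit constant in the base inequality, so you should cite the quantitative form with $\psi=\mu^{-1/2}$ (Lemma~\ref{lem:logisostrongly}, i.e.\ Lemma 11 of \cite{ChenDWY19}) rather than a generic universal-constant version; with that, your bookkeeping indeed closes.
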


We note that the parameters of Algorithm~\ref{alg:sjd} and the set $\Omega_\delta$ in \eqref{eq:omegadelta} satisfy all assumptions of Lemmas~\ref{lem:warmstart},~\ref{lem:tv_closepts}, and~\ref{lem:iso}. By combining these results in the context of Proposition~\ref{prop:conductviaisotv}, we see that the random walk satisfies the bound for all $v \in (0, \thalf]$:
\[\Phi_{\Omega_\delta}(v) \ge \sqrt{\frac{\eta\mu}{2^{20}\cdot100}\cdot\log\left(\frac{1}{v}\right)}.\]
Plugging this conductance lower bound, the high-probability guarantee of $\Omega_\delta$ by Lemma~\ref{lem:omegahighprob}, and the warm start bound of Lemma~\ref{lem:warmstart} into Proposition~\ref{prop:mixviaconduct}, we have the following conclusion.

\begin{corollary}[Mixing time of ideal $\sjd$]\label{corr:sjdmix}
	Assume that calls to $\yor$ are exact in the implementation of $\sjd$. Then, for any error parameter $\delta$, and
	\[K \defeq \frac{2^{26}\cdot100}{\eta\mu}\log\left(\frac{d\log(16\kappa)}{4\delta}\right),\]
	the distribution of $x_K$ has total variation at most $\tfrac{\delta}{2}$ from $\pih$.
\end{corollary}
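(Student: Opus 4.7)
The plan is to invoke Proposition~\ref{prop:mixviaconduct} with the warm-start parameter supplied by Lemma~\ref{lem:warmstart}, the high-probability region $\Omega_\delta$ quantified by Lemma~\ref{lem:omegahighprob}, and a conductance lower bound obtained by combining Lemma~\ref{lem:tv_closepts} and Lemma~\ref{lem:iso} through Proposition~\ref{prop:conductviaisotv}. So the proof is essentially a chain of substitutions, whose only real content is checking that the parameters of $\sjd$ (in particular the choice of $\eta$ on Line 1 and the region $\Omega_\delta$ with radius $R_\delta$ defined in \eqref{eq:omegadelta}) satisfy the hypotheses of all three lemmas simultaneously.

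First I would verify the hypotheses of Lemma~\ref{lem:tv_closepts}: with $\eta = \tfrac{1}{32L\kappa d \log(16\kappa/\delta)}$ we have $\eta L \le 1$ trivially; $\eta L^2 R_\delta^2 = \eta L^2 \cdot 16 d\log(16\kappa/\delta)/\mu \le \thalf$ by substituting the value of $\eta$; and $400 d^2 \eta \le R_\delta^2$ reduces (after cancellation) to a very weak bound on $\kappa d\log(16\kappa/\delta)$ that holds for all meaningful parameters. Thus for $x, x' \in \Omega_\delta$ with $\|x-x'\|_2 \le \Delta \defeq \sqrt{\eta}/10$, the transition densities satisfy $\tvd{\tran_x}{\tran_{x'}} \le \thalf$. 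Combined with the log-isoperimetric inequality from Lemma~\ref{lem:iso} (with $\psi = 8\mu^{-1/2}$), Proposition~\ref{prop:conductviaisotv} yields
\begin{equation*}
\Phi_{\Omega_\delta}(v) \ge \frac{\Delta}{128\psi}\sqrt{\log(1/v)} = \frac{\sqrt{\eta\mu}}{10240}\sqrt{\log(1/v)} \ge \sqrt{\frac{\eta\mu}{2^{20}\cdot 100}\log(1/v)},
\end{equation*}
identifying the parameter $B = \eta\mu/(2^{20}\cdot 100)$ required by Proposition~\ref{prop:mixviaconduct}.

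Next I would apply Proposition~\ref{prop:mixviaconduct}. Lemma~\ref{lem:warmstart} gives a warmness $\beta = 2(1+\kappa)^{d/2}$, for which $\beta^2 = 4(1+\kappa)^d$, and Lemma~\ref{lem:omegahighprob} gives $\pih(\Omega_\delta^c) \le \delta^2/(8(1+\kappa)^d) = \delta^2/(2\beta^2)$, exactly matching the required tail bound on $\pih(\Omega_\delta)$. Plugging $B$ and $\beta$ into Proposition~\ref{prop:mixviaconduct} shows that $K \ge \frac{64}{B}\log\!\left(\frac{\log\beta}{2\delta}\right) = \frac{2^{26}\cdot 100}{\eta\mu}\log\!\left(\frac{\log\beta}{2\delta}\right)$ iterations suffice to produce a distribution within total variation $\delta/2$ of $\pih$. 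Since $\log\beta = \log 2 + (d/2)\log(1+\kappa) \le (d/4)\log(16\kappa)$ for all $\kappa \ge 1$, the stated value $K = \frac{2^{26}\cdot 100}{\eta\mu}\log\!\left(\frac{d\log(16\kappa)}{4\delta}\right)$ is at least this required threshold, completing the argument.

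The main obstacle is purely bookkeeping: keeping track of the many constants that appear in combining the three lemmas and verifying their hypotheses on $\eta$ and $R_\delta$ hold simultaneously, and checking that the slightly conservative factor $\log(16\kappa)$ appearing in the statement of $K$ indeed dominates $\log\beta$ coming out of Proposition~\ref{prop:mixviaconduct}. No new analytic idea is needed beyond the three lemmas already established.
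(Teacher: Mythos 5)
Your proposal is correct and follows essentially the same route as the paper: verify the hypotheses on $\eta$ and $R_\delta$, combine Lemmas~\ref{lem:warmstart},~\ref{lem:omegahighprob},~\ref{lem:tv_closepts}, and~\ref{lem:iso} through Propositions~\ref{prop:conductviaisotv} and~\ref{prop:mixviaconduct}, and track the constants to recover $B = \tfrac{\eta\mu}{2^{20}\cdot 100}$ and the stated $K$. One minor slip: your intermediate claim $\log\beta \le \tfrac{d}{4}\log(16\kappa)$ fails for small $d$ (e.g.\ $d=1$, $\kappa=1$), but only the weaker bound $\log\beta \le \tfrac{d}{2}\log(16\kappa)$ is needed to conclude $\log\beta/(2\delta) \le d\log(16\kappa)/(4\delta)$, and that bound does hold for all $\kappa \ge 1$, $d \ge 1$, so the conclusion is unaffected.
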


\subsubsection{Complexity of $\sjd$}
\label{sssec:runtime}

We first state a guarantee on the subroutine $\yor$, which we prove in Appendix~\ref{ssec:yoracle}.

\begin{restatable}[$\yor$ guarantee]{lemma}{restateyor}\label{lem:yor} For $\delta \in [0, 1]$, define $R_\delta$ as in \eqref{eq:omegadelta}, and let $\eta = \tfrac{1}{32L\kappa d\log(16\kappa/\delta)}$. For any $x$ with $\norm{x - x^*}_2 \le \sqrt{\kappa d\log(16\kappa/\delta)}\cdot R_\delta$, Algorithm~\ref{alg:yor} ($\yor$) draws an exact sample $y$ from the density proportional to $\exp\left(-f(y) - \tfrac{1}{2\eta}\norm{y - x}_2^2\right)$ in an expected $2$ iterations.
\end{restatable}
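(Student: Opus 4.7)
The plan is to show that $\yor$ implements standard rejection sampling with a Gaussian proposal, where exactness follows immediately from the validity of rejection sampling, and the two-iteration bound follows by bounding the normalization-constant ratio using $L$-smoothness of $f$.

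First I would set up the rejection sampling framework. Let the target be $\tilde p(y) = \exp(-f(y) - \tfrac{1}{2\eta}\norm{y-x}_2^2)$ and take as proposal the Gaussian $\tilde q(y) = \exp(-f(x) - \inprod{\nabla f(x)}{y-x} - \tfrac{1}{2\eta}\norm{y-x}_2^2)$, corresponding to $\Nor(x - \eta\nabla f(x), \eta \id)$, which is directly samplable. By convexity of $f$,
\[
\frac{\tilde p(y)}{\tilde q(y)} \;=\; \exp\bigl(-f(y) + f(x) + \inprod{\nabla f(x)}{y-x}\bigr) \;\le\; 1 \quad \text{for all } y.
\]
Hence drawing $y \sim \tilde q / \int \tilde q$ and accepting with probability $\tilde p(y)/\tilde q(y)$ yields exact samples from $\pi_x$; this handles the exactness claim regardless of the value of the acceptance probability.

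Next I would bound the expected number of iterations, which equals $\int \tilde q / \int \tilde p$. A direct Gaussian integral gives $\int \tilde q(y) dy = \exp(-f(x) + \tfrac{\eta}{2}\norm{\nabla f(x)}_2^2)(2\pi\eta)^{d/2}$. For a lower bound on $\int \tilde p$, I would apply $L$-smoothness of $f$, $\exp(-f(y)) \ge \exp(-f(x) - \inprod{\nabla f(x)}{y-x} - \tfrac{L}{2}\norm{y-x}_2^2)$, and integrate the resulting Gaussian to obtain
\[
\int \tilde p(y)\, dy \;\ge\; \exp\!\left(-f(x) + \tfrac{\eta}{2(1+\eta L)}\norm{\nabla f(x)}_2^2\right)\!\left(\tfrac{2\pi\eta}{1+\eta L}\right)^{d/2}.
\]
Taking the ratio, I get
\[
\frac{\int \tilde p}{\int \tilde q} \;\ge\; (1+\eta L)^{-d/2} \exp\!\left(-\tfrac{\eta^2 L}{2(1+\eta L)}\norm{\nabla f(x)}_2^2\right).
\]

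Finally I would plug in the hypotheses. Since $f$ and $g$ share the minimizer $x^*$, strong convexity of $f$ forces $\nabla f(x^*) = 0$, so $L$-smoothness gives $\norm{\nabla f(x)}_2 \le L\norm{x-x^*}_2$. Combining with $\norm{x-x^*}_2^2 \le \kappa d\log(16\kappa/\delta) R_\delta^2 = 16\kappa d^2 \log^2(16\kappa/\delta)/\mu$ and the choice $\eta = 1/(32L\kappa d \log(16\kappa/\delta))$, one checks $\eta L \le 1$ and $\eta^2 L \norm{\nabla f(x)}_2^2 \le \eta^2 L^3 \norm{x-x^*}_2^2 = O(1/\kappa)$, while $(d/2)\log(1+\eta L) \le d\eta L/2 \le 1/(64\kappa\log(16\kappa/\delta))$. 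Both contributions are well below $\log 2$, so the ratio is at least $\tfrac12$, bounding the expected number of iterations by $2$.

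The main obstacle is tracking the constants carefully so that both $(1+\eta L)^{-d/2}$ and the gradient-squared term in the exponent stay bounded by $O(1)$; this is exactly what the choice of $\eta$ and the enlarged radius hypothesis $\norm{x-x^*}_2 \le \sqrt{\kappa d\log(16\kappa/\delta)}\,R_\delta$ are engineered to deliver, and once set up the bookkeeping is routine.
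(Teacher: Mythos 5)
Your proof is correct and follows the same approach as the paper: identify $\yor$ as rejection sampling from the first-order Gaussian approximation (with acceptance ratio $\le 1$ by convexity), compute the two Gaussian normalization integrals, and bound their ratio using $L$-smoothness, $\nabla f(x^*) = 0$, and the choice of $\eta$. The only small slip is the claim $\eta^2 L \norm{\nabla f(x)}_2^2 = O(1/\kappa)$: plugging $\norm{x-x^*}_2^2 \le 16\kappa d^2\log^2(16\kappa/\delta)/\mu$ and the definition of $\eta$ actually gives the constant $1/64$ (not a $1/\kappa$-decaying quantity), but since this is still well below $\log 2$ the conclusion that the acceptance probability exceeds $1/2$ is unaffected.
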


We also state a result due to \cite{ChenDWY19}, which bounds the mixing time of 1-step Metropolized HMC for well-conditioned distributions; this handles the case when $\norm{x - x^*}_2$ is large in Algorithm~\ref{alg:yor}.

\begin{proposition}[Theorem 1, \cite{ChenDWY19}]\label{prop:atmostd}
	Let $\pi$ be a distribution on $\R^d$ whose negative log-density is convex and has condition number bounded by a constant. Then, Metropolized HMC from an explicit starting distribution mixes to total variation $\delta$ to the distribution $\pi$ in $O(d\log(\tfrac{d}{\delta}))$ iterations.
\end{proposition}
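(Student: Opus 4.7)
The plan is to run the argument through the standard conductance framework already used elsewhere in this paper (Propositions~\ref{prop:mixviaconduct} and~\ref{prop:conductviaisotv}), specialised to the Metropolised HMC walk on $\pi$. Since the negative log-density $U$ of $\pi$ is $\mu$-strongly convex and $L$-smooth with $\kappa = L/\mu = O(1)$, I take as the explicit starting distribution the Gaussian $\pistart = \Nor(x^*, L^{-1}\id)$ at the mode $x^*$ of $U$; a one-line calculation gives $d\pistart/d\pi \le \beta := (L/\mu)^{d/2} = O(1)^{d}$ everywhere, so $\log\beta = O(d)$ and the warm-start factor in Proposition~\ref{prop:mixviaconduct} only contributes a $\log d$.

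Fix the leapfrog step size $h$ so that $h^{2}L = \Theta(1/d)$ and let $\tran_x$ denote the resulting one-step Metropolised transition from $x$, whose proposal is $\prop_x = \Nor(x-(h^{2}/2)\nabla U(x),\,h^{2}\id)$, accepted with probability $\alpha(x,y)=\min(1,\exp(-\Delta H(x,y)))$ for the usual Hamiltonian gap. Pick the high-probability set $\Omega = \{x : \norm{x-x^*}_2 \le R\}$ with $R = \Theta(\sqrt{d/\mu}\log(\beta/\delta))$; by Fact~\ref{fact:rbound} this satisfies $\pi(\Omega)\ge 1-\delta^{2}/(2\beta^{2})$. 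The two ingredients needed for Proposition~\ref{prop:conductviaisotv} are (i) a log-isoperimetric inequality for $\pi$ restricted to $\Omega$ and (ii) a bound $\tvd{\tran_x}{\tran_{x'}}\le 1/2$ whenever $x,x'\in\Omega$ with $\norm{x-x'}_2 \le \Delta = \Theta(h)$. Ingredient (i) follows from $\mu$-strong log-concavity via Bakry--\'Emery with isoperimetric constant $\psi = O(\mu^{-1/2})$ (the same route used in Lemma~\ref{lem:iso}). Combining (i) and (ii) yields $\Phi_\Omega(v) \gtrsim h\sqrt{\mu}\,\sqrt{\log(1/v)}$, and Proposition~\ref{prop:mixviaconduct} then gives a mixing time of $O((h^{2}\mu)^{-1}\log((\log\beta)/\delta)) = O(\kappa d\log(d/\delta)) = O(d\log(d/\delta))$ under $\kappa=O(1)$.

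The hard part will be (ii): controlling $\tvd{\tran_x}{\tran_{x'}}$ through the Metropolis filter. I would split
\[\tvd{\tran_x}{\tran_{x'}} \;\le\; \tvd{\prop_x}{\prop_{x'}} + \E_{y\sim\prop_x}\bigl|\alpha(x,y)-\alpha(x',y)\bigr|.\]
The proposal TV term is $O(\norm{x-x'}_2/h + hL\norm{x-x'}_2)$ by Pinsker applied to two Gaussians of common covariance $h^{2}\id$ and nearby means, which is at most $1/4$ once $\norm{x-x'}_2 \lesssim h$. The Metropolis discrepancy is more delicate: one expands $\Delta H$ for the leapfrog map to third order in $h$, uses $L$-smoothness (and, if available, a Lipschitz-Hessian hypothesis) to show $|\Delta H|\le 1/4$ with overwhelming probability on $\Omega$, giving a uniform constant lower bound on $\alpha$, and then obtains a Lipschitz-in-$x$ estimate of the form $|\alpha(x,y)-\alpha(x',y)| = O(hL\norm{x-x'}_2\cdot\mathrm{poly}(R))$. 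Tightening the constants so that the dimension-dependent factor $R$ and the Gaussian-tail bounds on $\norm{y-x}_2$ absorb into the step size is what forces $h^{2}=\Theta(1/(Ld))$ rather than any larger scale; I expect this careful Taylor-expansion-plus-tail-bound bookkeeping for the Metropolis acceptance to be the technical heart of the proof, and it is essentially the only place in the argument where the specific form of the leapfrog map enters.
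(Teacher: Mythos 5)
First, a point of comparison that matters here: the paper does not prove this proposition at all --- it is imported verbatim as Theorem 1 of \cite{ChenDWY19} and used as a black box (to handle the rare event that an iterate of $\sjd$ leaves the high-probability ball, so that $\yor$ can fall back on a Metropolized sampler). So there is no in-paper proof to match; what you have written is a reconstruction of the cited work's argument, and indeed your skeleton is exactly the architecture of \cite{ChenDWY19} (and \cite{DwivediCW018}): a $\kappa^{d/2}$-warm Gaussian start at the mode (your warmness calculation is correct), the restricted-conductance framework of Propositions~\ref{prop:mixviaconduct} and~\ref{prop:conductviaisotv}, log-isoperimetry from strong logconcavity as in Lemma~\ref{lem:logisostrongly}, overlap of proposals at distance $\Theta(h)$, and step size $h^2 = \Theta(1/(Ld))$ giving conductance $\Theta(\sqrt{\mu/(Ld)})$ and mixing $O(\kappa d\log(d/\delta))$.

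The genuine gap is the one you yourself flag: the control of the Metropolis filter is not carried out, and as written it is the entire content of the theorem. Two specific problems. (i) You hedge that the third-order expansion of the Hamiltonian gap may need a Lipschitz-Hessian hypothesis; the proposition grants only smoothness and strong convexity with $\kappa = O(1)$, and the cited result does not need Hessian regularity --- the standard route bounds the log acceptance ratio using only $L$-smoothness, $\mu$-strong convexity, the bound $\norm{\nabla U(x)}_2 \le LR$ on $\Omega$, and Gaussian concentration of $\norm{y-x}_2 \approx h\sqrt{d}$; a proof that genuinely relied on Hessian Lipschitzness would be proving a different (weaker) statement. (ii) Your decomposition through a Lipschitz-in-$x$ estimate of $\alpha(x,y)$ is nonstandard and harder than necessary; the usual (and cleaner) argument bounds $\tvd{\tran_x}{\prop_x}$ by the expected rejection probability $\E_{y\sim\prop_x}[1-\alpha(x,y)] \le 1/8$ uniformly over $x\in\Omega$, and then uses the triangle inequality with $\tvd{\prop_x}{\prop_{x'}}$, avoiding any pointwise comparison of acceptance probabilities at distinct centers. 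A smaller quantitative slip feeds into this step: with failure probability $\delta^2/(2\beta^2)$ and $\log\beta = O(d)$, Fact~\ref{fact:rbound} gives $R = \Theta(\sqrt{(d+\log(1/\delta))/\mu})$, not $\Theta(\sqrt{d/\mu}\,\log(\beta/\delta))$; your inflated $R$ enters the acceptance analysis through $\norm{\nabla U}_2\le LR$ and would force a smaller step size (hence a worse mixing bound) if used literally. With $R$ corrected and the expected-rejection route substituted, the outline does assemble into the cited theorem, but as submitted the proposal establishes only the reduction to conductance, not the bound itself.
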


\restatesjdguarantee*
\begin{proof}
	Under exact implementation of $\yor$, Corollary~\ref{corr:sjdmix} shows the output distribution of $\sjd$ has total variation at most $\tfrac{\delta}{2}$ from $\pih$. Next, the resulting distribution of the subroutine $\yor$ is never larger than $\delta/(2Kd\log(\frac{d\kappa}{\delta}))$ in total variation distance away from an exact sampler. By running for $K$ steps, and using the coupling characterization of total variation, it follows that this can only incur additional error $\delta/(2d\log(\frac{d\kappa}{\delta}))$, proving correctness (in fact, the distribution is always at most $O((d\log(d\kappa/\delta))^{-1})$ away in total variation from an exact $\yor$).
	
	Next, we prove the guarantee on the expected gradient evaluations per iteration. Lemma~\ref{lem:yor} shows whenever the current iterate $x_k$ has $\norm{x - x^*}_2 \le \sqrt{\kappa d\log(16\kappa/\delta)} \cdot R_{\delta}$, the expected number of gradient evaluations is constant, and moreover Proposition~\ref{prop:atmostd} shows that the number of gradient evaluations is never larger than $O(d\log(\tfrac{d\kappa}{\delta}))$, where we use that the condition number of the log-density in \eqref{eq:pixdef} is bounded by a constant. Therefore, it suffices to show in every iteration $0 \le k \le K$, the probability $\norm{x_k - x^*}_2 > \sqrt{\kappa d\log(16\kappa/\delta)} \cdot R_{\delta}$ is $O((d\log(d\kappa/\delta))^{-1})$. By the warmness assumption in Lemma~\ref{lem:warmstart}, and the concentration bound in Fact~\ref{fact:rbound}, the probability $x_0$ does not satisfy this bound is negligible (inverse exponential in $\kappa d^2\log(\kappa/\delta)$). Since warmness is monotonically decreasing with an exact sampler\footnote{This fact is well-known in the literature, and a simple proof is that if a distribution is warm, then taking one step of the Markov chain induces a convex combination of warm point masses, and is thus also warm.}, and the accumulated error due to inexactness of $\yor$ is at most $O((d\log(d\kappa/\delta))^{-1})$ through the whole algorithm, this holds for all iterations.
\end{proof} 	%

\section{Mixing time ingredients}
\label{sec:ingredients}

We now prove facts which are used in the mixing time analysis of $\sjd$. Throughout this section, as in the specification of $\sjd$, $f$ and $g$ are functions with properties as in \eqref{eq:pidef}, and share a minimizer $x^*$.

\subsection{Warm start}
\label{ssec:warmstart}
We show that we obtain a warm start for the distribution $\pih$ in algorithm $\sjd$ via one call to the restricted Gaussian oracle for $g$, by proving Lemma~\ref{lem:warmstart}.

\restatewarmstart*
\begin{proof}
By the definitions of $\pih$ and $\pistart$ in \eqref{eq:pihdef}, \eqref{eq:pistartdef}, we wish to bound everywhere the quantity
\begin{equation}\label{eq:warmness}\frac{d\pistart}{d\pih}(x) = \frac{Z_{\pih}}{\zstart} \cdot  \frac{\exp\left(-\frac{L}{2}\norm{x - x^*}_2^2-\frac{\eta L^2}{2}\norm{x - x^*}_2^2 - g(x)\right)}{\int_y\exp\left(-f(y) - g(x) - \frac{1}{2\eta}\norm{y - x}_2^2 - \frac{\eta L^2}{2}\norm{x - x^*}_2^2\right) dy}. \end{equation}
Here, $Z_{\pih}$ is as in Definition~\ref{def:normconst}, and we let $\zstart$ denote the normalization constant of $\pistart$, i.e.
\[\zstart \defeq  \int_x \exp\left(-\frac{L}{2}\norm{x - x^*}_2^2 - \frac{\eta L^2}{2}\norm{x - x^*}_2^2- g(x)\right) dx.\]
Regarding the first term of \eqref{eq:warmness}, 
the earlier derivation \eqref{eq:usefulbound} showed
\[
\int_y \exp\left(-f(y) - g(x) - \frac{1}{2\eta}\norm{y - x}_2^2 - \frac{\eta L^2}{2}\norm{x - x^*}_2^2\right)dy 
\le (2\pi\eta)^{\frac{d}{2}}\exp\left(-f(x) - g(x)\right).
\]
Then, integrating, we can bound the ratio of the normalization constants
\begin{equation}
\label{eq:zhzstart}
\begin{aligned}
\frac{Z_{\pih}}{Z_{\pistart}}  &\leq \frac{ \int_x(2\pi\eta)^{\frac{d}{2}}\exp\left(-f(x) - g(x)\right)dx}{ \int_x \exp\left(-\frac{L}{2}\norm{x - x^*}_2^2 - \frac{\eta L^2}{2}\norm{x - x^*}_2^2- g(x)\right) dx}  \\
&\leq \frac{ \int_x(2\pi\eta)^{\frac{d}{2}}\exp\left(-f(x^*)-\frac{\mu}{2}\norm{x - x^*}_2^2 - g(x)\right)dx}{ \int_x \exp\left(-\frac{L}{2}\norm{x - x^*}_2^2 - \frac{\mu}{2}\norm{x - x^*}_2^2- g(x)\right) dx} \\
&\leq (2\pi\eta)^{\frac{d}{2}}\exp\left(-f(x^*)\right) \left( 1 + \frac{L}{\mu}\right)^{\frac{d}{2}}.
\end{aligned}
\end{equation}
The second inequality followed from $f$ is $\mu$-strongly convex and $\eta L^2 \leq \mu$ by assumption. The last inequality followed from Proposition \ref{prop:normalizationratio}, where we used  $\frac{\mu}{2}\norm{x - x^*}_2^2 + g(x)$ is $\mu$-strongly convex. Next, to bound the second term of \eqref{eq:warmness}, notice first that 
\begin{align*}
\frac{\exp\left(-\frac{L}{2}\norm{x - x^*}_2^2-\frac{\eta L^2}{2}\norm{x - x^*}_2^2 - g(x)\right)}{\int_y\exp\left(-f(y) - g(x) - \frac{1}{2\eta}\norm{y - x}_2^2 - \frac{\eta L^2}{2}\norm{x - x^*}_2^2\right) dy} 
= \frac{\exp \left(-\frac{L}{2}\norm{x - x^*}_2^2\right) }{\int_y\exp\left(-f(y)  - \frac{1}{2\eta}\norm{y - x}_2^2\right) dy}.
\end{align*}
It thus suffices to lower bound $\exp \left(\frac{L}{2}\norm{x - x^*}_2^2\right)\int_y\exp\left(-f(y)  - \frac{1}{2\eta}\norm{y - x}_2^2 \right) dy$. We have
\begin{equation}
\label{eq:warmnessotherterm}
\begin{aligned}
\exp \left(\frac{L}{2}\norm{x - x^*}_2^2\right)\int_y\exp\left(-f(y)  - \frac{1}{2\eta}\norm{y - x}_2^2 \right) dy \\
\geq \exp\left(-f(x)+\frac{ L}{2}\norm{x - x^*}_2^2  \right)\int_y\exp\left(- \langle \nabla f(x),y-x\rangle  - \left(\frac{1}{2\eta}+\frac{L}{2}\right)\norm{y-x}_2^2\right) dy \\
= \exp\left(-f(x)+\frac{ L}{2}\norm{x - x^*}_2^2  \right) \left( \frac{2\pi\eta }{1+L\eta}\right)^{\frac{d}{2}} \exp\left(\frac{\eta}{2(1+L\eta )}\norm{\nabla f(x)}_2^2\right) \\
\geq \exp(-f(x^*)) \left( \frac{2\pi\eta }{1+L\eta}\right)^{\frac{d}{2}}
\end{aligned}
\end{equation}
The first and third steps followed from $L$-smoothness of $f$, and the second applied the Gaussian integral (Fact~\ref{fact:gaussiandensity}). Combining the bounds in \eqref{eq:zhzstart} and \eqref{eq:warmnessotherterm}, \eqref{eq:warmness} becomes
\begin{align*}
\frac{d\pistart}{d\pih}(x) \leq  \left( 1 + \frac{L}{\mu}\right)^{\frac{d}{2}} \left( 1 + L\eta\right)^{\frac{d}{2}} \leq 2(1+\kappa)^{\frac{d}{2}},
\end{align*}
where $x \in \R^d$ was arbitrary, which completes the proof.
\end{proof}

\subsection{Transitions of nearby points}
\label{ssec:tvclose}

Here, we prove Lemma~\ref{lem:tv_closepts}. Throughout this section, $\tran_x$ is the density of $x_k$, according to the steps in Lines 6 and 7 of $\sjd$ (Algorithm~\ref{alg:sjd}) starting at $x_{k - 1} = x$. We also define $\prop_x$ to be the density of $y_k$, by just the step in Line 6. We first make a simplifying observation.

\begin{lemma}
\label{lem:tranleqprop}
For any two points $x$, $x'$, we have
\[\tvd{\tran_x}{\tran_{x'}} \le \tvd{\prop_x}{\prop_{x'}}.\]
\end{lemma}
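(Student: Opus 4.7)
The plan is to recognize this as an instance of the data processing inequality for total variation distance. By the definition of $\sjd$ in Algorithm~\ref{alg:sjd}, Line 7 samples $x_k$ from the conditional density $\pi_{y_k}$ (defined in \eqref{eq:piydef}) that depends only on $y_k$ and is independent of the starting point $x_{k-1}$. Therefore, writing $\tran_x$ as a marginalization over the intermediate variable $y$, we have
\begin{equation*}
\tran_x(x') \;=\; \int_y \pi_y(x')\, \prop_x(y)\, dy,
\end{equation*}
and analogously for $\tran_{x'}$, where the kernel $y \mapsto \pi_y(\cdot)$ is the same in both expressions.

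With this common-kernel representation in hand, the proof reduces to invoking the contractivity of total variation under any Markov kernel. Concretely, I would compute
\begin{equation*}
\tvd{\tran_x}{\tran_{x'}} \;=\; \tfrac12 \int \left| \int_y \pi_y(x')\bigl(\prop_x(y) - \prop_{x'}(y)\bigr)\, dy \right| dx',
\end{equation*}
push the absolute value inside the $y$-integral via the triangle inequality, and then use Fubini along with the fact that $\int \pi_y(x')\, dx' = 1$ for every $y$ to obtain $\tvd{\tran_x}{\tran_{x'}} \le \tvd{\prop_x}{\prop_{x'}}$.

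There is no real obstacle here: the statement is exactly the standard fact that applying a shared Markov kernel cannot increase total variation distance, and the only content is observing that Line 7 induces such a shared kernel because its conditional distribution depends only on $y_k$ and not on $x_{k-1}$. This observation will be the key reason we can ignore Line 7 entirely in the subsequent bound on $\tvd{\tran_x}{\tran_{x'}}$, reducing the rest of the argument for Lemma~\ref{lem:tv_closepts} to a total variation bound between $\prop_x$ and $\prop_{x'}$.
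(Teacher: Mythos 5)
Your proposal is correct and takes essentially the same approach as the paper: both arguments hinge on the observation that Line 7 of $\sjd$ applies a kernel depending only on $y_k$ (not on $x_{k-1}$), so the step cannot increase total variation distance. The paper justifies this last fact via the coupling characterization of total variation, while you verify it directly by triangle inequality and Fubini; these are interchangeable proofs of the same standard contraction property.
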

\begin{proof}
This follows by the coupling characterization of total variation distance (see e.g. Chapter 5 of \cite{LevinPW09}). Per the optimal coupling of $y \sim \prop_x$ and $y' \sim \prop_{x'}$, whenever the total variation sets $y = y'$, we can couple the resulting distributions in Line 7 of $\sjd$ as well.
\end{proof}

Thus, it suffices to understand $\tvd{\prop_x}{\prop_{x'}}$ for nearby $x, x' \in \Omega_\delta$. Our proof of Lemma~\ref{lem:tv_closepts} combines two pieces: (1) bounding the ratio of normalization constants $Z_x$, $Z_{x'}$ of $\prop_x$ and $\prop_{x'}$ for nearby $x$, $x'$ in Lemma~\ref{lem:norm_ratio_closepts} and (2) the structural result Proposition~\ref{prop:min_perturb}. 
To bound the normalization constant ratio, we state two helper lemmas. Lemma~\ref{lem:ycloser} characterizes facts about the minimizer of 
\begin{equation}\label{eq:marginalfunc}f(y) + \frac{1}{2\eta}\norm{y - x}_2^2. \end{equation} 
\begin{lemma}
	\label{lem:ycloser}
	Let $f$ be convex with minimizer $x^*$, and $y_x$ minimize \eqref{eq:marginalfunc} for a given $x$. Then,
	\begin{enumerate}
		\item $\norm{y_x - y_{x'}}_2 \le \norm{x - x'}_2$.
		\item For any $x$, $\norm{y_x - x^*}_2 \le \norm{x - x^*}_2$.
		\item For any $x$ with $\norm{x - x^*}_2 \le R$, $\norm{x - y_x}_2 \le \eta LR$.
	\end{enumerate}
\end{lemma}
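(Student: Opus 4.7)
The plan is to treat the map $x \mapsto y_x$ as the proximal operator of $\eta f$, since $y_x$ is by definition $\textup{prox}_{\eta f}(x)$, and exploit its well-known non-expansiveness for part (1), reduce part (2) to part (1) via the observation $y_{x^*}=x^*$, and use first-order optimality together with $L$-smoothness for part (3).

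\textbf{Part (1).} I would start from the first-order optimality conditions for $y_x$ and $y_{x'}$, which give $x = y_x + \eta \nabla f(y_x)$ and $x' = y_{x'} + \eta \nabla f(y_{x'})$. Subtracting and taking squared norms yields
\[
\norm{x-x'}_2^2 = \norm{y_x-y_{x'}}_2^2 + 2\eta\inprod{y_x-y_{x'}}{\nabla f(y_x)-\nabla f(y_{x'})} + \eta^2\norm{\nabla f(y_x)-\nabla f(y_{x'})}_2^2.
\]
Convexity of $f$ makes the cross term nonnegative (monotonicity of the gradient), and the last term is nonnegative, so $\norm{x-x'}_2^2 \ge \norm{y_x-y_{x'}}_2^2$.

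\textbf{Part (2).} When $x = x^*$, the first-order optimality condition for $y_{x^*}$ reads $\nabla f(y_{x^*}) + \eta^{-1}(y_{x^*}-x^*)=0$; since $\nabla f(x^*)=0$ (as $x^*$ minimizes $f$), the choice $y_{x^*}=x^*$ solves this, and by strong convexity of the objective it is the unique minimizer. Applying part (1) with $x'=x^*$ then gives $\norm{y_x - x^*}_2 = \norm{y_x - y_{x^*}}_2 \le \norm{x - x^*}_2$.

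\textbf{Part (3).} Directly from first-order optimality, $x - y_x = \eta \nabla f(y_x)$, so $\norm{x-y_x}_2 = \eta \norm{\nabla f(y_x)}_2$. By $L$-smoothness of $f$ and $\nabla f(x^*)=0$, $\norm{\nabla f(y_x)}_2 \le L\norm{y_x - x^*}_2$, and part (2) bounds this by $L\norm{x-x^*}_2 \le LR$, giving $\norm{x-y_x}_2 \le \eta LR$.

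None of the three steps presents a real obstacle; the only mild care is in part (1), where one must invoke the monotonicity of $\nabla f$ (or equivalently convexity) to discard the cross term rather than any quantitative contraction, since we only need the non-expansive (not strictly contractive) bound. The argument uses only convexity of $f$ for parts (1) and (2), and picks up $L$-smoothness in part (3); strong convexity is not needed here beyond ensuring that $y_x$ is well-defined and unique.
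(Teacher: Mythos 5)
Your proof is correct, and for part (1) it takes a genuinely different and more elementary route than the paper. The paper differentiates the optimality identity $\eta\nabla f(y_x)=x-y_x$ along the segment $x_t=(1-t)x+tx'$, computes the Jacobian $\jac_x(y_{x_t})=(\id+\eta\nabla^2 f(y_{x_t}))^{-1}$, and integrates to get $y_{x'}-y_x=\int_0^1(\id+\eta\nabla^2 f(y_{x_t}))^{-1}(x'-x)\,dt$, then bounds the operator norm by $1$ using $\nabla^2 f\succeq 0$. You instead subtract the two optimality conditions, expand the squared norm, and discard the nonnegative cross term via monotonicity of $\nabla f$ --- the textbook proof that the proximal operator is nonexpansive. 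Your argument is shorter, avoids any appeal to twice differentiability or an integral representation, and isolates exactly the convexity fact being used; the paper's Jacobian approach is more machinery than is needed here, though it mirrors the continuity-method style they reuse for the harder structural bound in Proposition~\ref{prop:min_perturb}, which is likely why they chose it. Parts (2) and (3) are identical in both: part (2) reduces to part (1) via $y_{x^*}=x^*$, and part (3) combines $x-y_x=\eta\nabla f(y_x)$ with $L$-smoothness and part (2).
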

\begin{proof}By optimality conditions in the definition of $y_x$,
	\[\eta\nabla f(y_x) = x - y_x. \]
	Fix two points $x$, $x'$, and let $x_t \defeq (1 - t)x + tx'$. Letting $\jac_x(y_x)$ be the Jacobian matrix of $y_x$,
	\begin{align*}\frac{d}{dt} \eta\nabla f(y_{x_t}) = \frac{d}{dt}\left(x_t - y_{x_t}\right) &\implies \eta \nabla^2 f(y_{x_t}) \jac_x(y_{x_t}) (x' - x) = (\id - \jac_x(y_{x_t}))(x' - x)\\
	&\implies \jac_x(y_{x_t}) (x' - x) = (\id + \eta \nabla^2 f(y_{x_t}))^{-1}(x' - x).\end{align*}
	We can then compute
	\[y_{x'} - y_x = \int_0^1 \frac{d}{dt}y_{x_t} dt = \int_0^1 \jac_x(y_{x_t})(x' - x)dt = \int_0^1 (\id + \eta \nabla^2 f(y_{x_t}))^{-1}(x' - x)dt. \]
	By triangle inequality and convexity of $f$, the first claim follows:
	\[\norm{y_{x'} - y_x}_2 \le \int_0^1 \norm{(\id + \eta\nabla^2 f(y_{x_t}))^{-1}}_2 \norm{x' - x}_2 dt \le \norm{x' - x}_2. \]
	The second claim follows from the first by $y_{x^*} = x^*$. The third claim follows from the second via
	\[\norm{x - y_x}_2 = \eta\norm{\nabla f(y_x)}_2 \le \eta L\norm{y_x - x^*}_2 \le \eta LR.\]
\end{proof}
Next, Lemma~\ref{lem:gauint_in_closepts} states well-known bounds on the integral of a well-conditioned function $h$.
\begin{lemma}
	\label{lem:gauint_in_closepts}
	Let $h$ be a $L_h$-smooth, $\mu_h$-strongly convex function and let $y^*_h$ be its minimizer. Then
	\[\left(2\pi L_h^{-1}\right)^{\frac{d}{2}}\exp\left(-h(y^*_h)\right) \le \int_y \exp\left(-h(y)\right) \le \left(2\pi \mu_h^{-1}\right)^{\frac{d}{2}}\exp\left(-h(y^*_h)\right).\]
\end{lemma}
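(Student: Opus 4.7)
The plan is to sandwich $h$ between two quadratics centered at its minimizer $y^*_h$, and then integrate the resulting Gaussian densities using Fact~\ref{fact:gaussiandensity}. Since $y^*_h$ minimizes $h$, first-order optimality gives $\nabla h(y^*_h)=0$, so strong convexity and smoothness (as stated in the Preliminaries) immediately yield the pointwise bounds
\[
h(y^*_h) + \frac{\mu_h}{2}\norm{y - y^*_h}_2^2 \;\le\; h(y) \;\le\; h(y^*_h) + \frac{L_h}{2}\norm{y - y^*_h}_2^2
\]
for every $y \in \R^d$.

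Exponentiating and negating these inequalities preserves the direction of the comparison, so
\[
\exp\!\left(-h(y^*_h) - \tfrac{L_h}{2}\norm{y - y^*_h}_2^2\right) \;\le\; \exp(-h(y)) \;\le\; \exp\!\left(-h(y^*_h) - \tfrac{\mu_h}{2}\norm{y - y^*_h}_2^2\right).
\]
Integrating over $y \in \R^d$ and pulling the constant factor $\exp(-h(y^*_h))$ outside the integral on each side reduces the problem to evaluating two Gaussian integrals with covariance $L_h^{-1}\id$ and $\mu_h^{-1}\id$ respectively. Applying Fact~\ref{fact:gaussiandensity} with $\lambda = L_h^{-1}$ on the lower bound and $\lambda = \mu_h^{-1}$ on the upper bound gives exactly $(2\pi L_h^{-1})^{d/2}\exp(-h(y^*_h))$ and $(2\pi \mu_h^{-1})^{d/2}\exp(-h(y^*_h))$, which is the claim.

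There is no real obstacle: the only subtlety is making sure the quadratic sandwich uses $\nabla h(y^*_h)=0$ to drop the linear term, and then recognizing the integrals as unnormalized Gaussians to which Fact~\ref{fact:gaussiandensity} applies directly. The lemma is essentially the standard Laplace-approximation-type two-sided estimate for log-densities with bounded Hessian, specialized to the case where the expansion is taken exactly at the minimizer.
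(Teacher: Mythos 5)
Your proposal is correct and matches the paper's proof exactly: both sandwich $\exp(-h(y))$ between two Gaussian densities centered at $y^*_h$ using the quadratic upper and lower bounds (with the linear term vanishing since $\nabla h(y^*_h)=0$), then integrate via Fact~\ref{fact:gaussiandensity}.
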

\begin{proof}
	By smoothness and strong convexity,
	\[\exp\left(-h(y^*_h) - \frac{L_h}{2}\norm{y - y^*_h}_2^2\right) \le \exp(-h(y)) \le \exp\left(-h(y^*_h) - \frac{\mu_h}{2}\norm{y - y^*_h}_2^2\right).\]
	The result follows by Gaussian integrals, i.e.\ Fact~\ref{fact:gaussiandensity}.
\end{proof}

We now define the normalization constants of $\prop_x$ and $\prop_{x'}$:
\begin{equation}\label{eq:zxzxpdef}\begin{aligned}Z_x = \int_y \exp\left(-f(y) - \frac{1}{2\eta}\norm{y - x}_2^2\right)dy,\\ Z_{x'} = \int_y \exp\left(-f(y) - \frac{1}{2\eta}\norm{y - x'}_2^2\right)dy. \end{aligned}\end{equation}

We apply Lemma~\ref{lem:ycloser} and Lemma~\ref{lem:gauint_in_closepts} to bound the ratio of  $Z_x$ and $Z_{x'}$.

\begin{lemma}
	\label{lem:norm_ratio_closepts}
	Let $f$ be $\mu$-strongly convex and $L$-smooth. Let $x, x' \in \Omega_\delta$, for $\Omega_\delta$ defined in \eqref{eq:omegadelta}, and let $\norm{x-x'}_2\leq \Delta$. Then, the normalization constants $Z_x$ and $Z_{x'}$ in \eqref{eq:zxzxpdef} satisfy
	\[
	\frac{Z_x}{Z_{x'}} \leq1.05\exp \left(3LR\Delta + \frac{L\Delta^2}{2}\right).
	\]
\end{lemma}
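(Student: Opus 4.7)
The plan is to reduce $Z_x/Z_{x'}$ to an expectation under $\prop_{x'}$ via a change of variables, then bound the resulting moment generating function using smoothness of $f$ together with the Gaussian comparison Fact~\ref{fact:convexshrink}. Concretely, in the integral defining $Z_x$ substitute $y \mapsto z + (x-x')$: this aligns the quadratic with $\|z - x'\|_2^2$ and yields
\[
\frac{Z_x}{Z_{x'}} = \E_{z \sim \prop_{x'}}\left[\exp\left(f(z) - f(z + (x-x'))\right)\right].
\]
Writing $\Delta_v = x - x'$, $L$-smoothness applied in the reverse direction gives $f(z) \le f(z+\Delta_v) - \inprod{\nabla f(z+\Delta_v)}{\Delta_v} + \thalf L\|\Delta_v\|_2^2$, so by Cauchy--Schwarz and $\|\nabla f(w)\|_2 \le L\|w - x^*\|_2$ we obtain the pointwise bound $f(z) - f(z+\Delta_v) \le L\|z-x^*\|_2\Delta + \tfrac{3L}{2}\Delta^2$.

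The remaining task is to control $\E_{\prop_{x'}}[\exp(L\Delta \|z - x^*\|_2)]$. I would split $\|z-x^*\|_2 \le \|z - \by_{x'}\|_2 + \|\by_{x'} - x^*\|_2$ where $\by_{x'}$ is the mean of $\prop_{x'}$, and control $\|\by_{x'} - y_{x'}\|_2 \le \sqrt{d\eta}$ using that $\prop_{x'}$ is $\eta^{-1}$-strongly logconcave, combined with Lemma~\ref{lem:ycloser}.2 ($\|y_{x'} - x^*\|_2 \le \|x' - x^*\|_2 \le R$) to get $\|\by_{x'} - x^*\|_2 \le R + \sqrt{d\eta}$. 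For the fluctuation, Fact~\ref{fact:convexshrink} applied to the convex function $h(z) = \exp(L\Delta\|z\|_2)$ reduces the expectation to one under $\gamma \sim \Nor(0, \eta\id)$, and standard sub-Gaussian concentration for the $1$-Lipschitz map $\|\cdot\|_2$ under a Gaussian gives $\E_\gamma[\exp(L\Delta\|\gamma\|_2)] \le \exp(L\Delta\sqrt{d\eta} + L^2\Delta^2\eta/2)$.

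Collecting the contributions---$\tfrac{3L}{2}\Delta^2$ from smoothness, $LR\Delta$ plus $L\Delta\sqrt{d\eta}$ from shifting to the mean, and $L\Delta\sqrt{d\eta} + \tfrac{1}{2}L^2\Delta^2\eta$ from the Gaussian MGF---and using $L\eta \le 1$ together with $\sqrt{d\eta} \le R$ (which both hold for the $\eta$ in $\sjd$) folds the $\sqrt{d\eta}$ terms into multiples of $R$ and downgrades $L^2\eta\Delta^2$ to $L\Delta^2$, delivering an exponent of $3LR\Delta + O(L\Delta^2)$. The main obstacle is matching the precise constants in the claim: my derivation produces a coefficient on $L\Delta^2$ strictly larger than $1/2$, and the excess $\exp(O(L\Delta^2))$ factor---which is a uniformly bounded constant in the regime of interest---must be absorbed into the multiplicative $1.05$. (A parallel route using Lemma~\ref{lem:gauint_in_closepts} to write $Z_x/Z_{x'} \le (L_h/\mu_h)^{d/2}\exp(h_{x'}(y_x) - h_x(y_x))$ with $h_\bullet(y) = f(y) + \tfrac{1}{2\eta}\|y-\bullet\|_2^2$, expanding the inner difference, and applying Lemma~\ref{lem:ycloser}.3 to bound $\|y_x - x\|_2 \le \eta L R$, also works but requires the $\eta L = O(1/d)$ regime to tame the prefactor.)
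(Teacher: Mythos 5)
Your proposal is correct in substance but follows a genuinely different route from the paper. The paper sandwiches each of $Z_x, Z_{x'}$ between Gaussian integrals centered at the respective minimizers $y_x, y_{x'}$ via Lemma~\ref{lem:gauint_in_closepts} (incurring the prefactor $\bigl(\tfrac{L+1/\eta}{\mu+1/\eta}\bigr)^{d/2}\le 1.05$, which is where the implicit assumption $\eta^{-1}\ge 32Ld$ enters), and then bounds the difference of the two minimum values using smoothness together with all three parts of Lemma~\ref{lem:ycloser}, arriving at $3LR\Delta+\tfrac{L\Delta^2}{2}$ for \emph{any} $\Delta$. You instead use an exact change of variables to write $Z_x/Z_{x'}$ as the moment generating function $\E_{\prop_{x'}}[\exp(f(z)-f(z+(x-x')))]$, bound the integrand pointwise by smoothness, and control $\E_{\prop_{x'}}[\exp(L\Delta\|z-x^*\|_2)]$ via the mean--mode bound, Hargé's inequality (Fact~\ref{fact:convexshrink}), and Gaussian Lipschitz concentration; this buys a prefactor of exactly $1$ and avoids any $d$-dependent ratio of integrals, but it pushes the slack into the exponent, yielding roughly $3LR\Delta+2L\Delta^2$ under $\eta L\le 1$ and $\sqrt{d\eta}\le R$ (both valid under the hypotheses of Lemma~\ref{lem:tv_closepts}). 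The one caveat, which you correctly flag, is that your bound matches the stated inequality only when the excess $\exp(\tfrac{3}{2}L\Delta^2)$ fits inside the multiplicative $1.05$, i.e.\ when $L\Delta^2\lesssim 0.03$; this holds comfortably at the only point of use, $\Delta=\tfrac{\sqrt{\eta}}{10}$ with $\eta L\le 1$, but not for arbitrary pairs $x,x'\in\Omega_\delta$, so as a proof of the lemma as literally stated (free $\Delta$) yours is slightly weaker than the paper's, while being fully sufficient for the downstream conductance argument. Your parenthetical alternative is essentially the paper's argument, except the paper bounds $h_{x'}(y_{x'})-h_x(y_x)$ directly by smoothness and Lemma~\ref{lem:ycloser} rather than passing through $h_{x'}(y_x)$.
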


\begin{proof}
	First, applying Lemma~\ref{lem:gauint_in_closepts} to $Z_x$ and $Z_{x'}$ yields that the ratio is bounded by
	\begin{align*}\frac{Z_x}{Z_{x'}} &\le \frac{\exp\left(-f(y_x) - \frac{1}{2\eta}\norm{y_x - x}_2^2 \right)\left(2\pi\left(\mu + \frac{1}{\eta}\right)^{-1}\right)^{\frac{d}{2}}}{\exp\left(-f(y_{x'}) - \frac{1}{2\eta}\norm{y_{x'} - x}_2^2\right)\left(2\pi\left(L + \frac{1}{\eta}\right)^{-1}\right)^{\frac{d}{2}}} \\
	&\le 1.05\exp\left(f(y_{x'}) - f(y_x) + \frac{1}{2\eta}\left(\norm{y_{x'} - x'}_2^2 - \norm{y_x - x}_2^2\right)\right). \end{align*}
	Here, we used the bound for $\eta^{-1} \ge 32Ld$ that
	\[\left(\frac{L + \frac{1}{\eta}}{\mu + \frac{1}{\eta} }\right)^{d/2} \le 1.05.\]
	Regarding the remaining term, recall $x$, $x'$ both belong to $\Omega_\delta$, and $\norm{x - x'}_2 \le \Delta$. We have 
	\begin{align*}f(y_{x'}) - f(y_x) + \frac{1}{2\eta}\left(\norm{y_{x'} - x'}_2^2 - \norm{y_x - x}_2^2\right) \\
	\le \inprod{\nabla f(y_x)}{y_{x'} - y_x} + \frac{L}{2}\norm{y_{x'} - y_x}_2^2
	+ \frac{1}{2\eta}\inprod{y_{x'} - x' + y_x - x}{y_{x'} - y_x + x - x'} \\
	\le LR\Delta + \frac{L\Delta^2}{2} + \frac{1}{2\eta}\left(\norm{y_x - x}_2 + \norm{y_{x'} - x'}_2\right)\left(\norm{y_{x'} - y_x}_2 + \norm{x' - x}_2\right)\\
	\le LR\Delta + \frac{L\Delta^2}{2} +  \frac{2\eta LR}{2\eta}\left(\norm{y_{x'} - y_x}_2 + \norm{x' - x}_2\right) \le 3LR\Delta + \frac{L\Delta^2}{2}.\end{align*}
	The first inequality was smoothness and expanding the difference of quadratics. The second was by $\norm{\nabla f(y_x)}_2 \le L\norm{y_x - x^*}_2 \le LR$ and $\norm{y_{x'} - y_x}_2 \le \Delta$, where we used the first and second parts of Lemma~\ref{lem:ycloser}; we also applied Cauchy-Schwarz and triangle inequality. The third used the third part of Lemma~\ref{lem:ycloser}. Finally, the last inequality was by the first part of Lemma~\ref{lem:ycloser} and $\norm{x' - x}_2 \le \Delta$.
\end{proof}
We now are ready to prove Lemma~\ref{lem:tv_closepts}.
\restatetrantv*
\begin{proof}
First, by Lemma~\ref{lem:tranleqprop}, it suffices to show $\tvd{\prop_x}{\prop_{x'}} \le \thalf$. Pinsker's inequality states
\[
\tvd{\prop_x }{ \prop_{x'}} \leq\sqrt{\frac{1}{2}d_{\text{KL}}\left(\prop_x,\prop_{x'}\right)},
\]
where $d_{\text{KL}}$ is KL-divergence, so it is enough to show $d_{\text{KL}}\left(\prop_x,\prop_{x'}\right) \le \thalf$. Notice that
\begin{align*}
d_{\text{KL}}\left(\prop_x,\prop_{x'}\right) = \log\left(\frac{Z_{x'}}{Z_x}\right) 
+ \int_y \prop_x(y) \log\left(\frac{\exp\left(-f(y) - \frac{1}{2\eta}\norm{y - x}_2^2\right)}{\exp\left(-f(y) - \frac{1}{2\eta}\norm{y - x'}_2^2\right)}\right) dy.
\end{align*}
By Lemma~\ref{lem:norm_ratio_closepts}, the first term satisfies, for $\Delta \defeq \tfrac{\sqrt{\eta}}{10}$,
\[
\log\left(\frac{Z_{x'}}{Z_x}\right) \leq 3LR\Delta + \frac{L\Delta^2}{2} + \log(1.05).
\]
To bound the second term, we have 
\begin{align*} \int_y \prop_x(y) \log\left(\frac{\exp\left(-f(y) - \frac{1}{2\eta}\norm{y - x}_2^2\right)}{\exp\left(-f(y) - \frac{1}{2\eta}\norm{y - x'}_2^2\right)}\right) dy &= \frac{1}{2\eta}\int_y \prop_x(y) \left(\norm{y - x'}_2^2 - \norm{y - x}_2^2\right)dy\\
&= \frac{1}{2\eta}\int_y \prop_x(y)\inprod{x - x'}{2\left(y - x\right) + \left(x - x'\right)} dy\\
&\le \frac{\Delta^{2}}{2\eta}+\frac{\Delta}{\eta}\left\Vert \int_{y}y \mathcal{P}_{x}(y)dy-x\right\Vert_2.
\end{align*}
Here, the second line was by expanding and the third line was by $\norm{x - x'}_2 \le \Delta$ and Cauchy-Schwarz. By Proposition \ref{prop:min_perturb}, $\left\Vert \int_{y}y\mathcal{P}_{x}(y)dy-x\right\Vert_2 \leq 2\eta LR$, where by assumption the parameters satisfy the conditions of Proposition~\ref{prop:min_perturb}. Then, combining the two bounds, we have
\[
d_{\text{KL}}\left(\prop_x,\prop_{x'}\right) \leq 3LR\Delta + \frac{L\Delta^2}{2} +\frac{\Delta^{2}}{2\eta} +2LR\Delta + \log(1.05)= 5LR\Delta + \frac{L\Delta^2}{2} +\frac{\Delta^{2}}{2\eta} + \log(1.05).
\]
When $\Delta = \tfrac{\sqrt{\eta}}{10}$, $\eta L \le 1$, and $\eta L^2R^2 \leq \thalf$, we have the desired
\[d_{\text{KL}}\left(\prop_x,\prop_{x'}\right)  \le \frac{\sqrt{\eta}LR}{2} + \frac{L\eta}{200} + \frac{1}{200} + \log(1.05)\le \half.\]
\end{proof}

\subsection{Isoperimetry}
\label{ssec:isoperimetry}

In this section, we prove Lemma~\ref{lem:iso}, which asks to show that $\pih_{\Omega_\delta}$ satisfies a log-isoperimetric inequality \eqref{eq:logiso}. Here, we define $\pih_{\Omega_\delta}$ to be the conditional distribution of the $\pih$ $x$-marginal on set $\Omega_\delta$. We recall this means that for any partition $S_1$, $S_2$, $S_3$ of $\Omega_\delta$,
\[\pih_{\Omega_\delta}(S_3) \ge \frac{1}{2\psi}d(S_1, S_2) \cdot \min\left(\pih_{\Omega_\delta}(S_1), \pih_{\Omega_\delta}(S_2)\right) \cdot \sqrt{\log\left(1 + \frac{1}{\min\left(\pih_{\Omega_\delta}(S_1), \pih_{\Omega_\delta}(S_2)\right)}\right)}.\]
The following fact was shown in \cite{ChenDWY19}.
\begin{lemma}[\cite{ChenDWY19}, Lemma 11]\label{lem:logisostrongly}
Any $\mu$-strongly logconcave distribution $\pi$ satisfies the log-isoperimetric inequality \eqref{eq:logiso} with $\psi = \mu^{-\half}$.
\end{lemma}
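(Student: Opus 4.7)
The plan is to derive the log-isoperimetric inequality for $\pih_{\Omega_\delta}$ by transferring the corresponding inequality from the related distribution $\pi|_{\Omega_\delta}$ (the target density $\pi$ conditioned on $\Omega_\delta$), exploiting the density ratio bounds already established in Corollary~\ref{corr:densityratiodelta}. The approach crucially uses that $\Omega_\delta$ is a convex set and that $\pih$ and $\pi$ are close in density ratio on $\Omega_\delta$ up to a constant factor.

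First I would apply Lemma~\ref{lem:logisostrongly} to $\pi|_{\Omega_\delta}$. Since $\pi$ is $\mu$-strongly logconcave and $\Omega_\delta$ is a Euclidean ball (hence convex), the conditioning $\pi|_{\Omega_\delta}$ remains $\mu$-strongly logconcave---its negative log-density on $\Omega_\delta$ equals $f+g$, whose Hessian satisfies $\nabla^2(f+g) \succeq \mu\id$. Lemma~\ref{lem:logisostrongly} then yields that $\pi|_{\Omega_\delta}$ satisfies \eqref{eq:logiso} with $\psi = \mu^{-1/2}$, for all partitions $S_1, S_2, S_3$ of $\Omega_\delta$.

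Next I would quantify the conditional density ratio between $\pi|_{\Omega_\delta}$ and $\pih_{\Omega_\delta}$. Corollary~\ref{corr:densityratiodelta} gives $\tfrac{d\pi}{d\pih}(x) \in [\tfrac{1}{2}, 2]$ for $x \in \Omega_\delta$; integrating this bound over $\Omega_\delta$ yields $\pih(\Omega_\delta)/\pi(\Omega_\delta) \in [\tfrac{1}{2}, 2]$ as well. Writing
\[\frac{d\pi|_{\Omega_\delta}}{d\pih_{\Omega_\delta}}(x) = \frac{d\pi}{d\pih}(x) \cdot \frac{\pih(\Omega_\delta)}{\pi(\Omega_\delta)} \in \left[\tfrac{1}{4},\, 4\right]\]
pointwise on $\Omega_\delta$, I obtain $\tfrac{1}{4}\pih_{\Omega_\delta}(A) \le \pi|_{\Omega_\delta}(A) \le 4\pih_{\Omega_\delta}(A)$ for every measurable $A \subseteq \Omega_\delta$.

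Finally I would transfer the log-isoperimetric inequality factor-by-factor. For a partition $S_1, S_2, S_3$ of $\Omega_\delta$, the three quantities on the right-hand side of \eqref{eq:logiso} each convert from $\pi|_{\Omega_\delta}$ to $\pih_{\Omega_\delta}$: (i) $\pih_{\Omega_\delta}(S_3) \ge \tfrac{1}{4}\pi|_{\Omega_\delta}(S_3)$; (ii) $\min(\pi|_{\Omega_\delta}(S_1), \pi|_{\Omega_\delta}(S_2)) \ge \tfrac{1}{4}\min(\pih_{\Omega_\delta}(S_1), \pih_{\Omega_\delta}(S_2))$; (iii) for the logarithmic factor, I would invoke the Bernoulli-type inequality $\log(1+t/c) \ge \tfrac{1}{c}\log(1+t)$ for $c \ge 1$ and $t \ge 0$ (which follows from $(1+t)^{1/c} \le 1 + t/c$) with $c=4$ and $t = 1/\min(\pih_{\Omega_\delta}(S_i))$. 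The distance $d(S_1, S_2)$ is a purely geometric quantity, unchanged by reweighting. Chaining these estimates produces the claimed \eqref{eq:logiso} for $\pih_{\Omega_\delta}$ with $\psi = 8\mu^{-1/2}$. The main obstacle is the constant-tracking in step (iii): without the Bernoulli refinement of the logarithm, the naive substitution would cost a $\log$ factor rather than a constant, so careful bookkeeping is required to keep the final constant bounded by $8$.
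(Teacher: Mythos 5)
Your proposal does not actually prove the statement in question. The statement here is Lemma~\ref{lem:logisostrongly} itself: that \emph{every} $\mu$-strongly logconcave distribution on $\R^d$ satisfies the log-isoperimetric inequality \eqref{eq:logiso} with $\psi = \mu^{-\half}$. This is a general isoperimetry result for strongly logconcave measures, which the paper does not prove but imports verbatim from \cite{ChenDWY19} (their Lemma 11); a self-contained proof would require genuinely different machinery, e.g.\ a localization argument reducing to one-dimensional logconcave measures, or a Gaussian comparison/isoperimetry argument in the spirit of Bakry--Ledoux, establishing that for any partition $S_1, S_2, S_3$ of $\R^d$ one has $\pi(S_3) \ge \tfrac{\sqrt{\mu}}{2}\, d(S_1,S_2) \min(\pi(S_1),\pi(S_2))\sqrt{\log(1 + 1/\min(\pi(S_1),\pi(S_2)))}$. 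None of that appears in your write-up.

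What you have written is instead a proof of the downstream result Lemma~\ref{lem:iso} (isoperimetry of $\pih_{\Omega_\delta}$ with $\psi = 8\mu^{-\half}$), and its very first step is ``apply Lemma~\ref{lem:logisostrongly} to $\pi|_{\Omega_\delta}$'' --- i.e.\ you invoke as a black box exactly the claim you were asked to establish, which makes the attempt circular with respect to the target statement. As an aside, your transfer argument for Lemma~\ref{lem:iso} is essentially the paper's own (density-ratio bounds from Corollary~\ref{corr:densityratiodelta}, plus the observation that conditioning on the convex ball $\Omega_\delta$ preserves $\mu$-strong logconcavity, plus an elementary inequality to absorb the constant inside the logarithm; the paper uses $\sqrt{\log(1+\alpha)} \le 2\sqrt{\log(1+\alpha/2)}$ where you use a Bernoulli-type bound), and your explicit $[\tfrac14,4]$ bookkeeping for the conditional density ratio is if anything more careful than the paper's. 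But that does not repair the central problem: the statement under review is the isoperimetric inequality for strongly logconcave measures itself, and your proposal supplies no proof of it.
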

Observe that $\pi_{\Omega_\delta}$, the restriction of $\pi$ to the convex set $\Omega_\delta$, is $\mu$-strongly logconcave by the definition of $\pi$ \eqref{eq:pidef}, so it satisfies a log-isoperimetric inequality. We now combine this fact with the relative density bounds Lemma~\ref{lem:densityratio} to prove Lemma~\ref{lem:iso}.

\restateiso*
\begin{proof}
Fix some partition $S_1$, $S_2$, $S_3$ of $\Omega_\delta$, and without loss of generality let $\pih_{\Omega_\delta}(S_1) \le \pih_{\Omega_\delta}(S_2)$. First, by applying Corollary~\ref{corr:densityratiodelta}, which shows $\tfrac{d\pi}{d\pih}(x) \in [\thalf, 2]$ everywhere in $\Omega_\delta$, we have the bounds
\[\frac{1}{2}\pi_{\Omega_\delta}(S_1) \leq \pih_{\Omega_\delta}(S_1) \leq 2\pi_{\Omega_\delta}(S_1), \; \frac{1}{2}\pi_{\Omega_\delta}(S_2) \leq \pih_{\Omega_\delta}(S_2) \leq 2\pi_{\Omega_\delta}(S_2),\;\text{and} \;\pih_{\Omega_\delta}(S_3) \geq \frac{1}{2}\pi_{\Omega_\delta}(S_3).\] 
Therefore, we have the sequence of conclusions
\begin{align*}
\pih_{\Omega_\delta}(S_3) &\ge \frac{1}{2}\pi_{\Omega_\delta}(S_3) \\
&\ge \frac{d(S_1, S_2)\sqrt{\mu}}{4} \cdot \min\left(\pi_{\Omega_\delta}(S_1), \pi_{\Omega_\delta}(S_2)\right) \cdot \sqrt{\log\left(1 + \frac{1}{\min\left(\pi_{\Omega_\delta}(S_1), \pi_{\Omega_\delta}(S_2)\right)}\right)}\\
	&\ge \frac{d(S_1, S_2)\sqrt{\mu}}{8} \cdot \pih_{\Omega_\delta}(S_1) \cdot \sqrt{\log\left(1 + \frac{1}{2\pih_{\Omega_\delta}(S_1)}\right)} \\
	&\ge \frac{d(S_1, S_2)\sqrt{\mu}}{16}\cdot \pih_{\Omega_\delta}(S_1) \cdot \sqrt{\log\left(1 + \frac{1}{\pih_{\Omega_\delta}(S_1)}\right)}.
\end{align*}
Here, the second line was by applying Lemma~\ref{lem:logisostrongly} to the $\mu$-strongly logconcave distribution $\pi_{\Omega_\delta}$, and the final line used $\sqrt{\log(1 + \alpha)} \le 2\sqrt{\log(1 + \tfrac{\alpha}{2})}$ for all $\alpha > 0$.
\end{proof}

\subsection{Correctness of $\yor$}
\label{ssec:yoracle}
In this section, we show how we can sample $y$ efficiently in the alternating scheme of the algorithm \sjd, within an extremely high probability region. Specifically, for any $x$ with $\norm{x - x^*}_2 \le \sqrt{\kappa d\log(16\kappa/\delta)} \cdot R_\delta$, where $R_\delta$ is defined in \eqref{eq:omegadelta}, we give a method for implementing 
\[\text{draw } y \propto \exp\left(-f(y) - \frac{1}{2\eta}\norm{y - x}_2^2\right)dy. \]
The algorithm is Algorithm~\ref{alg:yor}, which is a simple rejection sampling scheme.

\begin{algorithm}[ht!]\caption{$\yor(f, x, \eta, \delta)$}
	\label{alg:yor}
	\textbf{Input:} $f$ of form \eqref{eq:pidef} with minimizer $x^*$, $\eta > 0$, $\delta \in [0, 1]$, $x \in \R^d$.\\
	\textbf{Output:} If $\norm{x - x^*}_2 \le \sqrt{\kappa d\log(16\kappa/\delta)}\cdot R_\delta$, return exact sample from distribution with density $\propto \exp(-f(y) - \tfrac{1}{2\eta}\norm{y - x}_2^2)$ (see \eqref{eq:omegadelta} for definition of $R_{\delta}$). Otherwise, return sample within $\delta$ TV from distribution with density $\propto \exp(-f(y) - \tfrac{1}{2\eta}\norm{y - x}_2^2)$.
	\begin{algorithmic}[1]
		\If{$\norm{x - x^*}_2 \le \sqrt{\kappa d\log(16\kappa/\delta)}\cdot R_\delta$}
		\While{\textbf{true}}
		\State Draw $y \sim \Nor(x - \eta \nabla f(x), \eta \id)$
		\State $\tau \sim \text{Unif}[0, 1]$
		\If{$\tau \le  \exp(f(x) + \inprod{\nabla f(x)}{y - x} - f(y))$}
		\State \Return $y$
		\EndIf
		\EndWhile
		\EndIf
		\State \Return Sample $x$ within TV $\delta$ from density $\propto \exp(-f(y) - \tfrac{1}{2\eta}\norm{y - x}_2^2)$ using \cite{ChenDWY19}
	\end{algorithmic}
\end{algorithm}
We first recall properties of rejection sampling (an ``exact'' version of Lemma~\ref{lem:reject_approx_proof} and Corollary~\ref{corr:unbiased_reject_approx}).
\begin{definition}[Rejection sampling]
	\label{def:reject}
	Let $\pi$, $\pih$ be distributions with $\tfrac{d\pi}{dx}(x) \propto p(x)$, $\frac{d\pih}{dx}(x) \propto \hp(x)$. Moreover, suppose for some $C \ge 1$, and all $x \in \R^d$,
	\[\frac{p(x)}{\hp(x)} \le C.\]
	We call the following scheme rejection sampling: repeat independent runs of the following procedure until a point is outputted.
	\begin{enumerate}
		\item Draw $x \sim \pih$.
		\item With probability $\tfrac{p(x)}{C\hat{p}(x)}$, output $x$.
	\end{enumerate}
\end{definition}

\begin{lemma}\label{lem:reject_proof}
Rejection sampling terminates in 
\[\frac{C\int_x \hp(x)dx}{\int_x p(x) dx}.\]
samples from $\pih$ in expectation, and the distribution of the output point is $\pi$.
\end{lemma}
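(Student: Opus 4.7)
The plan is to compute the per-iteration acceptance probability and apply it twice: once for a geometric-trial argument bounding the expected number of calls, and once as a conditional-density computation identifying the output law. Let $Z \defeq \int p(x) dx$ and $\hat{Z} \defeq \int \hp(x) dx$, so that $\frac{d\pi}{dx}(x) = p(x)/Z$ and $\frac{d\pih}{dx}(x) = \hp(x)/\hat{Z}$. The assumption $p(x)/\hp(x) \le C$ ensures that the acceptance probability $p(x)/(C\hp(x)) \in [0,1]$ whenever $\hp(x) > 0$, so Step 2 of Definition~\ref{def:reject} is well-defined.

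First, I would compute the probability that any particular round accepts. Since $x$ is drawn from $\pih$ and is then accepted with probability $p(x)/(C\hp(x))$, the per-round acceptance probability equals
\[
q \defeq \int \frac{p(x)}{C\,\hp(x)} \cdot \frac{\hp(x)}{\hat{Z}}\, dx \;=\; \frac{1}{C\hat{Z}}\int p(x)\, dx \;=\; \frac{Z}{C\hat{Z}}.
\]
The rounds are independent by construction, so the number of proposals drawn from $\pih$ until the first acceptance is geometric with success probability $q$. Its expectation is $1/q = C\hat{Z}/Z$, which is the claimed bound.

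Second, I would identify the conditional distribution of the output by Bayes' rule. The joint ``density'' that round~1 proposes $x$ and accepts is $(\hp(x)/\hat{Z})\cdot (p(x)/(C\hp(x)))$; dividing by the marginal acceptance probability $q = Z/(C\hat{Z})$ gives
\[
\frac{(\hp(x)/\hat{Z})\,(p(x)/(C\hp(x)))}{Z/(C\hat{Z})} \;=\; \frac{p(x)}{Z} \;=\; \frac{d\pi}{dx}(x),
\]
so conditioned on acceptance, the output is distributed exactly as $\pi$. Because the rounds are i.i.d.\ and the scheme outputs the first accepted point, the unconditional output law is also $\pi$.

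There is no real obstacle; this is the textbook analysis of rejection sampling, and the only point to verify carefully is that the acceptance probability lies in $[0,1]$ (which is immediate from the hypothesis) and that the termination is almost sure when $Z > 0$ (which follows because $q > 0$, so the geometric random variable is finite almost surely). I would present the argument in the two short displays above, mirroring the derivation already used in the proof of Lemma~\ref{lem:reject_approx_proof} specialized to the exact setting $\delta = 0$, $\eps' = 0$.
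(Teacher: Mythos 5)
Your proof is correct and follows essentially the same route as the paper: you compute the per-round acceptance probability $Z/(C\hat Z)$, invoke independence (via the geometric distribution) to get the expected number of calls, and apply Bayes' rule to identify the conditional output density as $\propto p(x)$. The only difference is that you spell out the geometric-variable and almost-sure-termination details that the paper leaves implicit.
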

\begin{proof}
The second claim follows from Bayes' rule which implies the conditional density of the output point is proportional to $\hat{p}(x) \cdot \tfrac{p(x)}{C\hat{p(x)}} \propto p(x)$, so the distribution is $\pi$. To see the first claim, the probability any sample outputs is
\[\int_x \frac{p(x)}{C\hat{p}(x)} d\pih(x) = \frac{1}{C}\int_x \frac{\int_x p(x) dx}{\int_x \hp(x)dx} d\pi(x) = \frac{\int_x p(x) dx}{C\int_x \hp(x)dx}.\]
The conclusion follows by independence and linearity of expectation.
\end{proof}

We now prove Lemma~\ref{lem:yor} via a direct application of Lemma~\ref{lem:reject_proof}.

\restateyor*
\begin{proof}
For $\norm{x - x^*}_2 \le \sqrt{\kappa d\log(16\kappa/\delta)}\cdot R_\delta$, $\yor$ is a rejection sampling scheme with 
\[p(y) = \exp\left(-f(y) - \frac{1}{2\eta}\norm{y - x}_2^2\right),\; \hp(y) = \exp\left(-f(x) - \inprod{\nabla f(x)}{y - x} -\frac{1}{2\eta}\norm{y - x}_2^2\right).\]
It is clear that $p(y) \le \hp(y)$ everywhere by convexity of $f$, so we may choose $C = 1$. To bound the expected number of iterations and obtain the desired conclusion, Lemma~\ref{lem:reject_proof} requires a bound on
\begin{equation}\label{eq:phatpratio}\frac{\int_y \exp\left(-f(x) - \inprod{\nabla f(x)}{y - x} -\frac{1}{2\eta}\norm{y - x}_2^2\right)dy}{\int_y \exp\left(-f(y) - \frac{1}{2\eta}\norm{y - x}_2^2\right)dy},\end{equation}
the ratio of the normalization constants of $\hp$ and $p$. First, by Fact~\ref{fact:gaussiandensity},
\[\int_y \exp\left(-f(x) - \inprod{\nabla f(x)}{y - x} -\frac{1}{2\eta}\norm{y - x}_2^2\right)dy = \exp\left(-f(x) + \frac{\eta}{2}\norm{\nabla f(x)}_2^2\right)(2\pi\eta)^{\frac{d}{2}}.\]
Next, by smoothness and Fact~\ref{fact:gaussiandensity} once more,
\begin{align*}
\int_y \exp\left(-f(y) - \frac{1}{2\eta}\norm{y - x}_2^2\right)dy 
&\ge \int_y \exp\left(-f(x) - \inprod{\nabla f(x)}{y - x} - \frac{1 + \eta L}{2\eta}\norm{y - x}_2^2\right) dy \\
&= \exp\left(-f(x) + \frac{\eta}{2(1 + \eta L)}\norm{\nabla f(x)}_2^2\right)\left(\frac{2\pi \eta}{1 + \eta L}\right)^{\frac{d}{2}}.
\end{align*}
Taking a ratio, the quantity in \eqref{eq:phatpratio} is bounded above by
\begin{align*}\exp\left(\left(\frac{\eta}{2} - \frac{\eta}{2(1 + \eta L)}\right)\norm{\nabla f(x)}_2^2\right)\left(1 + \eta L\right)^{\frac{d}{2}} &\le 1.5\exp\left(\frac{\eta^2 L}{2(1 + \eta L)}\norm{\nabla f(x)}_2^2\right)\\
&\le 1.5\exp\left(\frac{\eta^2 L^3}{2}\cdot \left(\frac{16\kappa d^2\log^2(16\kappa/\delta)}{\mu}\right)\right) \le 2.\end{align*}
The first inequality was $(1 + \eta L)^{\frac{d}{2}} \le 1.5$, the second used smoothness and the assumed bound on $\norm{x - x^*}_2$, and the third again used our choice of $\eta$.
\end{proof} 	%

\section{Structural results}
\label{sec:structural}

Here, we prove two structural results about distributions whose negative log-densities are small perturbations of a quadratic, which obtain tighter concentration guarantees compared to naive bounds on strongly logconcave distributions. They are used in obtaining our bounds in Section~\ref{sec:ingredients}, but we hope both the statements and proof techniques are of independent interest to the community. Our first structural result is a bound on normalization constant ratios, used throughout the paper. 

\begin{proposition}
\label{prop:normalizationratio}
	Let $f: \R^d \rightarrow \R$ be  $\mu$-strongly convex with minimizer $x^*$, and let $\lambda > 0$. Then,
	\[\frac{\int \exp(-f(x)) dx}{\int \exp\left(-f(x) - \frac{1}{2\lambda}\norm{x - x^*}_2^2\right)dx} \le \left(1 + \frac{1}{\mu\lambda}\right)^{\frac{d}{2}}. \]
\end{proposition}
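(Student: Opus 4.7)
The plan is to reduce the inequality to a comparison between two Gaussian expectations of $e^{-h}$ for a nonnegative convex $h$ vanishing at the origin, and then prove that comparison by a simple scaling argument. First I would define the ``convex remainder''
\[
\tilde{g}(x) \defeq f(x) - f(x^*) - \frac{\mu}{2}\norm{x - x^*}_2^2,
\]
which is convex by $\mu$-strong convexity of $f$, nonnegative (by the quadratic lower bound from strong convexity at the minimizer $x^*$), and satisfies $\tilde{g}(x^*)=0$. Writing $f = f(x^*) + \tfrac{\mu}{2}\norm{\cdot - x^*}_2^2 + \tilde{g}$, the numerator becomes $e^{-f(x^*)}\int e^{-\tilde{g}(x) - (\mu/2)\norm{x-x^*}_2^2} dx$ and the denominator becomes $e^{-f(x^*)}\int e^{-\tilde{g}(x) - (\mu_2/2)\norm{x-x^*}_2^2} dx$, where I set $\mu_1\defeq\mu$ and $\mu_2\defeq\mu + 1/\lambda$.

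Next, I would shift $y = x - x^*$ and let $h(y)\defeq \tilde{g}(y+x^*)$, so $h$ is convex, nonnegative, and minimized at $y = 0$ with $h(0)=0$. Pulling out the Gaussian normalization $(2\pi/\mu_i)^{d/2}$ for each integral, the ratio in question rearranges to
\[
\frac{\int e^{-f(x)} dx}{\int e^{-f(x) - \frac{1}{2\lambda}\norm{x-x^*}_2^2} dx} \;=\; \left(\frac{\mu_2}{\mu_1}\right)^{d/2} \cdot \frac{\E_{\gamma_{\mu_1}}\!\left[e^{-h(y)}\right]}{\E_{\gamma_{\mu_2}}\!\left[e^{-h(y)}\right]},
\]
where $\gamma_{\mu_i}$ denotes $\Nor(0, \mu_i^{-1}\id)$. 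Since $(\mu_2/\mu_1)^{d/2} = (1 + 1/(\mu\lambda))^{d/2}$ is exactly the target bound, it remains to show that the expectation ratio is at most $1$, i.e.\ that $\E_{\gamma_{\mu_1}}[e^{-h}]\le \E_{\gamma_{\mu_2}}[e^{-h}]$.

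For the final step, couple the two Gaussians through a common $Z\sim\Nor(0,\id)$ so that $Z/\sqrt{\mu_i}\sim\gamma_{\mu_i}$, and set $c\defeq\sqrt{\mu_1/\mu_2}\in(0,1]$, so that $Z/\sqrt{\mu_2} = c\cdot Z/\sqrt{\mu_1}$. Convexity of $h$ together with $h(0)=0$ gives, for any $z$, $h(cz) \le c\,h(z) + (1-c)\,h(0) = c\,h(z) \le h(z)$, where the last inequality uses $h\ge 0$. Applied pointwise, $h(Z/\sqrt{\mu_2}) \le h(Z/\sqrt{\mu_1})$ almost surely, hence $e^{-h(Z/\sqrt{\mu_2})}\ge e^{-h(Z/\sqrt{\mu_1})}$, and taking expectations yields the desired monotonicity $\E_{\gamma_{\mu_1}}[e^{-h}] \le \E_{\gamma_{\mu_2}}[e^{-h}]$. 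Combining with the displayed identity completes the proof.

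The only conceptual step is the decomposition that separates $f$ into a Gaussian piece of strength $\mu$ and a nonnegative convex remainder vanishing at $x^*$; once that is done, the Gaussian normalization supplies the $(1+1/(\mu\lambda))^{d/2}$ factor exactly, and the remainder is harmless by scaling. Naively applying Jensen's inequality to $\E_\pi[e^{-\norm{x-x^*}_2^2/(2\lambda)}]$ (together with Fact~\ref{fact:distxstar}) only gives the weaker bound $e^{d/(2\mu\lambda)}$, so the decomposition is genuinely needed; this is the ``main obstacle,'' but it is lightweight once observed and the calculation is tight (e.g.\ exact equality when $f$ is itself a quadratic centered at $x^*$).
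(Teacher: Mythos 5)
Your proof is correct, and it takes a genuinely different route from the paper. The paper argues by continuous interpolation: it defines $R(\alpha)$ as the ratio of $\int \exp(-f(x) - \tfrac{1}{2\lambda\alpha}\norm{x-x^*}_2^2)dx$ to the denominator, differentiates in $\alpha$, bounds the derivative using the second-moment bound $\E_\pi[\norm{x-x^*}_2^2] \le d/\mu$ for $\mu$-strongly logconcave $\pi$ (Fact~\ref{fact:distxstar}), solves the resulting differential inequality, and sends $\alpha \to \infty$. Your argument instead separates $f$ into the exact $\mu$-quadratic centered at $x^*$ plus a convex, nonnegative remainder $h$ vanishing at the center, pulls out the Gaussian normalization so the target constant $(1+1/(\mu\lambda))^{d/2}$ appears exactly, and then reduces to showing $\E_{\gamma_{\mu_1}}[e^{-h}] \le \E_{\gamma_{\mu_2}}[e^{-h}]$ for $\mu_1 \le \mu_2$ — which you establish pointwise via the scaling coupling $Z/\sqrt{\mu_2} = c\,Z/\sqrt{\mu_1}$ with $c \in (0,1]$ and the elementary inequality $h(cz) \le c\,h(z) \le h(z)$. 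The steps are all sound: $\tilde g$ is convex because $f - \tfrac{\mu}{2}\norm{\cdot - x^*}_2^2$ has nonnegative Hessian under the paper's definition of strong convexity, and $\nabla f(x^*)=0$ makes $x^*$ the minimizer of $\tilde g$ with $\tilde g(x^*)=0$, so $h \ge 0$ and $h(0)=0$ as claimed. What each approach buys: the paper's differential-inequality technique is reused elsewhere (notably Proposition~\ref{prop:min_perturb}), so stating it this way amortizes the machinery; your proof is shorter, entirely self-contained (it does not invoke the external Fact~\ref{fact:distxstar}/\cite{DurmusM19}), and makes the tightness transparent — equality holds iff $h \equiv 0$, i.e.\ $f$ is exactly the $\mu$-quadratic.
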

\begin{proof}
	Define the function 
	\[R(\alpha) \defeq \frac{\int \exp\left(-f(x) - \frac{1}{2\lambda\alpha}\norm{x - x^*}_2^2\right) dx}{\int \exp\left(-f(x) - \frac{1}{2\lambda}\norm{x - x^*}_2^2\right)dx}. \]
	Let $d\pi_\alpha(x)$ be the density proportional to $\exp\left(-f(x) - \tfrac{1}{2\lambda\alpha}\norm{x - x^*}_2^2\right)dx$. We compute
	\begin{align*}\frac{d}{d\alpha} R(\alpha) &= \int \frac{\exp\left(-f(x) - \frac{1}{2\lambda\alpha}\norm{x - x^*}_2^2\right)}{\int \exp\left(-f(x) - \frac{1}{2\lambda}\norm{x - x^*}_2^2\right)dx} \frac{1}{2\lambda\alpha^2} \norm{x - x^*}_2^2 dx \\
	&= \frac{R(\alpha)}{2\lambda\alpha^2} \int \frac{\exp\left(-f(x) - \frac{1}{2\lambda\alpha}\norm{x - x^*}_2^2\right)\norm{x - x^*}_2^2}{\int \exp\left(-f(x) - \frac{1}{2\lambda\alpha}\norm{x - x^*}_2^2\right)dx}   dx \\
	&= \frac{R(\alpha)}{2\lambda\alpha^2}\int \norm{x - x^*}_2^2 d\pi_\alpha(x) \le \frac{R(\alpha)}{2\alpha} \cdot \frac{d}{\mu\lambda\alpha + 1}. \end{align*}
	Here, the last inequality was by Fact~\ref{fact:distxstar}, using the fact that the function $f(x) + \tfrac{1}{2\lambda\alpha}\norm{x - x^*}_2^2$ is $\mu + \tfrac{1}{\lambda\alpha}$-strongly convex. Moreover, note that $R(1) = 1$, and
	\[\frac{d}{d\alpha}\log\left(\frac{\alpha}{\mu\lambda\alpha + 1}\right) = \frac{1}{\alpha} - \frac{\mu\lambda}{\mu\lambda\alpha + 1}=  \frac{1}{\mu\lambda\alpha^2 + \alpha}.\]
	Solving the differential inequality
	\[\frac{d}{d\alpha} \log(R(\alpha)) = \frac{dR(\alpha)}{d\alpha} \cdot \frac{1}{R(\alpha)} \le \frac{d}{2} \cdot \frac{1}{\mu\lambda\alpha^2 + \alpha}, \]
	we obtain the bound for any $\alpha \ge 1$ (since $\log(R(1)) = 0$)
	\[\log(R(\alpha)) \le \frac{d}{2}\log\left(\frac{\mu\lambda\alpha + \alpha}{\mu\lambda\alpha + 1}\right) \implies R(\alpha) \le \left(\frac{\mu\lambda\alpha + \alpha}{\mu\lambda\alpha + 1}\right)^{\frac{d}{2}} \le \left(1 + \frac{1}{\mu\lambda}\right)^{\frac{d}{2}}.\]
	Taking a limit $\alpha \rightarrow \infty$ yields the conclusion.
\end{proof}

Our second structural result uses a similar proof technique to show that the mean of a bounded perturbation $f$ of a Gaussian is not far from its mode, as long as the gradient of the mode is small. We remark that one may directly apply strong logconcavity, i.e.\ a variant of Fact~\ref{fact:distxstar}, to obtain a weaker bound by roughly a $\sqrt{d}$ factor, which would result in a loss of $\Omega(d)$ in the guarantees of Theorem~\ref{thm:mainclaim}. This tighter analysis is crucial in our improved mixing time result. 

Before stating the bound, we apply Fact~\ref{fact:convexshrink} to the convex functions $h(x) = (\theta^\top x)^2$ and $h(x) = \norm{x}_2^4$ to obtain the following conclusions which will be used in the proof of Proposition~\ref{prop:min_perturb}.

\begin{corollary}
	\label{corr:slcmomentbounds}
	Let $\pi$ be a $\mu$-strongly logconcave density. Then,
	\begin{enumerate}
		\item $\E_{\pi}[(\theta^\top(x - \E_\pi[x]))^2] \le \mu^{-1}$, for all unit vectors $\theta$.
		\item $\E_{\pi}[\norm{x - \E_\pi[x]}_2^4] \le 3d^2\mu^{-2}$.
	\end{enumerate}
\end{corollary}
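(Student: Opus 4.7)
The plan is to apply Fact~\ref{fact:convexshrink} (Harg\'e's inequality) directly to two explicit convex test functions, and then reduce each expectation to a Gaussian moment computation with covariance $\mu^{-1}\id$.

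For part 1, I would take $h(x) = (\theta^\top x)^2$, which is convex as it is a positive semidefinite quadratic form. Fact~\ref{fact:convexshrink} then gives
\[
\E_\pi[(\theta^\top(x - \E_\pi[x]))^2] \;\le\; \E_{\gamma_\mu}[(\theta^\top(x - \E_{\gamma_\mu}[x]))^2] \;=\; \theta^\top (\mu^{-1}\id) \theta \;=\; \mu^{-1},
\]
using $\norm{\theta}_2 = 1$. This is the only computation required.

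For part 2, I would take $h(x) = \norm{x}_2^4 = (\norm{x}_2^2)^2$, which is convex as the composition of the nondecreasing convex function $t \mapsto t^2$ (on $[0,\infty)$) with the convex nonnegative function $\norm{\cdot}_2^2$. Fact~\ref{fact:convexshrink} reduces the bound to computing the fourth moment of a centered Gaussian $X \sim \Nor(0, \mu^{-1}\id)$. Expanding,
\[
\E[\norm{X}_2^4] \;=\; \sum_{i} \E[X_i^4] + \sum_{i \ne j} \E[X_i^2]\E[X_j^2] \;=\; 3d\mu^{-2} + d(d-1)\mu^{-2} \;=\; (d^2 + 2d)\mu^{-2},
\]
which is at most $3d^2 \mu^{-2}$ for every $d \ge 1$.

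There is no real obstacle; the only thing to be careful about is verifying that $\norm{x}_2^4$ is convex (not just coordinatewise convex) and that the Harg\'e inequality is being applied to the recentered variable $x - \E_\pi[x]$, which is exactly the form stated in Fact~\ref{fact:convexshrink}. Everything else is a direct Gaussian moment calculation, so the entire corollary is essentially one line per part once the right test functions are chosen.
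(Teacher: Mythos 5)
Your proposal matches the paper's argument exactly: the paper also obtains both bounds by applying Fact~\ref{fact:convexshrink} to the convex functions $h(x) = (\theta^\top x)^2$ and $h(x) = \norm{x}_2^4$ and then evaluating the corresponding moments of $\Nor(0,\mu^{-1}\id)$. Your Gaussian fourth-moment computation $(d^2+2d)\mu^{-2} \le 3d^2\mu^{-2}$ is correct, so there is nothing to fix.
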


\begin{proposition}
	\label{prop:min_perturb}
	Let $f: \R^d \rightarrow \R$ be $L$-smooth and convex with minimizer $x^*$, let $x \in \R^d$ with $\norm{x - x^*}_2 \le R$, and let $d\pi_\eta(y)$ be the density proportional to $\exp\left(-f(y) - \tfrac{1}{2\eta}\norm{y - x}_2^2\right)dy$. Suppose that $\eta \leq \min\left(\tfrac{1}{2L^2R^2},\tfrac{ R^2}{400d^2}\right)$. Then,
	\[\norm{\E_{\pi_\eta}[y] - x}_2 \le 2\eta LR. \]
\end{proposition}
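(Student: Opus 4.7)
The plan is to reduce the claim to a self-referential bound on $M \defeq \|\E_{\pi_\eta}[y] - x\|_2$, and then to close a linear inequality. The starting point is a Stein/stationarity identity: since $\pi_\eta \propto e^{-V}$ with $V(y) = f(y) + \tfrac{1}{2\eta}\|y-x\|_2^2$ is $1/\eta$-strongly convex, the density decays rapidly enough that $\int \nabla V(y)\,d\pi_\eta(y) = 0$ via integration by parts. Substituting $\nabla V = \nabla f + \tfrac{1}{\eta}(y-x)$ yields
\[ m - x = -\eta\,\E_{\pi_\eta}[\nabla f(y)], \qquad m \defeq \E_{\pi_\eta}[y], \]
so the task reduces to bounding $\|\E_{\pi_\eta}[\nabla f(y)]\|_2$.

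Next I would split the expected gradient as $\E_{\pi_\eta}[\nabla f(y)] = \nabla f(m) + \E_{\pi_\eta}[\nabla f(y) - \nabla f(m)]$. By $L$-smoothness and $\nabla f(x^*) = 0$, $\|\nabla f(m)\|_2 \le L\|m-x^*\|_2 \le L(M+R)$. For the fluctuation term, $L$-smoothness followed by Jensen gives $\|\E[\nabla f(y)-\nabla f(m)]\|_2 \le L\,\E[\|y-m\|_2] \le L\sqrt{\E\|y-m\|_2^2}$. The second moment is controlled by appealing to $1/\eta$-strong logconcavity of $\pi_\eta$: Fact~\ref{fact:convexshrink} applied to the convex function $h(z) = \|z\|_2^2$ shows this is at most the second central moment of the reference Gaussian $\Nor(m, \eta \id)$, namely $d\eta$. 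Combining yields the implicit inequality
\[ M \le \eta L\bigl(M + R + \sqrt{d\eta}\bigr). \]

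The final step closes this inequality by invoking the two hypotheses. Multiplying $\eta \le 1/(2L^2R^2)$ and $\eta \le R^2/(400d^2)$ gives $\eta^2 \le 1/(800 L^2 d^2)$, hence $\eta L \le 1/(\sqrt{800}\,d) \le 1/28$, so $1 - \eta L \ge 27/28$. The second hypothesis separately yields $\sqrt{d\eta} \le R/(20\sqrt{d}) \le R/20$. Rearranging and substituting,
\[ M \le \frac{\eta L(R + \sqrt{d\eta})}{1 - \eta L} \le \tfrac{28}{27}\cdot\tfrac{21}{20}\,\eta L R \le 2\eta L R, \]
with comfortable slack (the true constant is below $1.09$).

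The delicate point, which is the main obstacle, is extracting the sharp constant $2$: the inequality is self-referential so one must absorb the $\eta L\cdot M$ term on the right, and this requires $\eta L$ to be absolutely small, not merely small relative to $LR^2$. Neither hypothesis alone suffices for this --- it is their combination that forces $\eta L = O(1/d)$. An alternative route, alluded to in the paper and analogous to Proposition~\ref{prop:normalizationratio}, is to track $m(t) \defeq \E_{\pi_t}[y]$ along the homotopy $\pi_t \propto \exp(-tf(y) - \tfrac{1}{2\eta}\|y-x\|_2^2)$ for $t\in[0,1]$, compute $m'(t) = -\textup{Cov}_{\pi_t}(y, f(y))$, and bound the covariance using a Brascamp--Lieb-type variance bound; integrating the resulting differential inequality with $m(0)=x$ yields the same conclusion.
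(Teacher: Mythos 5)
Your proof is correct, and it takes a genuinely different route from the paper's. You start from the exact stationarity (Stein) identity $\E_{\pi_\eta}[\nabla V(y)] = 0$ for $V(y) = f(y) + \tfrac{1}{2\eta}\norm{y-x}_2^2$ (valid since $V$ is $\eta^{-1}$-strongly convex and $\nabla f$ grows at most linearly), which converts the quantity of interest into $\E_{\pi_\eta}[y] - x = -\eta\,\E_{\pi_\eta}[\nabla f(y)]$; you then split the expected gradient around the mean, control the fluctuation with only a \emph{second} central moment bound $\E_{\pi_\eta}\norm{y - \E_{\pi_\eta} y}_2^2 \le d\eta$ (available from Fact~\ref{fact:convexshrink}, or from part 1 of Corollary~\ref{corr:slcmomentbounds} summed over an orthonormal basis), and close the self-referential inequality $M \le \eta L(M + R + \sqrt{d\eta})$ by observing that the product of the two hypotheses gives $\eta L \le \tfrac{1}{\sqrt{800}\,d}$ while the second alone gives $\sqrt{d\eta} \le \tfrac{R}{20}$; all of these steps check out, and the arithmetic indeed leaves slack (about $1.09\,\eta LR$). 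The paper instead runs a homotopy $\pi^\alpha$ from the Gaussian with linearized $f$ (whose mean is $x - \eta\nabla f(x)$, giving $D(0)\le \eta LR$) to $\pi_\eta$, differentiates the mean in $\alpha$, bounds the derivative via Cauchy--Schwarz using both the second and \emph{fourth} central moment bounds of Corollary~\ref{corr:slcmomentbounds}, and finishes with a mean-value/contradiction argument on the resulting differential inequality --- the same continuous-perturbation machinery as Proposition~\ref{prop:normalizationratio}, which the authors highlight as a technique of independent interest and which controls $\E_{\pi^\alpha}[y]$ uniformly along the interpolation path. Your argument is shorter, more elementary (no fourth moments, no limiting/contradiction step), and gives a slightly sharper constant, so it is a legitimate and arguably preferable proof of the stated bound; your closing remark about the homotopy route correctly identifies the paper's actual strategy as the alternative.
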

\begin{proof}
	Define a family of distributions $\pi^\alpha$ for $\alpha \in [0, 1]$, with
	\[d\pi^\alpha(y) \propto \exp\left(-\alpha\left(f(y) - f(x) - \inprod{\nabla f(x)}{y - x}\right) - f(x) - \inprod{\nabla f(x)}{y - x} - \frac{1}{2\eta}\norm{y - x}_2^2\right)dy. \]
	In particular, $\pi^1 = \pi_\eta$, and $\pi^0$ is a Gaussian with mean $x - \eta\nabla f(x)$. We define $\bya \defeq \E_{\pi_\alpha}[y]$, and
	\[\sya \defeq \argmin_y \left\{\alpha\left(f(y) - f(x) - \inprod{\nabla f(x)}{y - x}\right) + f(x) + \inprod{\nabla f(x)}{y - x} + \frac{1}{2\eta}\norm{y - x}_2^2\right\}.\]
	Define the function $D(\alpha) \defeq \norm{\bya - x}_2$, such that we wish to bound $D(1)$. First, by smoothness
	\[D(0) = \norm{\E_{\pi_0}[y] - x}_2 = \norm{\eta\nabla f(x)}_2 \le \eta LR.\]
	Next, we observe
	\[\frac{d}{d\alpha}D(\alpha) = \inprod{\frac{\bya - x}{\norm{\bya - x}_2}}{\frac{d\bya}{d\alpha}} \le \norm{\frac{d\bya}{d\alpha}}_2. \]
	In order to bound $\norm{\tfrac{d\bya}{d\alpha}}_2$, fix a unit vector $\theta$. We have
	\begin{equation}
	\label{eq:bigderivbound}
	\begin{aligned}\inprod{\frac{d\bya}{d\alpha}}{\theta} &= \frac{d}{d\alpha}\inprod{\int (y - x) d\pi^\alpha(y)}{\theta} \\
	&= \int \inprod{y - x}{\theta} (f(x) + \inprod{\nabla f(x)}{y - x} - f(y)) d\pi^\alpha(y) \\
	&\le \sqrt{\int (\inprod{y - x}{\theta})^2 d\pi^\alpha(y)}\sqrt{\int (f(x) + \inprod{\nabla f(x)}{y - x} - f(y))^2 d\pi^\alpha(y)}\\
	&\le \sqrt{\int (\inprod{y - x}{\theta})^2 d\pi^\alpha(y)}\sqrt{\int \frac{L^2}{4}\norm{y - x}_2^4 d\pi^\alpha(y)}.
	\end{aligned}
	\end{equation}
	The third line was Cauchy-Schwarz and the last line used smoothness and convexity, i.e.
	\begin{align*}
	-\frac{L}{2}\norm{y - x}_2^2 \le f(x) + \inprod{\nabla f(x)}{y - x} - f(y) \le 0.
	\end{align*}
	We now bound these terms. First,
	\begin{equation}
	\label{eq:firsttermcs}
	\begin{aligned}
	\int (\inprod{y - x}{\theta})^2d\pi^\alpha(y) &\le 2\int(\inprod{y -\bya}{\theta})^2d\pi^\alpha(y) +    2\int(\inprod{\bya - x}{\theta})^2d\pi^\alpha(y) \\
	&\le 2\eta + 2\norm{\bya - x}_2^2 = 2\eta + 2D(\alpha)^2.
	\end{aligned}
	\end{equation}
	Here, we applied the first part of  Corollary~\ref{corr:slcmomentbounds}, as $\pi^\alpha$ is $\eta^{-1}$-strongly logconcave, and the definition of $D(\alpha)$. Next, using for any $a, b \in \R^d$, $\norm{a + b}_2^4 \le (\norm{a}_2 + \norm{b}_2)^4 \le 16\norm{a}_2^4 + 16\norm{b}_2^4$, we have
	\begin{equation}
	\label{eq:secondtermcs}
	\begin{aligned}
	\int \frac{L^2}{4}\norm{y - x}_2^4 d\pi^\alpha(y) &\le \int 4L^2\norm{y - \bya}_2^4 d\pi^\alpha(y) + \int 4L^2\norm{x - \bya}_2^4 d\pi^\alpha(y) \\
	&\le 12L^2 d^2 \eta^2 + 4L^2 D(\alpha)^4.
	\end{aligned}
	\end{equation}
	Here, we used the second part of Corollary~\ref{corr:slcmomentbounds}. Maximizing \eqref{eq:bigderivbound} over $\theta$, and applying \eqref{eq:firsttermcs}, \eqref{eq:secondtermcs},
	\begin{align}
	\label{eq:deri_D_bound}
	\frac{d}{d\alpha}D(\alpha) \le \norm{\frac{d\bya}{d\alpha}}_2 &\le \sqrt{8L^2(\eta + D(\alpha)^2)(3d^2\eta^2 + D(\alpha)^4)} \notag \\
	& \leq 4L(\sqrt{\eta} +D(\alpha))\cdot \max(2\eta d, D(\alpha)^2).
	\end{align}
	Assume for contradiction that $D(1) > 2\eta LR$, violating the conclusion of the proposition. By continuity of $D$, there must have been some $\bal \in (0, 1)$ where $D(\bal) = 2\eta LR$, and for all $0 \leq \alpha < \bal$, $D(\alpha) < 2\eta L R$. By the mean value theorem, there then exists $0 \le \hal \le \bal$ such that 
	\[
	\frac{dD(\hal)}{d\alpha} = \frac{D(\bal) - D(0)}{\bal} > \eta LR.
	\]
	On the other hand, by our assumption that $2\eta L^2R^2\leq 1$, for any $d \ge 1$ it follows that
	\[2\eta d \geq 4\eta^2 L^2R^2 > D(\hal)^2,\; \sqrt{2\eta} \geq 2\eta L R > D(\hal).\]
	Then, plugging these bounds into \eqref{eq:deri_D_bound} and using $\sqrt{\eta} + D(\hal) \le \tfrac{5}{2}\sqrt{\eta}$ as $\sqrt{2} \le \tfrac{3}{2}$,
   \[
   \frac{d}{d\alpha}D(\hal) \le 4L \cdot \frac 5 2\sqrt{\eta} \cdot 2\eta d = 20\sqrt{\eta}\frac{d}{R} \cdot \eta LR \leq  \eta LR.
   \]
   We used $\eta \leq \tfrac{R^2}{400d^2}$ in the last inequality. This is a contradiction, implying $D(1) \le 2\eta LR$.
\end{proof}

\section{Approximation tolerance}
\label{app:approximate}

We briefly discuss the tolerance of our algorithm to approximation error in two places: computation of the point $x^*$, and implementation of the restricted Gaussian oracle for the composite function $g$. 

\paragraph{Inexact minimization.} Standard methods such as the FISTA method of \cite{BeckT09} imply that under access to gradient queries to $f$ and a proximal oracle for $g$, we can find the minimizer to inverse polynomial accuracy in problem parameters (measured by Euclidean distance to the true minimizer) with negligible increase in runtime. By expanding the radii $R_\delta$ in the definition of the sets $\Omega_\delta$ by a constant factor, this accomodates tolerance to inexact minimization and only affects all bounds throughout the paper by constants.

\paragraph{Inexact oracle implementation.} Similarly, our algorithm is tolerant to total variation error inverse polynomial in problem parameters for the restricted Gaussian oracle for $g$. To see this, we pessimistically handled the case where the sampler $\yor$ for a quadratic restriction of $f$ resulted in total variation error in the proof of Proposition~\ref{prop:sjdguarantee}, assuming that the error was incurred in every iteration. By accounting for similar amounts of error in calls to $\oracle$, the bounds in our algorithm are only affected by constants. 	%

\section{Additional details for Section~\ref{sec:experiments}}
\label{app:experiment_details}

We provide additional details for our experiments here.

\textbf{Correctness verification.} 
We verify the correctness of our algorithm against the output of na\"ive rejection sampling (accepting samples in $O$). The rejection sampling algorithm generates samples from the unrestricted Gaussian distribution and rejects the samples falling outside the chosen orthant, so the resulting distribution follows the target distribution exactly. Due to the curse of dimensionality, it is only possible to do rejection sampling in low dimensions; we choose dimension $d = 10$. The Gaussian distribution is randomly generated with a dense covariance matrix and mean $m$, with $m_i \sim \text{Unif}[-0.5, 0.5]$ for each coordinate $i$. We plot the 2D histograms of $N = 3000$ samples projected on 5 pairs of random directions in Figure~\ref{fig:hist}. In running our algorithm for this experiment, we choose $\eta = 0.01 $ and $K = 500$ in \sjd. 

\begin{figure}[h]
	\centering
	\includegraphics[width=\linewidth]{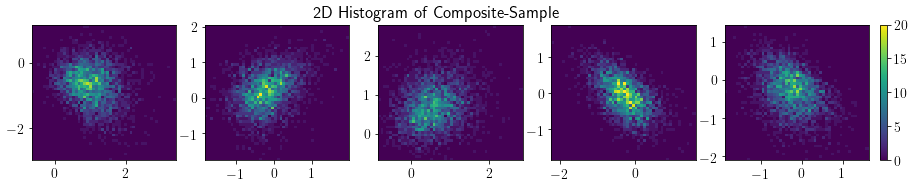}
	\includegraphics[width=\linewidth]{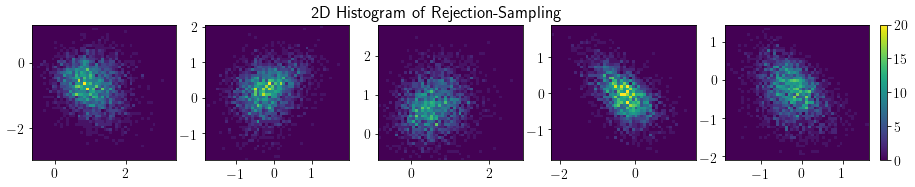}
	\caption{2D histograms of $\csg$ and rejection sampling under 5 pairs of random projections for $d = 10$.}
	\label{fig:hist}
\end{figure}

\textbf{Additional autocorrelation plots.} Figure~\ref{fig:auto_apdx} shows autocorrelation plots of the trajectories of our algorithm and hit-and-run with iteration counts $K = 20000$  and $K = 500000$. The parameter choices used in this experiment are stated in Section~\ref{sec:experiments}. 
\begin{figure}[h]
	\begin{subfigure}{.5\textwidth}
		\centering
		\includegraphics[scale=0.5 ]{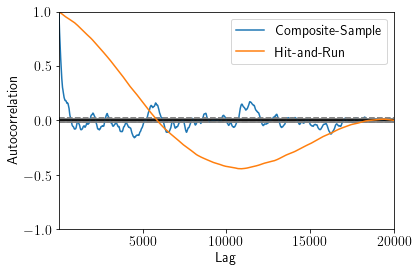}

	\end{subfigure}
	\begin{subfigure}{.5\textwidth}
		\includegraphics[scale=0.5]{autocorrelation2.png}
		
	\end{subfigure}
	\caption{Autocorrelation plot of composite-sample and hit-and-run for $d = 500$.}
	\label{fig:auto_apdx}

\end{figure}

 	\end{appendix}
	
\end{document}